\providecommand{\tabularnewline}{\\}
\providecommand{\algorithmname}{Algorithm}
\theoremstyle{definition}
\newtheorem{defn}{\protect\definitionname}
\theoremstyle{plain}
\newtheorem{assumption}{\protect\assumptionname}
\theoremstyle{plain}
\newtheorem{thm}{\protect\theoremname}
\theoremstyle{plain}
\newtheorem{cor}{\protect\corollaryname}
\theoremstyle{remark}
\newtheorem{rem}{\protect\remarkname}
\theoremstyle{plain}
\newtheorem{lem}{\protect\lemmaname}
\date{}
\providecommand{\assumptionname}{Assumption}
\providecommand{\corollaryname}{Corollary}
\providecommand{\definitionname}{Definition}
\providecommand{\lemmaname}{Lemma}
\providecommand{\remarkname}{Remark}
\providecommand{\theoremname}{Theorem}
\begin{document}
\onehalfspacing
\title{\textbf{Adaptive Learning of the Latent Space of Wasserstein Generative
Adversarial Networks}}
\author{Yixuan Qiu$^{*}$\\
{\normalsize School of Statistics and Management, Shanghai University
of Finance and Economics}\\
{\normalsize Shanghai 200433, P.R. China, }{\normalsize\texttt{qiuyixuan@sufe.edu.cn}}~\\
{\normalsize\texttt{}}~\\
Qingyi Gao$^{*}$\\
{\normalsize Department of Statistics, Purdue University}\\
{\normalsize West Lafayette, IN 47907, U.S.A., }{\normalsize\texttt{gqystat@gmail.com}}~\\
\\
Xiao Wang\\
{\normalsize Department of Statistics, Purdue University}\\
{\normalsize West Lafayette, IN 47907, U.S.A., }{\normalsize\texttt{wangxiao@purdue.edu}}}

\maketitle
\def\thefootnote{*}\footnotetext{These authors contributed equally to this work.}
\def\thefootnote{\arabic{footnote}}
\begin{abstract}
Generative models based on latent variables, such as generative adversarial
networks (GANs) and variational auto-encoders (VAEs), have gained
lots of interests due to their impressive performance in many fields.
However, many data such as natural images usually do not populate
the ambient Euclidean space but instead reside in a lower-dimensional
manifold. Thus an inappropriate choice of the latent dimension fails
to uncover the structure of the data, possibly resulting in mismatch
of latent representations and poor generative qualities. Towards addressing
these problems, we propose a novel framework called the latent Wasserstein
GAN (LWGAN) that fuses the Wasserstein auto-encoder and the Wasserstein
GAN so that the intrinsic dimension of the data manifold can be adaptively
learned by a modified informative latent distribution. We prove that
there exist an encoder network and a generator network in such a way
that the intrinsic dimension of the learned encoding distribution
is equal to the dimension of the data manifold. We theoretically establish
that our estimated intrinsic dimension is a consistent estimate of
the true dimension of the data manifold. Meanwhile, we provide an
upper bound on the generalization error of LWGAN, implying that we
force the synthetic data distribution to be similar to the real data
distribution from a population perspective. Comprehensive empirical
experiments verify our framework and show that LWGAN is able to identify
the correct intrinsic dimension under several scenarios, and simultaneously
generate high-quality synthetic data by sampling from the learned
latent distribution.
\end{abstract}
\noindent \textit{Keywords}: consistency, generalization error, generative
adversarial networks, latent variable models, manifold learning, minimax
optimization, Wasserstein distance

\setstretch{1.8}
\addtolength{\abovedisplayskip}{-3.5pt}
\addtolength{\belowdisplayskip}{-3.5pt}

\section{Introduction}

Unsupervised generative models receive great attentions in the machine
learning community nowadays due to their impressive performance in
many fields \citep{kingma2014auto,goodfellow2014generative,li2015generative,dinh2016density,gao2020flow,qiu2021almond}.
Given a random sample from a $p$-dimensional random vector $X\in\mathcal{X}\subset\mathbb{R}^{p}$
with an unknown distribution $P_{X}$, the goal is to train a generative
model that can produce synthetic data that look similar to the observed
samples from $X$. While there are several ways of quantifying the
similarity, the most common approach is to directly employ some of
the known divergence measures, such as the Kullback--Leibler (KL)
divergence and the Wasserstein distance, between the real data distribution
and the synthetic data distribution.

There are two influential frameworks for generative models: generative
adversarial networks (GANs, \citealp{goodfellow2014generative}) and
variational auto-encoders (VAEs, \citealp{kingma2014auto}). They
are latent variable models through a latent variable $Z\in\mathcal{Z}\subset\mathbb{R}^{d}$
drawn from a simple and accessible prior distribution $P_{Z}$, such
as the standard multivariate normal distribution $P_{Z}=N(0,I_{d})$.
Then the synthetic data are generated by either a deterministic transformation
$G:\mathcal{Z}\rightarrow\mathcal{X}$ or a conditional distribution
$p(x|z)$ of $X$ given $Z$.

\paragraph{GAN and WGAN.}

Training GANs is like a two-player game, where two networks, a generator
and a discriminator, are simultaneously trained to allow the powerful
discriminator to distinguish between real data and generated samples.
As a result, the generator is trying to maximize its probability of
having its outputs recognized as real. This leads to the following
minimax objective function,
\begin{equation}
\inf_{G\in\mathcal{G}}\sup_{f\in\mathcal{F}}\mathbb{E}_{X}\left[\log(f(X))\right]+\mathbb{E}_{Z}\left[\log\left(1-f(G(Z))\right)\right],\label{eq:GANminmax}
\end{equation}
where $f\in\mathcal{F}$ is a discriminator and $G\in\mathcal{G}$
is a generator. Optimizing (\ref{eq:GANminmax}) is equivalent to
minimizing the Jensen--Shannon divergence between the generation
distribution and real data distribution. GANs can generate visually
realistic images, but suffer from unstable training and mode collapsing.

The Wasserstein GAN (WGAN, \citealp{arjovsky2017wasserstein}) is
an extension to the vanilla GAN that improves the stability of training
by leveraging the 1-Wasserstein distance between two probability measures.
Denote by $P_{G(Z)}$ the generation distribution measure, and then
the 1-Wasserstein distance between $P_{X}$ and $P_{G(Z)}$ is defined
as 
\begin{equation}
W_{1}(P_{X},P_{G(Z)})=\inf_{\pi\in\Pi(P_{X},P_{Z})}\mathbb{E}_{(X,Z)\sim\pi}\left\Vert X-G(Z)\right\Vert ,\label{eq:wgan_primal}
\end{equation}
where $\Vert\cdot\Vert$ represents the $\ell_{2}$-norm and $\Pi(P_{X},P_{Z})$
is the set of all joint distributions of $(X,Z)$ with marginal measures
$P_{X}$ and $P_{Z}$, respectively. It is hard to find the optimal
coupling $\pi$ through this constrained primal problem. However,
thanks to the Kantorovich--Rubinstein duality \citep{villani2008optimal},
WGAN can learn the generator $G$ by minimizing a dual form of (\ref{eq:wgan_primal}),
\begin{equation}
W_{1}(P_{X},P_{G(Z)})=\sup_{f\in\mathcal{F}}\left\{ \mathbb{E}_{X}f(X)-\mathbb{E}_{Z}f(G(Z))\right\} ,\label{eq:wgan_dual}
\end{equation}
where $f$ is called the critic function, and $\mathcal{F}$ is the
set of all bounded 1-Lipschitz functions. Weight clipping \citep{arjovsky2017wasserstein}
and gradient penalty \citep{gulrajani2017improved} are two common
strategies to maintain the Lipschitz continuity of $f$. Weight clipping
utilizes a tuning parameter $c$ to clamp each weight parameter to
a fixed interval $[-c,c]$ after each gradient update, but this method
is very sensitive to the choice of the parameter $c$. Instead, gradient
penalty adds a regularization term, $\mathbb{E}_{\hat{X}}\left\{ (\Vert\nabla_{x}f(\hat{X})\Vert-1)^{2}\right\} $,
to the loss function to enforce the 1-Lipschitz condition, where $\hat{X}$
is sampled uniformly along the segment between pairs of points sampled
from $P_{X}$ and $P_{G(Z)}$. This is motivated by the fact that
the optimal $f$ has unit gradient norm on the segment between optimally
coupled points from $P_{X}$ and $P_{G(Z)}$.

\paragraph{VAE and WAE.}

A VAE defines a ``probabilistic decoder'' $p_{\theta}(x|z)$ with
the unknown parameter $\theta$. Then the marginal distribution of
$X$ is $p_{\theta}(x)=\int p_{\theta}(x|z)p_{Z}(z)\mathrm{d}z$,
where $p_{Z}(\cdot)$ is the density of $P_{Z}$. Due to the intractability
of this integration, the maximum likelihood estimation is prohibited.
Instead, a ``probabilistic encoder'' $q_{\phi}(z|x)$ with the unknown
parameter $\phi$ is defined to approximate the posterior distribution
$p_{\theta}(z|x)=p_{\theta}(x|z)p_{Z}(z)/p_{\theta}(x)$. The objective
of VAE is to maximize a lower bound of the log-likelihood $\log p_{\theta}(x)$,
which is called the evidence lower bound (ELBO):
\[
\mathrm{ELBO}=\mathbb{E}_{q_{\phi}(z|x)}\left[\log p_{\theta}(x|z)\right]-\mathrm{KL}\left(q_{\phi}(z|x)\Vert p_{Z}(z)\right),
\]
where the first term can be efficiently estimated by the Monte Carlo
sampling, and the second term has a closed-form expression when $q_{\phi}$
is Gaussian. VAEs have strong theoretical justifications and typically
can cover all modes of the data distribution. However, they often
produce blurry images due to the normal approximation of the true
posterior.

The Wasserstein auto-encoder (WAE, \citealp{tolstikhin2018wasserstein})
makes two modifications to VAE. It uses a deterministic encoder $Q:\mathcal{X}\rightarrow\mathcal{Z}$
to approximate the conditional distribution of $Z$ given $X$, and
a deterministic generator $G:\mathcal{Z}\rightarrow\mathcal{X}$ to
approximate the conditional distribution of $X$ given $Z$. In addition,
WAE adopts the 1-Wasserstein distance between the real data distribution
$P_{X}$ and the generation distribution $P_{G(Z)}$, rather than
the KL divergence used in VAEs, to train the model. Let $P_{Q(X)}$
denote the aggregated posterior distribution measure, and then WAE
minimizes the following reconstruction error with respect to the generator
$G$,
\[
\inf_{Q\in\mathcal{Q}}\mathbb{E}_{X}\left\Vert X-G(Q(X))\right\Vert +\lambda\mathcal{D}(P_{Q(X)},P_{Z}),
\]
where $\mathcal{D}$ is any divergence measure between two distributions
$P_{Q(X)}$ and $P_{Z}$, and $\lambda>0$ is a regularization coefficient.
The regularization term forces the aggregated posterior $P_{Q(X)}$
to match the prior distribution $P_{Z}$.

There are several limitations for the generative models above. It
is a requirement for current approaches of training generative models
to pre-specify the dimension of the latent distribution $P_{Z}$ and
treat it as fixed during the training process. For example, the latent
dimensions for VAEs and GANs are pre-specified by users. Another type
of generative model called normalizing flows \citep{dinh2016density}
keeps the latent dimension the same as the dimension of the data.
This is because normalizing flows approximate the data distribution
by a deterministic \emph{invertible} mapping $G$ such that $X=G(Z)$.
Since many observed data such as natural images lie on a low-dimensional
manifold embedded in a higher-dimensional Euclidean space, an inappropriate
choice of the latent dimension could cause a wrong latent representation
that does not populate the full ambient space \citep{rubenstein2018latent}.
Hence, the wrongly specified latent dimension fails to uncover the
structure of the data, and the corresponding generative models may
suffer from mode collapsing, under-fitting, mismatch of representation
learning, and poor generation qualities. Furthermore, although there
are many interesting works taking advantages of both VAEs and GANs
\citep{larsen2016autoencoding,dumoulin2017adversarially,donahue2016adversarial,chenyao21},
it remains unclear what principles are underlying the framework combining
the best of WAEs and WGANs when the latent dimension is unknown.

To handle the aforementioned drawbacks, we propose a novel framework,
called the latent Wasserstein GAN (LWGAN), to identify the intrinsic
dimension of a data distribution that lies on a topological manifold,
and then improve the quality of generative modeling as well as representation
learning. We have performed two major modifications to the current
GAN and VAE frameworks. First, we change the latent distribution from
$N(0,I_{d})$ to a generalized normal distribution $N(0,A)$ with
$A$ being a diagonal matrix with entries taking values 0 or 1. Therefore,
the rank of $A$ allows us to characterize the intrinsic dimension
of the latent space. This modification has been adopted for the flow
model to reduce the dimension of the latent space \citep{zhang2021flow},
but it has not been applied to GAN or VAE models. Second, we combine
WGAN and WAE in a principled way motivated by the primal-dual iterative
algorithm. We utilize a deterministic encoder $Q:\mathcal{X}\rightarrow\mathcal{Z}$
to learn an informative prior distribution $P_{Z}\sim N(0,A)$. On
the other hand, a generator $G:\mathcal{Z}\rightarrow\mathcal{X}$
is combined with $Q$ to generate images that look like the real ones
using the latent code $Z$ from $P_{Z}$. We theoretically guarantee
the existence of such a generator $G$ and an encoder $Q$. To get
rid of possible invalid divergences, we focus on the 1-Wasserstein
distance to measure the similarities between two distributions, which
applies to any pair of distributions as long as they can be sufficiently
sampled. Note that the KL divergence is not well-defined when the
supports of two probability measures do not overlap, which is very
common for high-dimensional data.

The rest of the paper is organized as follows. Section \ref{sec:mismatch}
investigates the phenomenon of dimension mismatch between the latent
distribution and data distribution. Section \ref{sec:lwgan} presents
the new LWGAN framework that provides a feasible way to estimate the
encoder, generator, and intrinsic dimension. Theoretical analyses
are given in Section \ref{sec:theory}, including results on generalization
error bounds, estimation consistency, and intrinsic dimension consistency.
Section \ref{sec:experiment} demonstrates extensive numerical experiments
under different settings to verify that the LWGAN is able to detect
the intrinsic dimensions for both simulated examples and real image
data. Finally, Section \ref{sec:conclusion} concludes this article.
Proofs of theorems and additional numerical results are provided in
the supplementary materials.

\section{Issues of Latent Dimension Mismatch}

\label{sec:mismatch}

Throughout this article we use $\mathcal{X}\subset\mathbb{R}^{p}$
and $\mathcal{Z}\subset\mathbb{R}^{d}$ to denote the spaces of observed
data points and latent variables, respectively. To precisely describe
the structure of high-dimensional data with a low latent dimension,
we first make the following definition of a topological manifold.
\begin{defn}[Topological manifold, \citealp{lee2013introduction}]
 \label{def:manifold}Suppose that $\mathcal{M}$ is a topological
space. $\mathcal{M}$ is a topological manifold of dimension $r$
if $\mathcal{M}$ is a second-countable Hausdorff space, and for each
$x\in\mathcal{M}$, there exist an open subset $U\subset\mathcal{M}$
containing $x$, an open subset $V\subset\mathbb{R}^{r}$, and a homeomorphism
$\varphi$ between $U$ and $V$. A homeomorphism $\varphi:U\rightarrow V$
is a continuous bijective mapping with a continuous inverse $\varphi^{-1}$.
\end{defn}
In this article, all manifolds are referred to as topological manifolds
unless otherwise noted. Typically, $\mathcal{M}$ is a subset of some
Euclidean space $\mathbb{R}^{p}$, in which case the Hausdorff and
second-countability properties in Definition \ref{def:manifold} are
automatically inherited from the Euclidean topology. To exclude overly
complicated cases, we moderately strengthen the qualification of the
homeomorphism $\varphi$ in Definition \ref{def:manifold} to make
it a global one:
\begin{assumption}
\label{assu:homeomorphism}$\mathcal{X}$ is an $r$-dimensional manifold,
and there exists a homeomorphism $\varphi$ between $\mathcal{X}$
and $\mathbb{R}^{r}$.
\end{assumption}
In what follows, the symbol $\varphi$ is used to denote one homeomorphism
between $\mathcal{X}$ and $\mathbb{R}^{r}$. Then we can define a
continuous distribution supported on the manifold $\mathcal{X}$ that
satisfies Assumption \ref{assu:homeomorphism}.
\begin{defn}
\label{def:distr_manifold}A random vector $X\in\mathbb{R}^{p}$ is
said to have a continuous distribution $P_{X}$ supported on $\mathcal{X}$,
if its image $\varphi(X)$ follows a continuous distribution on $\mathbb{R}^{r}$.
\end{defn}
Let $X\in\mathcal{X}\subset\mathbb{R}^{p}$ be the observed data with
a continuous distribution $P_{X}$ supported on $\mathcal{X}$, where
$\mathcal{X}$ satisfies Assumption \ref{assu:homeomorphism}. We
define the intrinsic dimension of the data distribution $P_{X}$ as
the dimension of the manifold $\mathcal{X}$, denoted by $\texttt{InDim}(P_{X})=r$,
and its ambient dimension as the dimension of the enclosing Euclidean
space, denoted by $\texttt{AmDim}(P_{X})=p$. By Theorem 1.2 of \citet{lee2013introduction},
$\texttt{InDim}(P_{X})$ must be unique, and it cannot be larger than
$\texttt{AmDim}(P_{X})$.

In most existing deep generative models, the latent variable $Z$
is selected as a $d$-dimensional standard normal distribution $N(0,I_{d})$,
so $\texttt{InDim}(P_{Z})=\texttt{AmDim}(P_{Z})=d$. The dimension
$d$ is typically predetermined to be a number that is smaller than
$p$. In GAN-based models, if the generator $G$ is a continuous function,
then the synthetic sample $G(Z)$ will be supported on a manifold
of dimension at most $\texttt{InDim}(P_{Z})$. When $\texttt{InDim}(P_{Z})<\texttt{InDim}(P_{X})$,
forcing $P_{G(Z)}$ to be close to $P_{X}$ with unmatched intrinsic
dimensions is a challenging task. On the other hand, in auto-encoder-based
models, similar phenomenon of dimension mismatch occurs for the encoded
distribution $P_{Q(X)}$. For example, it is difficult to enforce
$P_{Q(X)}$ to be close to $P_{Z}$ if $\texttt{InDim}(P_{X})<\texttt{InDim}(P_{Z})$,
as filling a plane with a one-dimensional curve is hard.

To highlight this phenomenon and to motivate our proposed model, we
first employ a toy example to provide intuitions for the effects and
consequences of different intrinsic dimensions of the model and data
distributions. Consider a 3D S-curve dataset as shown in Figure \ref{fig:scurve}(a),
where each data point $X=(X_{1},X_{2},X_{3})$ is generated by
\[
X_{1}=\sin(3\pi(U-0.5)),\quad X_{2}=2V,\quad X_{3}=\text{sign}(3\pi(U-0.5))\cos(3\pi(U-0.5)),
\]
for $U\sim Unif(0,1)$ and $V\sim N(0,1)$. This example results in
$\texttt{AmDim}(P_{X})=3$ and $\texttt{InDim}(P_{X})=2$. We first
choose the latent distribution $P_{Z}$ to be a one-dimensional normal
distribution $N(0,1)$, and then the generated sample from WGAN is
plotted in Figure \ref{fig:scurve}(b). To minimize the 1-Wasserstein
distance between the real distribution $P_{X}$ and the generation
distribution $P_{G(Z)}$, WGAN learns an outer contour of the S-curve,
but it cannot fill points on the surface. Instead, if we choose a
three-dimensional standard normal $N(0,I_{3})$ as the latent distribution,
then WAE is forced to reconstruct the images well, but at the same
time it tries to fill the three-dimensional latent space evenly by
a distribution supported on a two-dimensional manifold. The only way
to do this is by curling the manifold up in the latent space as shown
in Figure \ref{fig:scurve}(d). This disparity between $P_{Z}$ and
$P_{Q(X)}$ in the latent space induces a poor generation of $P_{G(Z)}$
in Figure \ref{fig:scurve}(c).

\begin{figure}[h]
\begin{centering}
\subfloat[S-curve data]{\begin{centering}
\includegraphics[width=0.22\textwidth]{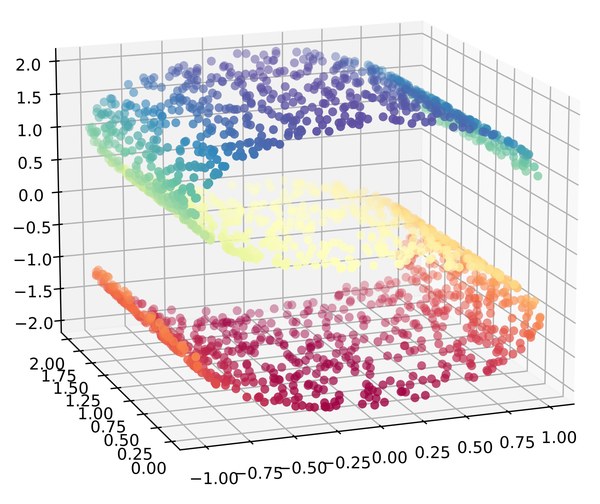}
\par\end{centering}
}\subfloat[WGAN: Generation]{\begin{centering}
\includegraphics[width=0.22\textwidth]{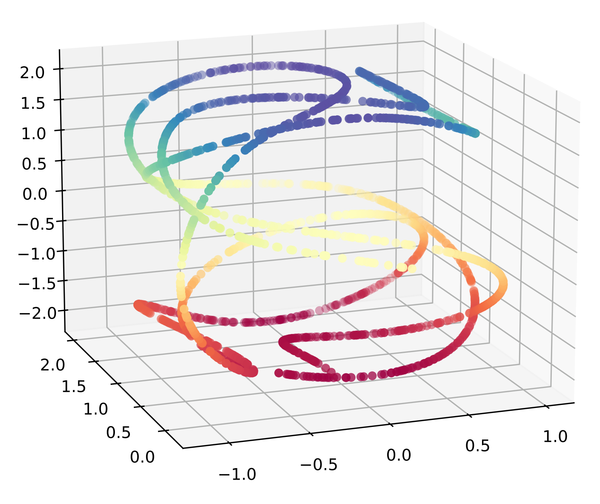}
\par\end{centering}
}\subfloat[WAE: Generation]{\begin{centering}
\includegraphics[width=0.22\textwidth]{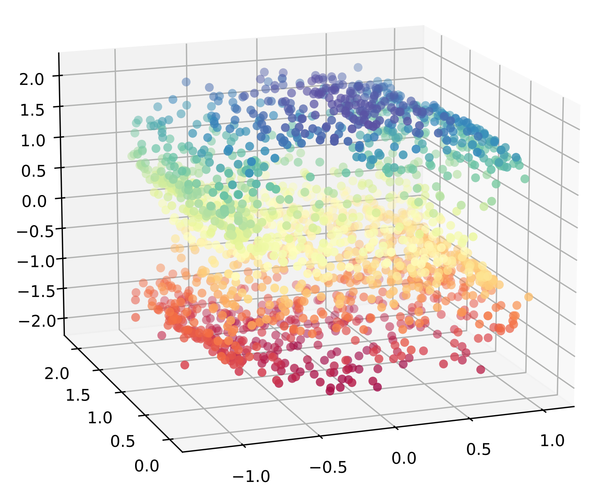}
\par\end{centering}
}\subfloat[WAE: Latent space]{\begin{centering}
\includegraphics[width=0.22\textwidth]{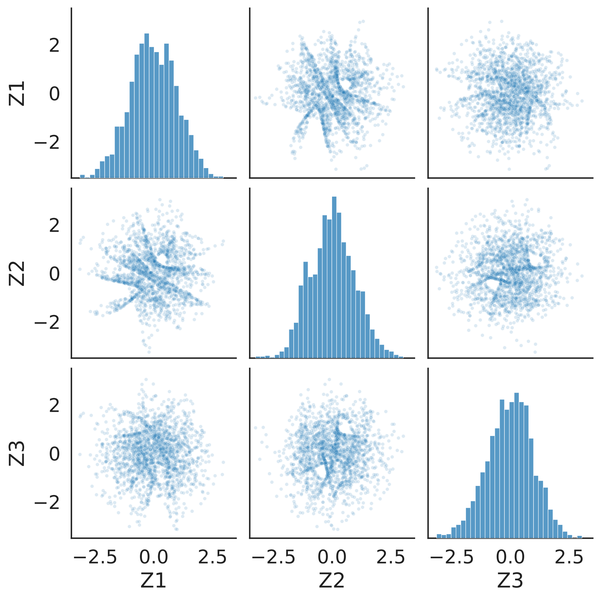}
\par\end{centering}
}
\par\end{centering}
\caption{\protect\label{fig:scurve}Illustrations of data generation with wrong
latent dimensions in WGAN and WAE.}
\end{figure}

\section{The Latent Wasserstein GAN}

\label{sec:lwgan}

A natural solution to the mismatch problem described in Section \ref{sec:mismatch}
is to select a latent distribution $P_{Z}$ whose intrinsic dimension
is the same as that of the data distribution $P_{X}$. However, $\texttt{InDim}(P_{X})$
is typically unknown, so one option is to learn it from the data.
When both the continuous generator $G$ and the continuous encoder
$Q$ are combined in an auto-encoder generative model, $P_{G(Z)}=P_{X}$
and $P_{Q(X)}=P_{Z}$ cannot be satisfied simultaneously unless $\texttt{InDim}(P_{X})=\texttt{InDim}(P_{Z})$
according to our previous discussion. This motivates us to search
for an encoder $Q$ and a corresponding generator $G$, such that
$Q(X)$ reflects the latent space supported on an $r$-dimensional
manifold, and generated samples using the latent variables are of
high quality. To be concrete, we need an auto-encoder generative model
that satisfies the following three goals at the same time: (a) the
latent distribution $P_{Z}$ is supported on an $r$-dimensional manifold;
(b) the distribution of $G(Z)$ is similar to $P_{X}$; (c) the difference
between $X$ and its reconstruction $G(Q(X))$ is small.

\subsection{Existence of optimal encoder-generator pairs}

Unlike conventional generative models that use a fixed standard normal
distribution as the latent distribution, we consider a latent distribution
whose intrinsic dimension could be less than $d$, \emph{i.e.}, the
latent variable $Z\in\mathcal{Z}\subset\mathbb{R}^{d}$ can have a
distribution supported on some manifold $\mathcal{Z}$. This idea
is realized by the generalized definition of the normal distribution
\citep{zhang2021flow}. In particular, let $A_{s}=\mathbf{diag}(1,\ldots,1,0,\ldots,0)$
be a diagonal matrix whose first $s$ diagonal elements are one and
whose remaining $(d-s)$ diagonal elements are zero, and $Z_{0}$
be a random vector following standard multivariate normal distribution
$N(0,I_{d})$. Then clearly, the random vector $Z=A_{s}Z_{0}$ is
supported on an $s$-dimensional manifold $\mathcal{Z}$, and its
distribution $P_{Z}\equiv P_{A_{s}Z_{0}}$ has dimensions $\texttt{InDim}(P_{Z})=s$
and $\texttt{AmDim}(P_{Z})=d$. For convenience, we use the classic
notation $N(0,A_{s})$ to denote this distribution, although $A_{s}$
is a degenerate covariance matrix.

Choosing $P_{Z}=N(0,A_{s})$, where $s$ is a parameter to estimate,
enables us to solve the dimension mismatch problem in Section \ref{sec:mismatch}.
If $s=r$, then the latent variable $Z$ can be mapped to $G(Z)$
supported on an $r$-dimensional manifold, and meanwhile, $P_{Z}$
and the encoded distribution $P_{Q(X)}$ can have matched intrinsic
dimensions. Formally, Theorem \ref{thm:existence_qg} states that
for any data distribution $P_{X}$ defined by Definition \ref{def:distr_manifold},
there always exist a \emph{continuous} encoder $Q^{\diamond}$ that
guarantees meaningful encodings on an $r$-dimensional manifold, and
a \emph{continuous} generator $G^{\diamond}$ that generates samples
with the same distribution as $P_{X}$, using those latent points
encoded by $Q^{\diamond}$.
\begin{thm}
\label{thm:existence_qg}If $d\ge r$, then there exist two continuous
mappings $Q^{\diamond}:\mathcal{X}\rightarrow\mathcal{Z}$ and $G^{\diamond}:\mathcal{Z}\rightarrow\mathcal{X}$
such that $Q^{\diamond}(X)\sim N(0,A_{r})$ and $X=G^{\diamond}(Q^{\diamond}(X))$.
\end{thm}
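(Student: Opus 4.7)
The plan is to reduce the whole problem to constructing a transport map between two distributions on $\mathbb{R}^r$, after which the homeomorphism $\varphi$ from Assumption \ref{assu:homeomorphism} lifts everything back to $\mathcal{X}$. Set $Y := \varphi(X)$; by Definition \ref{def:distr_manifold}, $Y$ has a continuous distribution on $\mathbb{R}^r$. If I can exhibit a continuous bijection $T:\mathbb{R}^r\to\mathbb{R}^r$ with continuous inverse such that $T(Y)\sim N(0,I_r)$, then together with the trivial coordinate-embedding $\iota:\mathbb{R}^r\hookrightarrow\mathbb{R}^d$, $\iota(u)=(u_1,\ldots,u_r,0,\ldots,0)$, and its left inverse $\pi:\mathbb{R}^d\to\mathbb{R}^r$, $\pi(z)=(z_1,\ldots,z_r)$, I can define
\[
Q^{\diamond}\;:=\;\iota\circ T\circ\varphi,\qquad G^{\diamond}\;:=\;\varphi^{-1}\circ T^{-1}\circ\pi.
\]
Both are continuous as compositions of continuous maps. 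Since $\pi\circ\iota$ is the identity on $\mathbb{R}^r$, one has $G^{\diamond}(Q^{\diamond}(X))=\varphi^{-1}(T^{-1}(T(Y)))=\varphi^{-1}(Y)=X$, and the distribution of $Q^{\diamond}(X)=\iota(T(Y))$ has its first $r$ coordinates i.i.d.\ $N(0,1)$ and the remaining $d-r$ coordinates identically zero, which is exactly $N(0,A_r)$.

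For the map $T$ itself, I would use the Knothe--Rosenblatt (conditional-quantile) construction. Let $F_1$ be the marginal CDF of $Y_1$ and, for $k\ge 2$, let $F_{k\mid 1:k-1}(\cdot\mid y_1,\ldots,y_{k-1})$ be the regular conditional CDF of $Y_k$ given $Y_1,\ldots,Y_{k-1}$. Define $T=(T_1,\ldots,T_r)$ by
\[
T_k(y_1,\ldots,y_k)=\Phi^{-1}\!\left(F_{k\mid 1:k-1}(y_k\mid y_1,\ldots,y_{k-1})\right),
\]
with the convention $F_{1\mid\emptyset}=F_1$. A standard probability-integral-transform argument applied componentwise shows that the coordinates of $T(Y)$ are i.i.d.\ $N(0,1)$, so $T(Y)\sim N(0,I_r)$. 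The map is triangular with monotone diagonal, so it can be inverted coordinate by coordinate via $T_k^{-1}$, assembling into a triangular inverse $T^{-1}$.

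The main obstacle is ensuring that $T$ is genuinely a homeomorphism, not merely a measurable bijection. Continuity of $T$ and $T^{-1}$ requires that each conditional CDF $F_{k\mid 1:k-1}$ is continuous in all variables and strictly increasing in its last argument, which is guaranteed as soon as the density of $Y$ is continuous and strictly positive on $\mathbb{R}^r$. The cleanest way to handle this is to invoke the fact that any continuous distribution on $\mathbb{R}^r$ is equivalent, via a compactly supported continuous reparametrization, to one with a strictly positive density (for example one can compose an initial coordinatewise quantile transform $y_k\mapsto\Phi^{-1}(F_k(y_k))$ to make the margins standard normal, and then iteratively whiten the conditionals); this preserves continuity and invertibility and reduces to the positive-density case. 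All of the rest of the proof is then bookkeeping: verifying that the composed maps $Q^{\diamond}$ and $G^{\diamond}$ indeed satisfy the two conclusions, and checking that the use of $d\ge r$ is exactly what allows the embedding $\iota$ to land in $\mathcal{Z}\subset\mathbb{R}^d$ while hitting the degenerate covariance $A_r$.
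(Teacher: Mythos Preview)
Your proposal is correct and is essentially the same argument as the paper's: both push $X$ through $\varphi$ to $\mathbb{R}^r$, apply the Knothe--Rosenblatt conditional-quantile map to obtain $N(0,I_r)$, and then use the trivial coordinate embedding/projection between $\mathbb{R}^r$ and $\mathbb{R}^d$ to realize $N(0,A_r)$ and recover $X$ exactly. The paper factors the transport through a copula representation (marginals to uniform via $Q_1$, decorrelation via conditional copula CDFs $Q_2$, then $\Phi^{-1}$ via $Q_3$), which unwinds to precisely your single-step $T_k=\Phi^{-1}\!\circ F_{k\mid 1:k-1}$; you are also more explicit than the paper about the regularity needed for $T$ to be a genuine homeomorphism, which the paper treats as ``clear'' and ``easy to verify.''
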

In such cases, we call $(Q^{\diamond},G^{\diamond})$ an \emph{optimal}
encoder-generator pair for the data distribution $P_{X}$, and note
that $(Q^{\diamond},G^{\diamond})$ may not be unique. On the other
hand, Corollary \ref{cor:insufficient_dim} below shows that if the
ambient dimension of $P_{Z}$ is insufficient, then the auto-encoder
structure is unable to recover the original distribution of $X$,
which justifies the finding in Figure \ref{fig:scurve}(b).
\begin{cor}
\label{cor:insufficient_dim}Suppose that $d<r$. Then for any continuous
mappings $Q:\mathbb{R}^{p}\rightarrow\mathbb{R}^{d}$ and $G:\mathbb{R}^{d}\rightarrow\mathbb{R}^{p}$,
we have $\mathbb{E}_{X}\left\Vert X-G(Q(X))\right\Vert >0$.
\end{cor}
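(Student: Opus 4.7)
The plan is to argue by contradiction using Brouwer's invariance of domain. Suppose $\mathbb{E}_{X}\|X-G(Q(X))\|=0$. Since the map $x\mapsto\|x-G(Q(x))\|$ is continuous and non-negative on $\mathbb{R}^{p}$, it must vanish on the closed support $S:=\mathrm{supp}(P_{X})\subset\mathcal{X}$, so that $G(Q(x))=x$ for every $x\in S$. This identity immediately says that $Q|_{S}:S\to Q(S)\subset\mathbb{R}^{d}$ is a bijection with continuous inverse $G|_{Q(S)}$; hence $Q|_{S}$ is a homeomorphism onto its image.

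Next I would use the continuous-distribution hypothesis (Definition~\ref{def:distr_manifold}) together with Assumption~\ref{assu:homeomorphism} to show that $S$ contains a non-empty open subset of $\mathcal{X}$. Because $\varphi$ is a homeomorphism, the pushforward $\varphi_{\#}P_{X}$ is a continuous distribution on $\mathbb{R}^{r}$ whose support equals $\varphi(S)$. Under the standard reading that such a distribution admits a density which is positive on some non-empty open set, $\varphi(S)$ contains an open ball $B\subset\mathbb{R}^{r}$, and thus $U:=\varphi^{-1}(B)\subset S$ is an open neighbourhood of $\mathcal{X}$ homeomorphic to $B$.

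Composing now gives a continuous injection $f:=Q\circ\varphi^{-1}:B\to\mathbb{R}^{d}$ from a non-empty open subset of $\mathbb{R}^{r}$ into $\mathbb{R}^{d}$ with $d<r$. Padding with zeros yields a continuous injection $\tilde{f}:B\to\mathbb{R}^{r}$ whose image lies in the proper subspace $\mathbb{R}^{d}\times\{0\}^{r-d}$, a set with empty interior in $\mathbb{R}^{r}$. Brouwer's invariance of domain, however, forces $\tilde{f}(B)$ to be open in $\mathbb{R}^{r}$, and this contradiction rules out the assumption $\mathbb{E}_{X}\|X-G(Q(X))\|=0$.

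The step I expect to be the main obstacle is the topological claim that $\mathrm{supp}(P_{X})$ has non-empty manifold-interior, since it depends on the precise meaning of \emph{continuous distribution} in Definition~\ref{def:distr_manifold}. If the definition is interpreted as absolute continuity with a density that is positive in some open set, the argument is clean as stated. Otherwise one must fall back on Lebesgue density points to locate a set of positive measure on which $Q\circ\varphi^{-1}$ is injective, and then invoke invariance of domain after an appropriate localization. Once the open neighbourhood is secured, the invariance-of-domain step and the composition of homeomorphisms are routine.
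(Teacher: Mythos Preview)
Your proposal is correct and follows essentially the same route as the paper: both argue by contradiction, use the vanishing expectation to obtain $G\circ Q=\mathrm{id}$ on a set that (after transferring via the homeomorphism $\varphi$) contains an open subset of $\mathbb{R}^{r}$, and then invoke Brouwer's invariance of domain to reach a contradiction with $d<r$. The paper packages the invariance-of-domain step as a separate lemma and passes to $\mathbb{R}^{r}$ before establishing the identity rather than first working on $\mathrm{supp}(P_{X})$, but the substance is identical, and your explicit caveat about the density assumption is, if anything, more careful than the paper's own treatment, which simply posits an open set on which the density is positive.
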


\subsection{The proposed model}

Theorem \ref{thm:existence_qg} shows the possibility to identify
the dimension of the data manifold $\mathcal{X}$ by learning a latent
distribution with the same intrinsic dimension via the encoder $Q$.
In this section, we realize this idea through our new auto-encoder
generative model, LWGAN, which takes advantages of both WGAN and WAE.
LWGAN is capable of learning $Q$, $G$, and $r$ simultaneously to
accomplish all of our three goals. For brevity, we abbreviate the
subscript $s$ in the matrix $A_{s}$ when no confusion is caused.

There are three probability measures involved in our problem: the
real data distribution $P_{X}$, the generation distribution $P_{G(AZ_{0})}$,
and the reconstruction distribution $P_{G(Q(X))}$. Our goal is to
ensure that all three measures are similar to each other in a systematic
way. To this end, we propose the following distance between $P_{X}$
and $P_{G(AZ_{0})}$ with given $G$ and $A$:
\begin{align}
\overline{W}_{1}(P_{X},P_{G(AZ_{0})}) & =\inf\limits_{Q\in{\cal Q}^{\diamond}}\sup\limits_{f\in{\cal F}^{\diamond}}\mathfrak{L}_{A}(G,Q,f),\label{eq:w1distance}\\
\mathfrak{L}_{A}(G,Q,f) & =\mathbb{E}_{X}\left\Vert X-G(Q(X))\right\Vert +\mathbb{E}_{X}\left[f(G(Q(X)))\right]-\mathbb{E}_{Z_{0}}\left[f(G(AZ_{0}))\right],\nonumber 
\end{align}
where ${\cal F}^{\diamond}$ is the set of all bounded 1-Lipschitz
functions, and ${\cal Q}^{\diamond}$ is the set of continuous encoder
mappings. The term $\mathbb{E}_{X}\left\Vert X-G(Q(X))\right\Vert $
can be viewed as the auto-encoder reconstruction error in WAE, and
also a loss to measure the discrepancy between $P_{X}$ and $P_{G(Q(X))}$.
The other term $\mathbb{E}_{X}\left[f(G(Q(X)))\right]-\mathbb{E}_{Z_{0}}\left[f(G(AZ_{0}))\right]$
quantities the difference between $P_{G(Q(X))}$ and $P_{G(AZ_{0})}$.
Theorem \ref{thm:w1bar} below shows that, under some mild conditions,
(\ref{eq:w1distance}) achieves its minimum as the 1-Wasserstein distance
$W_{1}(P_{X},P_{G(AZ_{0})})$.
\begin{thm}
\label{thm:w1bar}The $\overline{W}_{1}$ distance defined in (\ref{eq:w1distance})
has the following representation:
\begin{equation}
\overline{W}_{1}(P_{X},P_{G(AZ_{0})})=\inf_{Q\in{\cal Q}^{\diamond}}\Big\{ W_{1}(P_{X},P_{G(Q(X))})+W_{1}(P_{G(Q(X))},P_{G(AZ_{0})})\Big\}.\label{eq:lwgan_tri_inequality}
\end{equation}
Therefore, $W_{1}(P_{X},P_{G(AZ_{0})})\le\overline{W}_{1}(P_{X},P_{G(AZ_{0})})$,
and the equality holds if there exists an encoder $Q\in{\cal Q}^{\diamond}$
such that $Q(X)$ has the same distribution as $AZ_{0}$.
\end{thm}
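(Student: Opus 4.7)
The plan is to first collapse the inner supremum over $f$ via Kantorovich--Rubinstein duality, then bridge the reconstruction cost to a Wasserstein distance through a coupling argument, and finally derive the triangle-inequality consequences.

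First, I would note that the first summand $\mathbb{E}_{X}\|X-G(Q(X))\|$ of $\mathfrak{L}_{A}$ is $f$-free, so that for any fixed $G$ and $Q$,
\begin{equation*}
\sup_{f\in\mathcal{F}^{\diamond}}\mathfrak{L}_{A}(G,Q,f)=\mathbb{E}_{X}\|X-G(Q(X))\|+\sup_{f\in\mathcal{F}^{\diamond}}\bigl\{\mathbb{E}_{X}f(G(Q(X)))-\mathbb{E}_{Z_{0}}f(G(AZ_{0}))\bigr\}.
\end{equation*}
The remaining supremum is precisely the Kantorovich--Rubinstein dual form of $W_{1}(P_{G(Q(X))},P_{G(AZ_{0})})$, so after taking $\inf_{Q}$,
\begin{equation*}
\overline{W}_{1}(P_{X},P_{G(AZ_{0})})=\inf_{Q\in\mathcal{Q}^{\diamond}}\bigl\{\mathbb{E}_{X}\|X-G(Q(X))\|+W_{1}(P_{G(Q(X))},P_{G(AZ_{0})})\bigr\}.
\end{equation*}

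Second, I would replace $\mathbb{E}_{X}\|X-G(Q(X))\|$ by $W_{1}(P_{X},P_{G(Q(X))})$ under the $\inf_{Q}$. The forward direction is immediate: since the joint law of $(X,G(Q(X)))$ is a coupling of $P_{X}$ and $P_{G(Q(X))}$, we have $\mathbb{E}_{X}\|X-G(Q(X))\|\ge W_{1}(P_{X},P_{G(Q(X))})$, yielding $\overline{W}_{1}\ge\inf_{Q}\{W_{1}(P_{X},P_{G(Q(X))})+W_{1}(P_{G(Q(X))},P_{G(AZ_{0})})\}$. For the reverse inequality, for any $\epsilon>0$ and any $Q_{0}\in\mathcal{Q}^{\diamond}$, I would construct $Q_{\epsilon}\in\mathcal{Q}^{\diamond}$ with $P_{G(Q_{\epsilon}(X))}=P_{G(Q_{0}(X))}$ and $\mathbb{E}_{X}\|X-G(Q_{\epsilon}(X))\|\le W_{1}(P_{X},P_{G(Q_{0}(X))})+\epsilon$. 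The natural route is to invoke Monge solvability for the cost $\|\cdot\|$ (available since $P_{X}$ is continuous on the manifold $\mathcal{X}$ via Assumption~\ref{assu:homeomorphism} and Definition~\ref{def:distr_manifold}) to obtain an optimal transport map $T:\mathcal{X}\to G(\mathcal{Z})$ with $T\sharp P_{X}=P_{G(Q_{0}(X))}$, and then factor $T=G\circ Q_{\epsilon}$ by a continuous selection from $G^{-1}$. This selection/factorization step is the main obstacle, because $G$ need not be injective and a globally continuous right inverse need not exist; I would handle it by approximating $T$ with a map that lands in a chart on which $G$ admits a continuous section, using the manifold structure provided by Theorem~\ref{thm:existence_qg}.

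Third, the triangle inequality for $W_{1}$ gives $W_{1}(P_{X},P_{G(AZ_{0})})\le W_{1}(P_{X},P_{G(Q(X))})+W_{1}(P_{G(Q(X))},P_{G(AZ_{0})})$ for every $Q\in\mathcal{Q}^{\diamond}$; taking $\inf_{Q}$ and using (\ref{eq:lwgan_tri_inequality}) yields $W_{1}(P_{X},P_{G(AZ_{0})})\le\overline{W}_{1}(P_{X},P_{G(AZ_{0})})$. Finally, if some $Q\in\mathcal{Q}^{\diamond}$ satisfies $Q(X)\sim AZ_{0}$, then $P_{G(Q(X))}=P_{G(AZ_{0})}$, so $W_{1}(P_{G(Q(X))},P_{G(AZ_{0})})=0$; plugging this $Q$ into the right-hand side of (\ref{eq:lwgan_tri_inequality}) gives $\overline{W}_{1}\le W_{1}(P_{X},P_{G(AZ_{0})})$, and combined with the previous inequality we obtain equality.
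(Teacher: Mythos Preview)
Your Steps 1 and 3 coincide with the paper's argument: Kantorovich--Rubinstein duality collapses the inner supremum to $W_{1}(P_{G(Q(X))},P_{G(AZ_{0})})$, and the triangle inequality together with the choice $Q(X)\overset{d}{=}AZ_{0}$ handles the two stated consequences. The divergence is entirely in Step 2. The paper does \emph{not} split into two inequalities; it simply asserts the pointwise identity $W_{1}(P_{X},P_{G(Q(X))})=\mathbb{E}_{X}\lVert X-G(Q(X))\rVert$ for every fixed $Q$, arguing that ``since $W=Q(X)$ is a deterministic function of $X$'' the primal infimum over $\Pi(P_{X},P_{W})$ collapses. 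You are right to be more careful: that identity is false in general (take $X\sim\mathrm{Unif}(0,1)$, $G=\mathrm{id}$, $Q(x)=1-x$, so $W_{1}=0$ while $\mathbb{E}_{X}\lvert X-Q(X)\rvert=1/2$), because an arbitrary coupling of $P_{X}$ with $P_{Q(X)}$ need not be the graph coupling $(X,Q(X))$. Only the inequality $\mathbb{E}_{X}\lVert X-G(Q(X))\rVert\ge W_{1}(P_{X},P_{G(Q(X))})$ follows from that coupling, exactly as you wrote.

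That said, your proposed route to the reverse inequality under $\inf_{Q}$ does not close the gap. Even granting a Monge map $T:\mathcal{X}\to G(\mathcal{Z})$ with $T\sharp P_{X}=P_{G(Q_{0}(X))}$ and optimal cost, factoring $T=G\circ Q_{\epsilon}$ with $Q_{\epsilon}$ \emph{continuous} requires a continuous right inverse of $G$ along $T(\mathcal{X})$; nothing in the hypotheses supplies one, and ``approximating into a chart where $G$ admits a section'' neither preserves the pushforward law $P_{G(Q_{\epsilon}(X))}$ nor controls the transport cost. So you have correctly located a difficulty the paper glosses over, but the Monge-plus-section construction as sketched would not go through without substantially stronger hypotheses on $G$. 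Note finally that your Step~3 equality argument invokes the full representation (\ref{eq:lwgan_tri_inequality}) and therefore inherits this gap; by contrast, the inequality $W_{1}(P_{X},P_{G(AZ_{0})})\le\overline{W}_{1}(P_{X},P_{G(AZ_{0})})$ needs only the easy direction of Step~2 together with the triangle inequality, so that part is solid.
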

\begin{rem}
\label{rem:w1bar_remark1}Theorem \ref{thm:existence_qg} shows that
there exists some optimal encoder-generator pair $(Q^{\diamond},G^{\diamond})$
such that $Q^{\diamond}(X)\overset{d}{=}A_{r}Z_{0}$ and $X=G^{\diamond}(Q^{\diamond}(X))$.
Therefore, $Q^{\diamond}$ is an optimal solution to (\ref{eq:lwgan_tri_inequality})
for $A=A_{r}$, and hence the equality $W_{1}(P_{X},P_{G(A_{r}Z_{0})})=\overline{W}_{1}(P_{X},P_{G(A_{r}Z_{0})})$
holds. This indicates that $\overline{W}_{1}$ is a tight upper bound
for $W_{1}$. Furthermore, with $G=G^{\diamond}$, we have $\overline{W}_{1}(P_{X},P_{G^{\diamond}(A_{r}Z_{0})})=0$,
which reaches its global minimum.
\end{rem}
\begin{rem}
\label{rem:w1bar_remark2}The condition $Q(X)\overset{d}{=}AZ_{0}$
is sufficient but not necessary for $W_{1}=\overline{W}_{1}$ to hold.
For example, using $(Q^{\diamond},G^{\diamond})$ in the proof of
Theorem \ref{thm:existence_qg}, we can show that $Q^{\diamond}(X)\overset{d}{=}A_{r}Z_{0}$
but $W_{1}(P_{X},P_{G^{\diamond}(A_{s}Z_{0})})=\overline{W}_{1}(P_{X},P_{G^{\diamond}(A_{s}Z_{0})})=0$
for any $s$ such that $r\le s\le d$.
\end{rem}
In our framework, we represent the encoder, generator, and critic
using deep neural networks, $G=G(\cdot;\theta_{G})$, $Q=Q(\cdot;\theta_{Q})$,
$f=f(\cdot;\theta_{f})$, where $\theta=(\theta_{G},\theta_{Q},\theta_{f})$
are the network parameters. We restrict the three components of $\theta$
to compact sets $\Theta_{G}$, $\Theta_{Q}$, and $\Theta_{f}$, respectively,
and further define $\bar{\Theta}_{f}=\left\{ \theta_{f}\in\Theta_{f}:\Vert f(\cdot;\theta_{f})\Vert_{L}\le1\right\} $,
where $\Vert g\Vert_{L}$ stands for the Lipschitz constant of a function
$g$. Then we define the parameter space $\Theta=\Theta_{G}\times\Theta_{Q}\times\bar{\Theta}_{f}$
and function spaces $\mathcal{G}=\{G(\cdot;\theta_{G}):\theta_{G}\in\Theta_{G}\}$,
$\mathcal{Q}=\{Q(\cdot;\theta_{Q}):\theta_{Q}\in\Theta_{Q}\}$, $\mathcal{F}=\{f(\cdot;\theta_{f}):\theta_{f}\in\bar{\Theta}_{f}\}$.
Accordingly, hereafter we replace the spaces $\mathcal{Q}^{\diamond}$
and $\mathcal{F}^{\diamond}$ in (\ref{eq:w1distance}) with $\mathcal{Q}$
and $\mathcal{F}$ respectively for the definition of $\overline{W}_{1}(P_{X},P_{G(AZ_{0})})$.

In practice, we only have the empirical versions of $P_{X}$ and $P_{G(AZ_{0})}$.
Suppose we have observed an i.i.d. data sample $X_{1},\ldots,X_{n}$,
and have simulated an i.i.d. sample of $N(0,I_{d})$, $Z_{0,1},\ldots,Z_{0,n}$,
where $X$ and $Z_{0}$ samples are independent. Then we define
\begin{align*}
L(x,z;\theta) & =\Vert x-G(Q(x;\theta_{Q});\theta_{G})\Vert+f(G(Q(x;\theta_{Q});\theta_{G});\theta_{f})-f(G(z;\theta_{G});\theta_{f}),\\
\ell(\theta,A) & =\mathbb{E}_{X\otimes Z_{0}}[L(X,AZ_{0},\theta)],\quad\hat{\ell}_{n}(\theta,A)=\frac{1}{n}\sum_{i=1}^{n}L(X_{i},AZ_{0,i},\theta),
\end{align*}
where $\mathbb{E}_{X\otimes Z_{0}}$ means taking the expectation
of independent $X$ and $Z_{0}$. Clearly,
\[
\overline{W}_{1}(P_{X},P_{G(AZ_{0})})=\inf_{Q\in\mathcal{Q}}\sup_{f\in\mathcal{F}}\mathfrak{L}(G,Q,f,A)=\inf_{\theta_{Q}\in\Theta_{Q}}\sup_{\theta_{f}\in\bar{\Theta}_{f}}\ell(\theta,A),
\]
and we denote its empirical version as $\overline{W}_{1}(\hat{P}_{X},\hat{P}_{G(AZ_{0})})=\inf_{\theta_{Q}\in\Theta_{Q}}\sup_{\theta_{f}\in\bar{\Theta}_{f}}\hat{\ell}_{n}(\theta,A)$.

Remark \ref{rem:w1bar_remark1} of Theorem \ref{thm:w1bar} motivates
us to estimate the generator $G$ and the rank-revealing matrix $A$
based on the $\overline{W}_{1}$ distance, but Remark \ref{rem:w1bar_remark2}
suggests that purely minimizing $\overline{W}_{1}$ is not enough,
since a matrix $A$ with a rank larger than $r$ can still drive $\overline{W}_{1}$
to zero, the global minimum value. Therefore, we also need to introduce
a penalty term to regularize the rank of $A$. Since $A$ is uniquely
determined by its rank $s$, below $A$ and $s$ are used interchangeably
to represent the rank parameter. Define the rank-regularized objective
function as
\[
\hat{\rho}_{n}(\theta_{G},A)=\overline{W}_{1}(\hat{P}_{X},\hat{P}_{G(AZ_{0})})+\lambda_{n}\cdot\mathbf{rank}(A),
\]
where $\lambda_{n}$ is a deterministic sequence satisfying $\lambda_{n}\rightarrow0$
and $n^{1/2}\lambda_{n}\rightarrow\infty$, which will be justified
in Theorem \ref{thm:rank_consistency}. Then the generator $G$ and
the matrix $A$ are estimated by
\begin{equation}
(\hat{\theta}_{G},\hat{r})=\underset{\theta_{G}\in\Theta_{G},1\le s\le d}{\arg\min}\ \hat{\rho}_{n}(\theta_{G},A_{s}).\label{eq:est_g_r}
\end{equation}
When the optimal points are not unique, $\hat{\theta}_{G}$ can be
chosen arbitrarily from the solution set, and $\hat{r}$ is taken
as the smallest one among all the optimal points.

\subsection{Computational algorithm}

The optimization problem (\ref{eq:est_g_r}) can be solved by computing
the ``rank score''
\begin{equation}
\hat{\varrho}_{n}(s)=\underset{\theta_{G},\theta_{Q}}{\min}\max_{\theta_{f}}\ \hat{\ell}_{n}(\theta,A_{s})+\lambda_{n}s\label{eq:rank_score}
\end{equation}
for each $s=1,\ldots,d$, and then we have $\hat{r}=\arg\min_{s}\hat{\varrho}_{n}(s)$.
Equivalently, we need to solve
\begin{equation}
\begin{aligned}\min_{G_{1},Q_{1}}\max_{f_{1}} & \ \frac{1}{n}\sum_{i=1}^{n}\left[\Vert X_{i}-G_{1}(Q_{1}(X_{i}))\Vert+f_{1}(G_{1}(Q_{1}(X_{i})))-f_{1}(G_{1}(A_{1}Z_{0,i}))\right]+\lambda_{n}\cdot1\\
\cdots & \ \cdots\\
\min_{G_{d},Q_{d}}\max_{f_{d}} & \ \frac{1}{n}\sum_{i=1}^{n}\left[\Vert X_{i}-G_{d}(Q_{d}(X_{i}))\Vert+f_{d}(G_{d}(Q_{d}(X_{i})))-f_{d}(G_{d}(A_{d}Z_{0,i}))\right]+\lambda_{n}\cdot d
\end{aligned}
\label{eq:d_problems}
\end{equation}
by fitting $d$ different sets of neural networks $(G_{s},Q_{s},f_{s})$,
$s=1,\ldots,d$, which may be time-consuming. Instead, we propose
a practical and efficient algorithm based on the idea that encoder
and critic functions under different ranks can share network parameters.
We slightly modify the network structures of $Q(x;\theta_{Q})$ and
$f(x;\theta_{f})$ such that they also receive a rank input $e_{s}$,
where the one-hot encoding vector $e_{s}$ is the $s$-th column of
the identity matrix $I_{d}$. As a result, the rank-aware encoder
and critic functions become $Q(x,e_{s};\theta_{Q})$ and $f(x,e_{s};\theta_{f})$,
respectively. We also make the output of $Q(x,e_{s};\theta_{Q})$
to have rank $s$ by setting the last $(d-s)$ components to zero.
The generator $G$ does not need this modification, since its input
$Q(X,e_{s})$ or $A_{s}Z_{0}$ already contains the rank information.

Then problem (\ref{eq:d_problems}) is equivalent to solving
\begin{equation}
\min_{G,Q}\max_{f}\ \frac{1}{nd}\sum_{s=1}^{d}\sum_{i=1}^{n}\left[\Vert X_{i}-G(Q(X_{i},e_{s}))\Vert+f(G(Q(X_{i},e_{s})),e_{s})-f(G(A_{s}Z_{0,i}),e_{s})\right],\label{eq:one_single_problem}
\end{equation}
as long as the rank-aware neural networks $(G,Q,f)$ have sufficient
expressive powers. This would be a reasonable assumption if we recognize
that $(G_{s},Q_{s},f_{s})$ and $(G_{t},Q_{t},f_{t})$ should be similar
if $s\approx t$. In practice, this means that $(G_{s},Q_{s},f_{s})$
and $(G_{t},Q_{t},f_{t})$ can share most of the neural network parameters,
and their difference is reflected by the input rank information $e_{s}$.
Also note that the rank penalty terms in (\ref{eq:d_problems}) are
tentatively dropped, since they only affect the estimation of $s$
but not $(G,Q,f)$. The rank terms will be added back once the optimal
$(G,Q,f)$ are obtained.

Furthermore, the objective function of (\ref{eq:one_single_problem})
can be viewed as an empirical expectation over $(X,Z,S)$, where the
average term $d^{-1}\sum_{s=1}^{d}(\cdot)$ represents an expectation
$\mathbb{E}_{S}(\cdot)$ with $S$ following a discrete uniform distribution
on $\{1,\ldots,d\}$. Therefore, to further save computing time, we
can randomly pick a rank in each iteration, and then update $(G,Q,f)$
accordingly. In our numerical experiments, we have saved various metrics
to monitor the training procecss, and they demonstrate that this computing
algorithm is both stable and efficient (see Section S2.3 of the supplementary
material).

The training details are summarized in Algorithm \ref{alg:lwgan}.
In our algorithm, the 1-Lipschitz constraint on the critic $f$ is
enforced by the gradient penalty technique proposed in \citet{gulrajani2017improved},
where $\hat{X}$ is sampled uniformly along the segment between pairs
of points sampled from $P_{X}$ and $P_{G(AZ_{0})}$, and $\lambda_{\mathrm{GP}}$
is the regularization level of the gradient penalty. The operator
$\mathrm{Adam}(\cdot)$ means applying the Adam optimization method
\citep{kingma2014adam} to update neural network parameters $\theta$.

\begin{algorithm}[h]
\caption{\protect\label{alg:lwgan}The training algorithm of LWGAN.}


\begin{algorithmic}[1]

\REQUIRE Initial parameter value $\theta^{(0)}$, batch size $M$,
critic update frequency $L$, gradient penalty parameter $\lambda_{\mathrm{GP}}$,
rank regularization parameter $\lambda_{n}$.

\ENSURE Neural network parameters $\hat{\theta}$, estimated intrinsic
dimension $\hat{r}$.

\FOR{ $k=1,2,\ldots,T$ }

\STATE Randomly select an integer $s$ from $1,\ldots,d$ with equal
probabilities

\STATE Set $\theta^{(k,0)}\leftarrow\theta^{(k-1)}$

\FOR{ $l=1,2,\ldots,L$ }

\STATE Sample real data $X_{1},\ldots,X_{M}\overset{iid}{\sim}P_{X}$,
latent data $Z_{0,1},\ldots,Z_{0,M}\overset{iid}{\sim}N(0,I_{d})$,
and $\varepsilon_{1},\ldots,\varepsilon_{M}\overset{iid}{\sim}\mathrm{Unif}(0,1)$

\STATE Set $\hat{X}_{i}=\varepsilon_{i}X_{i}+(1-\varepsilon_{i})G(A_{s}Z_{0,i};\theta_{G}^{(k)})$,
$i=1,\ldots,M$

\STATE Define $J(\theta)=\hat{\ell}_{M}(\theta,A_{s})+\lambda_{\mathrm{GP}}\cdot M^{-1}\sum_{i=1}^{M}\left(\Vert\nabla_{x}f(\hat{X}_{i};\theta_{f})\Vert-1\right)^{2}$

\STATE Update $\theta_{f}^{(k,l)}\leftarrow\theta_{f}^{(k,l-1)}+\mathrm{Adam}\left(-\nabla_{\theta_{f}}J(\theta)|_{\theta=\theta^{(k,l-1)}}\right)$

\ENDFOR

\STATE Sample real data $X_{1},\ldots,X_{M}\overset{iid}{\sim}P_{X}$
and latent data $Z_{0,1},\ldots,Z_{0,M}\overset{iid}{\sim}N(0,I_{d})$

\STATE Update $\theta_{G,Q}^{(k)}\leftarrow\theta_{G,Q}^{(k,L)}+\mathrm{Adam}\left(\nabla_{\theta_{G,Q}}\hat{\ell}_{M}(\theta,A_{s})|_{\theta=\theta^{(k,L)}}\right)$

\IF{ $\theta^{(k)}$ converges }

\STATE Compute $\hat{\varrho}_{n}(s)=\hat{\ell}_{n}(\theta^{(k)},A_{s})+\lambda_{n}s$,
$s=1,\ldots,d$

\RETURN $\hat{\theta}=\theta^{(k)}$, $\hat{r}=\arg\min_{s}\hat{\varrho}_{n}(s)$

\ENDIF

\ENDFOR

\end{algorithmic}

\end{algorithm}

\subsection{Tuning parameter selection}

\label{subsec:tuning_parameter}

Another critical issue in applying LWGAN to real-life data is the
selection of the regularzation parameter $\lambda_{n}$ in (\ref{eq:rank_score}).
From a theoretical perspective, in Section \ref{sec:theory} we will
show that $\lambda_{n}$ should be chosen such that $\lambda_{n}\rightarrow0$
and $n^{1/2}\lambda_{n}\rightarrow\infty$, whereas in this section,
we propose a more practical and data-driven scheme for selecting $\lambda_{n}$.
The intuition is to note that without the rank penalty, $\hat{V}_{n}(A_{s})\coloneqq\hat{\varrho}_{n}(s)-\lambda_{n}s$
would all be close to zero for $s\ge r$, and their differences are
mainly attributed to the randomness from estimation. Therefore, if
we can estimate the standard errors of $\hat{V}_{n}(A_{s})$ for $s\ge r$,
then $\lambda_{n}$ should be chosen slightly larger than the estimated
standard error, so as to encourage the selection of the simplest model,
namely, the model with the smallest rank $s$.

Concretely, we use the following method to determine the data-driven
$\lambda_{n}$. First, train the model to optimum according to Algorithm
\ref{alg:lwgan}, using the whole training dataset. Second, continue
to train the model for $\tilde{T}$ iterations, using a subset of
the training data, denoted as $\tilde{X}_{1}$. This can be viewed
as fitting a model on $\tilde{X}_{1}$ based on a warm start. Third,
based on this model, compute the metric $\hat{V}_{n}(A_{s})$ for
each $s$, and we use the symbol $\hat{V}_{1s}$ to denote its value.
Then repeat this process on different training data subsets $\tilde{X}_{k}$,
$k=2,\ldots,\tilde{K}$, and similarly compute the scores $\hat{V}_{ks}$,
$k=2,\ldots,\tilde{K}$, $s=1,\ldots,d$. Let
\[
\tilde{r}=\underset{s}{\arg\min}\ \hat{V}_{\cdot s}\coloneqq\frac{1}{\tilde{K}}\sum_{k=1}^{\tilde{K}}\hat{V}_{ks},\quad\widehat{\mathrm{SE}}=\sqrt{\frac{1}{\tilde{K}-1}\sum_{k=1}^{\tilde{K}}\left(\hat{V}_{k\tilde{r}}-\hat{V}_{\cdot\tilde{r}}\right)^{2}}.
\]
In other words, we first find the rank $s$ that has the smallest
mean value $\hat{V}_{\cdot s}$, and then estimate the standard error
of the mean on this rank. Finally, we set $\lambda_{n}=\widehat{\mathrm{SE}}^{0.8}$.
In a typical setting, $\widehat{\mathrm{SE}}=O(n^{-1/2})$, so $\lambda_{n}=O(n^{-0.4})$
satisfies the theoretical rate. Our numerical experiments use $\tilde{T}=20$
and $\tilde{K}=50$, so this method essentially trains the model for
additional 1000 iterations, which is relatively small compared to
the main training cost for real-life datasets.

\section{Theoretical Results}

\label{sec:theory}

\subsection{Generalization error bound}

\label{subsec:generalization_bound}

Since the LWGAN model highly relies on the $\overline{W}_{1}$ distance,
and the estimators are based on its empirical version, a natural question
is how well the empirical quantity $\overline{W}_{1}(\hat{P}_{X},\hat{P}_{G(AZ_{0})})$
approximates the population quantity $\overline{W}_{1}(P_{X},P_{G(AZ_{0})})$.
This problem can be characterized by the generalization error. In
the context of supervised learning, the generalization error is defined
as the gap between the empirical risk (\emph{i.e.}, the training error)
and the the expected risk (\emph{i.e.}, the testing error). Similarly,
in the framework of LWGAN, we make the following definition derived
from \citet{arora2017generalization}.
\begin{defn}
Given $\hat{P}_{X}$, an empirical version of the true data distribution
with $n$ observations, a generation distribution $P_{G(AZ_{0})}$
\emph{generalizes} under the $\overline{W}_{1}(\cdot,\cdot)$ distance
with generalization error $\varepsilon$, if
\[
\left|\overline{W}_{1}(P_{X},P_{G(AZ_{0})})-\overline{W}_{1}(\hat{P}_{X},\hat{P}_{G(AZ_{0})})\right|\le\varepsilon
\]
holds with a high probability, where $\hat{P}_{G(AZ_{0})}$ is an
empirical version of the generation distribution $P_{G(AZ_{0})}$
with polynomial number of observations drawn after $P_{G(AZ_{0})}$
is fixed.
\end{defn}
Since the empirical version is what we have access to in practice,
a small generalization error implies that after we minimize the empirical
$\overline{W}_{1}$ distance, we can expect a small distance between
the true data distribution and the generation distribution. To present
the theorem below, we define the function sets $\mathcal{F}\circ G\circ\mathcal{Q}=\{f\circ G\circ Q:f\in\mathcal{F},Q\in\mathcal{Q}\}$
and $\mathcal{F}\circ G\circ\mathcal{A}=\{h:h(z)=f(G(A_{s}z)),f\in\mathcal{F},1\le s\le d\}$.
\begin{thm}
\label{thm:generalization}Assume that $\Vert x\Vert\le B$ for all
$x\in\mathcal{X}$, and every function in $\mathcal{Q}$ is $L_{Q}$-Lipschitz
with respect to the input and $L_{\theta_{Q}}$-Lipschitz with respect
to the parameter. For a fixed $L_{G}$-Lipschitz generator $G$, let
$\hat{\Theta}_{Q}$ be an $\varepsilon/(8L_{G}L_{\theta_{Q}})$-net
of the encoder parameter space $\Theta_{Q}$. Then with a probability
at least
\[
1-e^{-d}-2d|\hat{\Theta}_{Q}|\exp\left\{ -\frac{n\varepsilon^{2}}{8[(1+2L_{G}L_{Q})B+L_{G}t_{n,d}]^{2}}\right\} ,
\]
where $t_{n,d}=\sqrt{3d+2\log n+2\sqrt{d^{2}+d\log n}}$, the following
inequality holds:
\begin{equation}
\max_{1\le s\le d}\left|\overline{W}_{1}(P_{X},P_{G(A_{s}Z_{0})})-\overline{W}_{1}(\hat{P}_{X},\hat{P}_{G(A_{s}Z_{0})})\right|\le2\mathfrak{R}_{n}(\mathcal{F}\circ G\circ\mathcal{Q})+2\mathfrak{R}_{n}(\mathcal{F}\circ G\circ\mathcal{A})+\varepsilon,\label{eq:error_bound}
\end{equation}
where $\mathfrak{R}_{n}(\mathcal{F}\circ G\circ\mathcal{Q})=\mathbb{E}_{\delta}\left\{ \sup_{f\in\mathcal{F},Q\in\mathcal{Q}}n^{-1}\sum_{i=1}^{n}\delta_{i}f(G(Q(X_{i})))\right\} $
and $\mathfrak{R}_{n}(\mathcal{F}\circ G\circ\mathcal{A})=\allowbreak\mathbb{E}_{\delta}\left\{ \sup_{f\in\mathcal{F},1\le s\le d}n^{-1}\sum_{i=1}^{n}\delta_{i}f(G(A_{s}Z_{0,i}))\right\} $
are Rademacher complexities of the function sets $\mathcal{F}\circ G\circ\mathcal{Q}$
and $\mathcal{F}\circ G\circ\mathcal{A}$, respectively, $\delta=(\delta_{1},\ldots,\delta_{n})$
are independent Rademacher variables, i.e., $P(\delta_{i}=1)=P(\delta_{i}=-1)=1/2$,
and $\mathbb{E}_{\delta}$ stands for expectations with respect to
$\delta$ while fixing $X$ and $Z_{0}$.
\end{thm}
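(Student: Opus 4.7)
The plan is to reduce the stated deviation to a single uniform empirical-process bound and then to decompose that empirical process into three pieces whose fluctuations can be controlled separately. Using the elementary inequality $|\inf_Q \sup_f a(Q,f) - \inf_Q \sup_f b(Q,f)| \le \sup_{Q,f} |a(Q,f) - b(Q,f)|$, the left-hand side of \eqref{eq:error_bound} is dominated by
\[
\sup_{\theta_Q \in \Theta_Q,\ \theta_f \in \bar{\Theta}_f,\ 1 \le s \le d} \left| \ell(\theta, A_s) - \hat{\ell}_n(\theta, A_s) \right|.
\]
I would then split $\ell - \hat{\ell}_n$ into (i) a reconstruction term $\mathbb{E}\|X - G(Q(X))\| - n^{-1}\sum \|X_i - G(Q(X_i))\|$, (ii) an encoder-critic term $\mathbb{E} f(G(Q(X))) - n^{-1}\sum f(G(Q(X_i)))$, and (iii) a generator-critic term $\mathbb{E}_{Z_0} f(G(A_s Z_0)) - n^{-1} \sum f(G(A_s Z_{0,i}))$. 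The first two depend only on the $X$-sample and on $(\theta_Q,\theta_f)$, while the third depends only on the $Z_0$-sample and on $(\theta_f, s)$, so the uniformity requirements decouple.

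To control the $Z_0$-sample, I would first establish a ``good event''. Since $\|Z_{0,i}\|^2 \sim \chi^2_d$, the standard Laurent--Massart bound gives $P(\|Z_{0,i}\|^2 \ge d + 2\sqrt{dt} + 2t) \le e^{-t}$; taking $t = d + \log n$ and a union bound over $i=1,\ldots,n$ yields $P(\max_i \|Z_{0,i}\| > t_{n,d}) \le e^{-d}$, which accounts for the additive $e^{-d}$ in the failure probability and for the $L_G t_{n,d}$ contribution inside the exponent. On this event every summand $f(G(A_s Z_{0,i}))$ is of order $L_G t_{n,d}$ (after an innocuous centering of $f$), while the $X$-side summands are uniformly of order $(1 + 2 L_G L_Q) B$ because $\|X_i\| \le B$ and $G\circ Q$ is $L_G L_Q$-Lipschitz.

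For term (iii), standard symmetrization gives $\mathbb{E}\sup_{f,s}|\cdot| \le 2\mathfrak{R}_n(\mathcal{F}\circ G \circ \mathcal{A})$; composing this with McDiarmid's bounded-differences inequality on the good event (per-coordinate increment of order $L_G t_{n,d}/n$) delivers a deviation of the form $\varepsilon/2$ with the single exponential factor appearing in the statement. For terms (i) and (ii), uniformity in $\theta_Q$ is obtained by an $\varepsilon$-net argument: the assumed Lipschitz dependence $\|Q(\cdot;\theta_Q) - Q(\cdot;\theta_Q')\|_\infty \le L_{\theta_Q}\|\theta_Q - \theta_Q'\|$ combined with the $L_G$-Lipschitzness of $G$ and the 1-Lipschitzness of both $\|\cdot\|$ and $f$ makes the map $\theta_Q \mapsto \ell(\theta, A_s) - \hat{\ell}_n(\theta, A_s)$ at most $4 L_G L_{\theta_Q}$-Lipschitz, so replacing $\theta_Q$ by its nearest point in an $\varepsilon/(8 L_G L_{\theta_Q})$-net costs at most $\varepsilon/2$. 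At each net point one applies symmetrization and McDiarmid to pick up $2\mathfrak{R}_n(\mathcal{F}\circ G \circ \mathcal{Q})$ together with another exponential tail; a final union bound over the net (factor $|\hat{\Theta}_Q|$) and over $s=1,\ldots,d$ (factor $d$) assembles the announced probability.

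The main obstacle is bookkeeping rather than a single deep estimate: one must thread the three empirical processes through a single slack $\varepsilon$ and two Rademacher terms, rather than three, by viewing $(f,Q) \mapsto f\circ G \circ Q$ as one function class so that the symmetrization for terms (i) and (ii) both fall under $\mathfrak{R}_n(\mathcal{F}\circ G\circ \mathcal{Q})$ (the reconstruction term contributes no extra Rademacher piece because $\|x - G(Q(x))\|$ is itself 1-Lipschitz in $G(Q(x))$ and can be absorbed). A secondary care point is that the Gaussian truncation only affects the $Z_0$-side concentration; the $X$-side argument needs no truncation thanks to $\|X\|\le B$. Once the Lipschitz constants are carefully tracked through these reductions, combining the three high-probability inequalities via the union bound yields exactly \eqref{eq:error_bound}.
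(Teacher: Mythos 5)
Your proposal is correct and follows essentially the same route as the paper's proof: the Laurent--Massart $\chi^{2}_{d}$ tail bound giving the event $\max_{i}\Vert Z_{0,i}\Vert\le t_{n,d}$ with probability at least $1-e^{-d}$, McDiarmid's bounded-differences inequality conditional on that event with per-coordinate increments of order $[(1+2L_{G}L_{Q})B+L_{G}t_{n,d}]/n$, an $\varepsilon/(8L_{G}L_{\theta_{Q}})$-net over $\Theta_{Q}$ whose Lipschitz transfer costs $\varepsilon/2$, a union bound over the net and the ranks, and symmetrization to produce the two Rademacher terms. The one caution is your ``1-Lipschitz absorption'' of the reconstruction term: $x\mapsto\Vert x-G(Q(x))\Vert$ is not of the form $f\circ G\circ Q$ for a single $f\in\mathcal{F}$, so it cannot be folded into $\mathfrak{R}_{n}(\mathcal{F}\circ G\circ\mathcal{Q})$ by contraction; the paper instead puts it inside the single functional $\Psi_{1}+\Psi_{2}$ (with $\sup_{f}$ kept inside) to which McDiarmid is applied once per net point, so that term is exactly centered and needs no Rademacher contribution, and this one-shot concentration (rather than three separate applications with separate $\varepsilon$-budgets) is also what makes the stated constants and the factor $2d|\hat{\Theta}_{Q}|$ come out exactly.
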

Theorem \ref{thm:generalization} describes how the function classes
$\mathcal{F}$ and $\mathcal{Q}$ contribute to the generalization
error bound in our framework. Given a fixed generator $G$, there
exists a uniform upper bound for any critic $f\in\mathcal{F}$, encoder
$Q\in\mathcal{Q}$, and low-rank matrix $A$ with appropriate numbers
of observations from $P_{X}$ and $P_{Z_{0}}$. More concretely, if
$|\hat{\Theta}_{Q}|$ is small and the sample size is large, then
the generalization error is consequently guaranteed to hold with a
high probability. In \citet{gao2021theoretical}, it has been proved
that $\log(|\hat{\Theta}_{Q}|)\le\mathcal{O}(K_{Q}^{2}D_{Q}\log(D_{Q}L_{Q}L_{G}L_{\theta_{Q}}/\varepsilon))$,
where $K_{Q}$ and $D_{Q}$ denote the width and depth of $Q$, respectively.
Additionally, the Lipschitz constants of $Q$ and $G$ are under the
control of the spectral normalization of their weights.

The Rademacher complexities in (\ref{eq:error_bound}) measure the
richness of a class of real-valued functions with respect to a probability
distribution. There are several existing results on the Rademacher
complexity of neural networks. For example, under some mild conditions,
$\mathfrak{R}_{n}(\mathcal{F}\circ G\circ\mathcal{Q})$ is upper bounded
by an order scaling as $\mathcal{O}(L_{G}L_{Q}\sqrt{(K_{Q}^{2}D_{Q}+K_{f}^{2}D_{f})/n})$,
where $K_{f}$ and $D_{f}$ denote the width and depth of $f$, respectively.
Similarly, an upper bound on $\mathfrak{R}_{n}(\mathcal{F}\circ G\circ\mathcal{A})$
scales as $\mathcal{O}(L_{G}\sqrt{(d^{2}+K_{f}^{2}D_{f})/n})$ \citep{gao2021theoretical}.

Finally, since $\overline{W}_{1}(P_{X},P_{G(AZ_{0})})$ is a tight
upper bound for the 1-Wasserstein distance between $P_{X}$ and $P_{G(AZ_{0})}$
from Theorem \ref{thm:w1bar}, we further have
\[
W_{1}(P_{X},P_{G(A_{s}Z_{0})})\le\overline{W}_{1}(\hat{P}_{X},\hat{P}_{G(A_{s}Z_{0})})+2\mathfrak{R}_{n}(\mathcal{F}\circ G\circ\mathcal{Q})+2\mathfrak{R}_{n}(\mathcal{F}\circ G\circ\mathcal{A})+\varepsilon
\]
with a high probability. This implies that from the population perspective,
the real data distribution is close to the generation distribution
with respective to the 1-Wasserstein distance when we minimize the
empirical loss function $\overline{W}_{1}(\hat{P}_{X},\hat{P}_{G(A_{s}Z_{0})})$.

\subsection{Estimation consistency}

Theorem \ref{thm:existence_qg} has shown that an optimal encoder-generator
pair globally minimizes the $\overline{W}_{1}(P_{X},P_{G(AZ_{0})})$
distance under a suitable rank of $A$, and equation (\ref{eq:est_g_r})
indicates that the encoder and generator are estimated by minimizing
the empirical version $\overline{W}_{1}(\hat{P}_{X},\hat{P}_{G(AZ_{0})})$.
Therefore, the question of interest here is how the estimated quantities
relate to the population ones.

However, unlike regular parameter estimation problems, an important
property of the encoder-generator structure in LWGAN is that the encoder-generator
pair may not be unique even with the same objective function value.
For example, when $Q$ and $G$ simultaneously permute the first $s$
output and input variables, respectively, the corresponding value
of $\mathfrak{L}_{A}(G,Q,f)$ does not change. Therefore, the optimal
solutions to (\ref{eq:est_g_r}) are not singletons but set-valued.
In this section, we first fix the rank of $A$, and consider the estimation
consistency through a distance between sets called Hausdorff distance
\citep{rockafellar2009variational}. We defer the estimation of the
optimal rank of $A$, or equivalently, $\texttt{InDim}(P_{X})$, to
Section \ref{subsec:rank_consistency}.

For any two non-empty bounded subsets $S_{1}$ and $S_{2}$ of some
Euclidean space, the Hausdorff distance between $S_{1}$ and $S_{2}$
is defined as
\[
d_{H}(S_{1},S_{2})=\max\left\{ \sup_{a\in S_{1}}d(a,S_{2}),\sup_{b\in S_{2}}d(b,S_{1})\right\} ,
\]
where $d(x,S)=\inf_{y\in S}\Vert x-y\Vert$ is the shortest distance
from a point $x$ to a set $S$. The Hausdorff distance $d_{H}$ is
a metric for non-empty compact sets, and $d_{H}(S_{1},S_{2})=0$ if
and only if $S_{1}=S_{2}$.

Recall that we represent $G$, $Q$, and $f$ using deep neural networks,
and we pre-specify the network structures for these mappings, such
as the widths and depths. In this section we only consider functions
within the space $\mathcal{G}\times\mathcal{Q}\times\mathcal{F}$.
Introduce the function $\phi_{A}(\theta_{G},\theta_{Q})=\sup_{\theta_{f}}\ell(\theta,A)$,
and then an optimal solution $\theta^{*}$ solves
\[
\inf_{\theta_{G}}\,\overline{W}_{1}(P_{X},P_{G(AZ_{0})})=\inf_{\theta_{G},\theta_{Q}}\sup_{\theta_{f}}\,\ell(\theta,A)=\inf_{\theta_{G},\theta_{Q}}\phi_{A}(\theta_{G},\theta_{Q})
\]
when it is a solution to both the outer minimization problem and the
inner maximization problem. Therefore, the optimal solution set $\Theta_{A}^{*}$
is defined as
\[
\Theta_{A}^{*}=\left\{ \theta^{*}\in\Theta:\phi_{A}(\theta_{G}^{*},\theta_{Q}^{*})=\inf_{\theta_{G},\theta_{Q}}\phi_{A}(\theta_{G},\theta_{Q}),\ \ell(\theta^{*},A)=\phi_{A}(\theta_{G}^{*},\theta_{Q}^{*})\right\} .
\]
For the empirical minimax problem $\inf_{\theta_{G},\theta_{Q}}\sup_{\theta_{f}}\hat{\ell}_{n}(\theta,A)$,
algorithms typically search for approximate solutions rather than
exact ones. Therefore, we define the empirical solution set with slackness
level $\tau_{n}$ as
\[
\hat{\Theta}_{n,A}^{*}(\tau_{n})=\left\{ \theta^{*}\in\Theta:\hat{\phi}_{A}(\theta_{G}^{*},\theta_{Q}^{*})\le\inf_{\theta_{G},\theta_{Q}}\phi_{A}(\theta_{G},\theta_{Q})+\tau_{n},\ \hat{\ell}_{n}(\theta^{*},A)\ge\hat{\phi}_{A}(\theta_{G}^{*},\theta_{Q}^{*})-\tau_{n}\right\} ,
\]
where $\hat{\phi}_{A}(\theta_{G},\theta_{Q})=\sup_{\theta_{f}}\hat{\ell}_{n}(\theta,A)$,
and $\tau_{n}$ is a sequence of non-negative random variables such
that $\tau_{n}\overset{P}{\rightarrow}0$. We further make some assumptions
on the LWGAN model:
\begin{assumption}
\label{assu:deriv}(a) $\Theta$ is a compact set. (b) The function
$L(x,z;\theta)$ is continuously differentiable on $\Theta$ for all
$(x,z)$ with
\[
\mathbb{E}_{X\otimes Z_{0}}\left[\sup_{\theta\in\Theta}\left\Vert \frac{\partial}{\partial\theta}L(X,A_{s}Z_{0};\theta)\right\Vert ^{2}\right]<\infty,\ s=1,\ldots,d.
\]
\end{assumption}
The compact parameter space assumption simplifies the asymptotic analysis.
The moment condition rules out degenerate cases, and the differentiability
is a common requirement for GAN training as various gradient descent-ascent
algorithms are used. Then we adopt the ideas from \citet{meitz2021statistical}
to prove the estimation consistency of LWGAN.
\begin{thm}
\label{thm:hausdorff_consistency}Suppose that $\tau_{n}$ is a sequence
of non-negative random variables such that $\tau_{n}\overset{P}{\rightarrow}0$
and $n^{-1/2}/\tau_{n}\overset{P}{\rightarrow}0$. Then for a fixed
$A$, under Assumption \ref{assu:deriv}, $d_{H}(\hat{\Theta}_{n,A}^{*}(\tau_{n}),\Theta_{A}^{*})\overset{P}{\rightarrow}0$
as $n\rightarrow\infty$.
\end{thm}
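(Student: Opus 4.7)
The plan is to adopt the set-valued argmin consistency framework of \citet{meitz2021statistical}, adapted to the nested minimax structure of LWGAN. The essential ingredients are: (i) uniform convergence of the empirical loss $\hat{\ell}_n(\theta,A)$ to its population counterpart $\ell(\theta,A)$ on the compact parameter space $\Theta$, and (ii) joint continuity of the relevant population functionals. Together with compactness of $\Theta$, these let us translate ``near-optimality of the empirical problem'' into ``closeness to $\Theta_A^*$'' and vice versa, which is exactly what the Hausdorff distance quantifies.

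The first step is to establish the uniform convergence rate $\sup_{\theta\in\Theta}|\hat{\ell}_n(\theta,A)-\ell(\theta,A)|=O_P(n^{-1/2})$. Under Assumption \ref{assu:deriv}, $\Theta$ is compact and $L(x,z;\theta)$ is continuously differentiable in $\theta$ with a square-integrable gradient envelope, so the function class $\{L(\cdot,\cdot;\theta):\theta\in\Theta\}$ is Lipschitz with an $L^2$ envelope. A standard empirical-process uniform law of large numbers then yields the $O_P(n^{-1/2})$ rate. Taking the supremum over $\theta_f\in\bar{\Theta}_f$ of both sides propagates this rate to the outer objective,
\[
\sup_{\theta_G,\theta_Q}\bigl|\hat{\phi}_A(\theta_G,\theta_Q)-\phi_A(\theta_G,\theta_Q)\bigr|=O_P(n^{-1/2}).
\]
Because $\bar{\Theta}_f$ is compact and $\ell(\theta,A)$ is jointly continuous in $\theta$, Berge's maximum theorem gives continuity of $\phi_A(\theta_G,\theta_Q)$ on the compact set $\Theta_G\times\Theta_Q$, so both $\ell(\cdot,A)$ and $\phi_A(\cdot,\cdot)$ attain their extrema and $\Theta_A^*$ is a non-empty compact set.

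Next, I would handle the two one-sided Hausdorff inclusions separately. For the ``population-inside-empirical'' direction, pick any $\theta^*\in\Theta_A^*$. Uniform convergence gives $\hat{\phi}_A(\theta_G^*,\theta_Q^*)=\phi_A(\theta_G^*,\theta_Q^*)+O_P(n^{-1/2})=\inf\phi_A+O_P(n^{-1/2})$ and $\hat{\ell}_n(\theta^*,A)\ge\hat{\phi}_A(\theta_G^*,\theta_Q^*)-O_P(n^{-1/2})$. Since $n^{-1/2}/\tau_n\overset{P}{\to}0$, both defining slackness inequalities of $\hat{\Theta}_{n,A}^*(\tau_n)$ are satisfied with probability tending to one, so $\theta^*$ itself belongs to $\hat{\Theta}_{n,A}^*(\tau_n)$, and hence $\sup_{\theta^*\in\Theta_A^*}d(\theta^*,\hat{\Theta}_{n,A}^*(\tau_n))\overset{P}{\to}0$. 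For the opposite ``empirical-inside-population'' direction I would argue by contradiction: if it failed, a subsequence argument (extracting an almost surely convergent sub-subsequence using compactness of $\Theta$) would produce $\hat{\theta}_{n_k}\in\hat{\Theta}_{n_k,A}^*(\tau_{n_k})$ with $\hat{\theta}_{n_k}\to\theta^\infty$ and $d(\theta^\infty,\Theta_A^*)>0$. Passing to the limit in the defining inequalities, using $\tau_n\to 0$, the uniform convergence of $\hat{\phi}_A$ and $\hat{\ell}_n$, and continuity of $\phi_A$ and $\ell$, gives $\phi_A(\theta_G^\infty,\theta_Q^\infty)\le\inf\phi_A$ and $\ell(\theta^\infty,A)\ge\phi_A(\theta_G^\infty,\theta_Q^\infty)$, i.e.\ $\theta^\infty\in\Theta_A^*$, a contradiction.

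The main obstacle is propagating uniform convergence cleanly through the sup--inf structure while respecting the fact that $\tau_n$ is random and only controlled in probability. Care is needed because $\hat{\Theta}_{n,A}^*(\tau_n)$ mixes an empirical object $\hat{\phi}_A$ with a population benchmark $\inf\phi_A$ in its defining inequality, so the rate condition $n^{-1/2}/\tau_n\overset{P}{\to}0$ is what guarantees that the stochastic slack from uniform convergence is eventually dominated by the deterministic slack $\tau_n$; this is precisely where that hypothesis enters. Apart from that, the continuity of $\phi_A$ via Berge's theorem and the standard subsequence-plus-compactness closing argument are routine once uniform convergence and continuity are in hand.
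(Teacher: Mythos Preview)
Your proposal is correct and follows the same overall framework as the paper: both invoke \citet{meitz2021statistical}, establish the uniform rate $\sup_{\theta}|\hat{\ell}_n-\ell|=O_P(n^{-1/2})$ from Assumption~\ref{assu:deriv} via an empirical-process Lipschitz argument, use Berge's maximum theorem for continuity of $\phi_A$, and then prove the two one-sided Hausdorff inclusions separately, with the rate condition $n^{-1/2}/\tau_n\overset{P}{\to}0$ doing exactly the work you identify.

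The one methodological difference worth flagging is in the ``empirical-inside-population'' direction. The paper does not argue by subsequence and contradiction; instead it introduces a single nonnegative gap function
\[
\Delta_A(\theta)=\max\bigl\{\phi_A(\theta_G,\theta_Q)-\ell(\theta,A),\ \phi_A(\theta_G,\theta_Q)-V(A)\bigr\}
\]
(and its empirical analogue $\hat{\Delta}_{n,A}$), so that $\Theta_A^*=\{\Delta_A=0\}$ and $\hat{\Theta}_{n,A}^*(\tau_n)=\{\hat{\Delta}_{n,A}\le\tau_n\}$. Continuity of $\Delta_A$ on compact $\Theta$ gives a uniform lower bound $\eta(\varepsilon)>0$ outside any $\varepsilon$-enlargement of $\Theta_A^*$, and a direct bound $\sup_\theta|\hat{\Delta}_{n,A}-\Delta_A|\le 2\sup_\theta|\hat{\ell}_n-\ell|$ then forces $\hat{\Theta}_{n,A}^*(\tau_n)\subset\Theta_{A,\varepsilon}^*$. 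This packages both optimality conditions into one scalar and avoids the measurability and subsequence bookkeeping that your contradiction route requires. Your argument is valid, but the gap-function device is cleaner and is the organizing idea the paper actually uses.
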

Theorem \ref{thm:hausdorff_consistency} assures that the encoder,
generator, and critic estimators of LWGAN are consistent under the
Hausdorff distance for a fixed latent dimension.

\subsection{Intrinsic dimension consistency}

\label{subsec:rank_consistency}

Finally, we show that the estimator $\hat{r}$ computed from (\ref{eq:est_g_r})
is capable of recovering the intrinsic dimension of $P_{X}$. To this
end, we need to further assume that the neural network function space
$\mathcal{G}\times\mathcal{Q}\times\mathcal{F}$ is large enough to
cover some optimal points of interest. Define $\mathfrak{F}_{A}(G,Q)=\sup_{f\in{\cal F}^{\diamond}}\mathfrak{L}_{A}(G,Q,f)$,
and let $\mathcal{G}^{\diamond}$ denote the set of continuous generators.
Then the optimal solution set of minimizing $\overline{W}_{1}(P_{X},P_{G(AZ_{0})})$
can be characterized as
\[
\mathcal{S}_{A}=\left\{ (G^{*},Q^{*},f^{*}):\mathfrak{F}_{A}(G^{*},Q^{*})=\inf_{\substack{Q\in{\cal Q}^{\diamond}\\
G\in\mathcal{G}^{\diamond}
}
}\mathfrak{F}_{A}(G,Q),\ \mathfrak{L}_{A}(G^{*},Q^{*},f^{*})=\sup_{f\in{\cal F}^{\diamond}}\mathfrak{L}_{A}(G^{*},Q^{*},f)\right\} .
\]
Clearly, coupled with some $f^{\diamond}\in\mathcal{F}^{\diamond}$,
we have $(G^{\diamond},Q^{\diamond},f^{\diamond})\in\mathcal{S}_{A_{r}}$.
We then make the following assumption.
\begin{assumption}
\label{assu:theta}(a) $\mathcal{S}_{A_{r}}\cap(\mathcal{G}\times\mathcal{Q}\times\mathcal{F})\neq\varnothing$.
(b) For each $s<r$, there exists a triplet $(G_{s}^{*},Q_{s}^{*},f_{s}^{*})\in\mathcal{S}_{A_{s}}$
such that $f_{s}^{*}\in\mathcal{F}$ and
\[
\sup_{f\in\mathcal{F}}\mathfrak{L}_{A_{s}}(G_{s}^{*},Q_{s}^{*},f)=\inf_{\substack{Q\in{\cal Q}\\
G\in\mathcal{G}
}
}\sup\limits_{f\in{\cal F}}\mathfrak{L}_{A_{s}}(G,Q,f).
\]
\end{assumption}
Now we are ready to show that the rank estimated from (\ref{eq:est_g_r})
approaches the intrinsic dimension of $\mathcal{X}$ as the sample
size grows.
\begin{thm}
\label{thm:rank_consistency}Assume that Assumptions \ref{assu:deriv}
and \ref{assu:theta} hold. Then with $\lambda_{n}\rightarrow0$ and
$n^{1/2}\lambda_{n}\rightarrow\infty$, we have $P(\hat{r}=r)\rightarrow1$,
where $r=\texttt{InDim}(P_{X})$ stands for the intrinsic dimension
of $\mathcal{X}$.
\end{thm}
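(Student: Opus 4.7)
The plan is to reduce the claim $P(\hat{r}=r)\to 1$ to two ingredients: a strictly positive population gap in the inf-sup value between $s<r$ and $s\ge r$, and a uniform-in-$s$ concentration of the empirical rank score around its population counterpart at a rate dominated by $\lambda_{n}$. Define $V(s)=\inf_{\theta_{G},\theta_{Q}}\sup_{\theta_{f}}\ell(\theta,A_{s})$ and $\hat{V}_{n}(s)=\inf_{\theta_{G},\theta_{Q}}\sup_{\theta_{f}}\hat{\ell}_{n}(\theta,A_{s})$, so that $\hat{\varrho}_{n}(s)=\hat{V}_{n}(s)+\lambda_{n}s$.

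At the population level I would verify $V(s)=0$ for $s\ge r$ and $V(s)\ge\delta_{*}>0$ for $s<r$. The first claim follows from Assumption \ref{assu:theta}(a), which places $(G^{\diamond},Q^{\diamond})$ from Theorem \ref{thm:existence_qg} inside $\mathcal{G}\times\mathcal{Q}$ and hence forces $V(r)=0$, and Remark \ref{rem:w1bar_remark2} extends this to all $r\le s\le d$. For $s<r$, Assumption \ref{assu:theta}(b) identifies $V(s)$ with $\inf_{G\in\mathcal{G}^{\diamond}}\overline{W}_{1}(P_{X},P_{G(A_{s}Z_{0})})=\overline{W}_{1}(P_{X},P_{G_{s}^{*}(A_{s}Z_{0})})$ at the attained population minimizer $(G_{s}^{*},Q_{s}^{*})\in\mathcal{S}_{A_{s}}$; Theorem \ref{thm:w1bar} then gives $V(s)\ge W_{1}(P_{X},P_{G_{s}^{*}(A_{s}Z_{0})})$. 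Because $G_{s}^{*}$ is continuous, $P_{G_{s}^{*}(A_{s}Z_{0})}$ is supported on a set of intrinsic dimension at most $s<r$, whereas $P_{X}$ lives on an $r$-dimensional manifold, so the two distributions cannot coincide and $W_{1}>0$. Since $s$ ranges over the finite set $\{1,\ldots,r-1\}$, $\delta_{*}:=\min_{1\le s<r}V(s)>0$. The concentration step next establishes $\max_{1\le s\le d}|\hat{V}_{n}(s)-V(s)|=O_{P}(n^{-1/2})$: Theorem \ref{thm:generalization} gives the rate pointwise in the generator parameter, and a standard covering-number argument on the compact space $\Theta_{G}$ (using Assumption \ref{assu:deriv}) upgrades it to a bound uniform over $\theta_{G}$, while the maximum over a finite set of $s$ is immediate. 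The condition $n^{1/2}\lambda_{n}\to\infty$ then converts this estimation error into $o_{P}(\lambda_{n})$.

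Combining the two handles both failure modes. For $s<r$,
\[
\hat{\varrho}_{n}(s)-\hat{\varrho}_{n}(r)\ge V(s)-V(r)-o_{P}(\lambda_{n})-\lambda_{n}(r-s)\ge\delta_{*}-o_{P}(\lambda_{n})-\lambda_{n}d,
\]
which is strictly positive eventually since $\delta_{*}$ is a fixed constant and $\lambda_{n}\to 0$. For $s>r$, using $\hat{V}_{n}(s)\ge 0$ and $\hat{V}_{n}(r)=o_{P}(\lambda_{n})$,
\[
\hat{\varrho}_{n}(s)-\hat{\varrho}_{n}(r)\ge -o_{P}(\lambda_{n})+\lambda_{n}(s-r)\ge -o_{P}(\lambda_{n})+\lambda_{n},
\]
which is strictly positive with probability tending to one by $n^{1/2}\lambda_{n}\to\infty$. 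The tie-breaking rule (smallest optimal $s$) then pins $\hat{r}=r$ with probability tending to one.

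The main obstacle I anticipate is the uniform concentration: Theorem \ref{thm:generalization} is stated for a \emph{fixed} generator, and upgrading it to a bound uniform over $\theta_{G}\in\Theta_{G}$ at the rate $n^{-1/2}$ requires careful accounting of the generator's Lipschitz constants and the covering numbers of $\Theta_{G}$. A secondary subtlety is ensuring that $V(s)>0$ for $s<r$ is uniform over the finite index set (so that the minimum $\delta_{*}$ is really bounded away from zero) rather than merely positive pointwise, which is why we lean on compactness of $\Theta$ and the attainment guaranteed by Assumption \ref{assu:theta}(b). The scale-comparison between $\delta_{*}$, $\lambda_{n}$, and $n^{-1/2}$ is otherwise routine once the concentration is in hand.
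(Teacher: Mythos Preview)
Your outline is essentially the paper's own argument: establish $V(A_{r})=0$ from Assumption~\ref{assu:theta}(a), $V(A_{s})>0$ for $s<r$ via Assumption~\ref{assu:theta}(b) and a dimension obstruction, obtain $\hat{V}_{n}(A_{s})=V(A_{s})+O_{P}(n^{-1/2})$ uniformly in $s$, and then compare $\hat{\varrho}_{n}(s)$ to $\hat{\varrho}_{n}(r)$ using $\lambda_{n}\to 0$ and $n^{1/2}\lambda_{n}\to\infty$.

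Two small points where your route deviates from the paper. First, your anticipated ``main obstacle'' is self-inflicted: you do not need Theorem~\ref{thm:generalization} at all, let alone a covering of $\Theta_{G}$. Assumption~\ref{assu:deriv} makes $\{L(\cdot,\cdot;\theta):\theta\in\Theta\}$ a Lipschitz-in-$\theta$ class with square-integrable envelope, so standard Donsker theory (as in the proof of Theorem~\ref{thm:hausdorff_consistency}) already gives
\[
\sup_{\theta\in\Theta}\bigl|\hat{\ell}_{n}(\theta,A_{s})-\ell(\theta,A_{s})\bigr|=O_{P}(n^{-1/2}),
\]
and the elementary inequality $\bigl|\inf_{\theta_{G},\theta_{Q}}\sup_{\theta_{f}}\hat{\ell}_{n}-\inf_{\theta_{G},\theta_{Q}}\sup_{\theta_{f}}\ell\bigr|\le\sup_{\theta}|\hat{\ell}_{n}-\ell|$ immediately yields $\hat{V}_{n}(A_{s})=V(A_{s})+O_{P}(n^{-1/2})$ for each of the finitely many $s$. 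Second, your appeal to Remark~\ref{rem:w1bar_remark2} to conclude $V(s)=0$ for all $s\ge r$ is not quite licensed: Assumption~\ref{assu:theta}(a) only places \emph{some} element of $\mathcal{S}_{A_{r}}$ in $\mathcal{G}\times\mathcal{Q}\times\mathcal{F}$, not the particular $(G^{\diamond},Q^{\diamond})$ from the proof of Theorem~\ref{thm:existence_qg} that Remark~\ref{rem:w1bar_remark2} uses. The paper avoids this by only asserting $V(A_{s})\ge 0$ for $s>r$ and then splitting the ratio $\hat{\varrho}_{n}(r)/\hat{\varrho}_{n}(s)$ into the cases $V(A_{s})=0$ and $V(A_{s})>0$; your use of $\hat{V}_{n}(s)\ge 0$ is essentially the same (it follows once constant functions lie in $\mathcal{F}$, since then $\sup_{f}\hat{\ell}_{n}\ge n^{-1}\sum_{i}\|X_{i}-G(Q(X_{i}))\|\ge 0$). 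Neither difference is serious; your plan is correct.
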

Theorem \ref{thm:rank_consistency} can be compared to the well-known
Bayesian information criterion (BIC) for model selection of the following
form:
\begin{equation}
n^{-1}\mathrm{BIC}=-\frac{2}{n}L(\hat{\theta};X_{1},\ldots,X_{n})+\frac{\log(n)}{n}\cdot s,\label{eq:BIC}
\end{equation}
where $L(\hat{\theta};X_{1},\ldots,X_{n})=\sum_{i=1}^{n}\log p(X_{i};\hat{\theta})$
is the maximized likelihood function of the model $p(x;\theta)$,
$\hat{\theta}$ is the maximum likelihood estimator, and $s$ is the
number of parameters. We normalize BIC by $n$ in (\ref{eq:BIC})
to make the first term comparable to an expectation.

To some extent, LWGAN and BIC share perceptible similarities. For
example, if we interpret the rank $s$ as the complexity of the model,
then both LWGAN and BIC construct a penalty term $\lambda_{n}\cdot s$
with $\lambda_{n}\rightarrow0$. More importantly, they both promise
some type of model selection consistency. However, there are some
fundamental differences between LWGAN and BIC. First, the theoretical
rates are different. BIC has $\lambda_{n}=\log(n)/n$, whereas in
LWGAN we require $\lambda_{n}\rightarrow0$ and $n^{1/2}\lambda_{n}\rightarrow\infty$.
Second, BIC is mostly a likelihood-based criterion, whereas in LWGAN,
the main part is based on the $\overline{W}_{1}$ distance given in
(\ref{eq:w1distance}). Third, in the BIC framework, $s$ always represents
the number of parameters, but in LWGAN, this quantity is not meaningful,
as neural networks are known to be highly overparameterized.

\section{Experimental Results}

\label{sec:experiment}

In this section, we conduct comprehensive numerical experiments to
validate that LWGAN is able to achieve our three goals simultaneously:
detecting the correct intrinsic dimension, generating high-quality
samples, and obtaining small reconstruction errors. The programming
code to reproduce the experiment results is available at \url{https://github.com/yixuan/LWGAN}.

\subsection{Simulated experiments}

We first verify our method using three toy examples supported on manifolds
with increasing dimensions. Besides the S-curve data introduced in
Section \ref{sec:mismatch}, the other two datasets are generated
as:
\begin{enumerate}
\item Swiss roll: $X_{1}=V\cos(V)$, $X_{2}=V\sin(V)$, where $V=3\pi(1+2U)/2$,
$U\sim N(0,1)$.
\item Hyperplane: $X_{1},X_{2},X_{3},X_{4}\overset{iid}{\sim}N(0,1)$, $X_{5}=X_{1}+X_{2}+X_{3}+X_{4}^{2}$.
\end{enumerate}
The scatterplots for the three datasets are shown in the first column
of Figure \ref{fig:toy_examples}. It is straightforward to find that
the intrinsic dimensions of the Swiss roll, S-curve, and Hyperplane
datasets are one, two, and four, respectively.

\begin{figure}[h]
\begin{centering}
\subfloat[Swiss roll]{\begin{centering}
\includegraphics[width=0.25\textwidth]{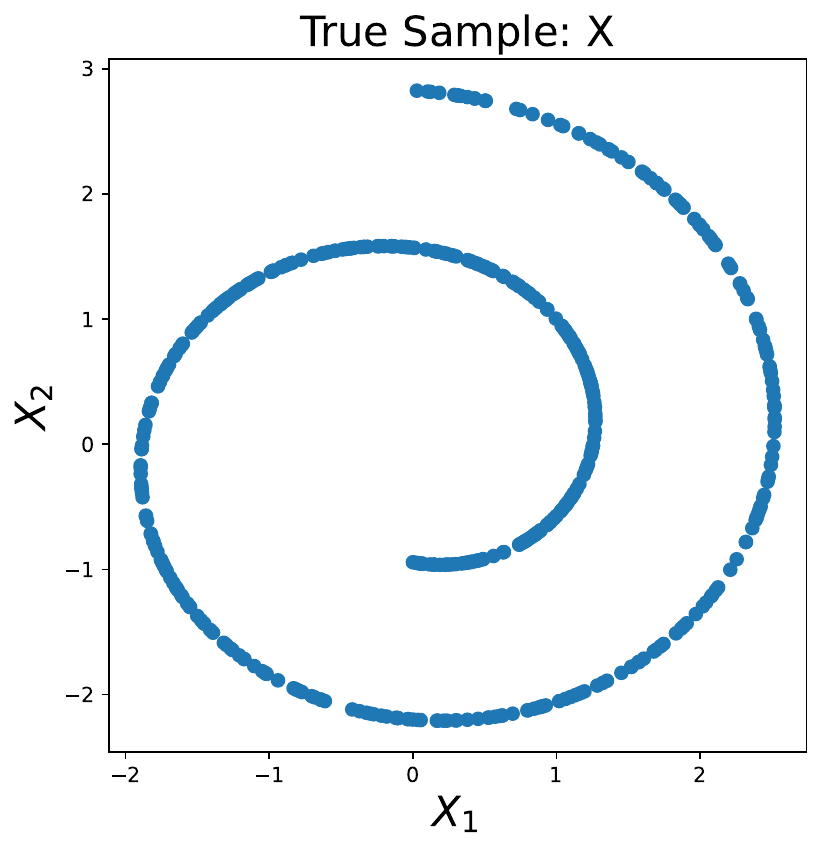}\includegraphics[width=0.24\textwidth]{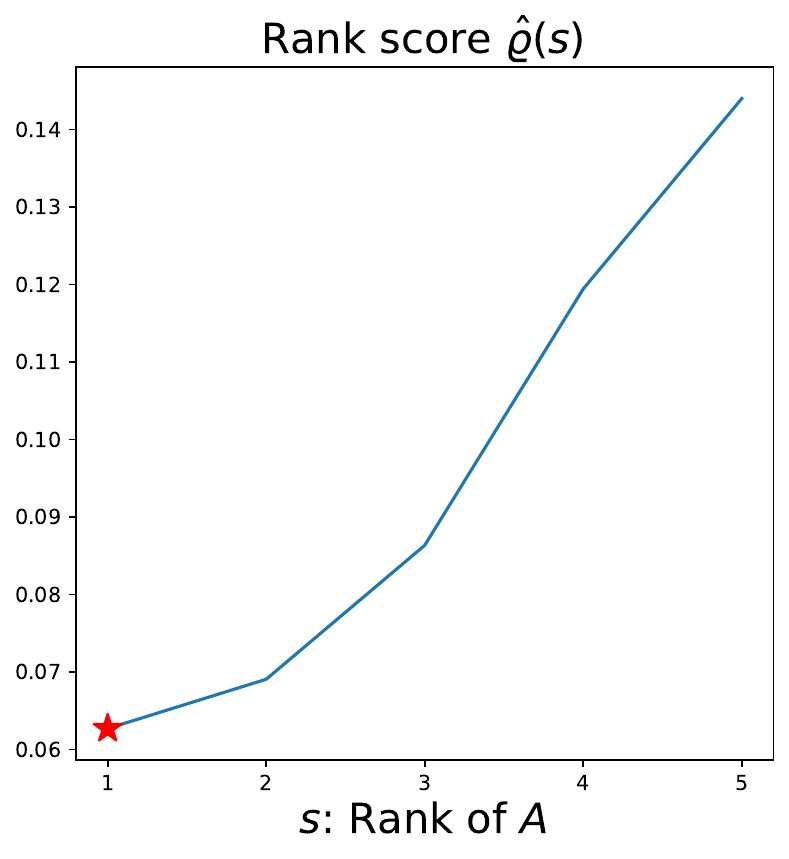}\includegraphics[width=0.25\textwidth]{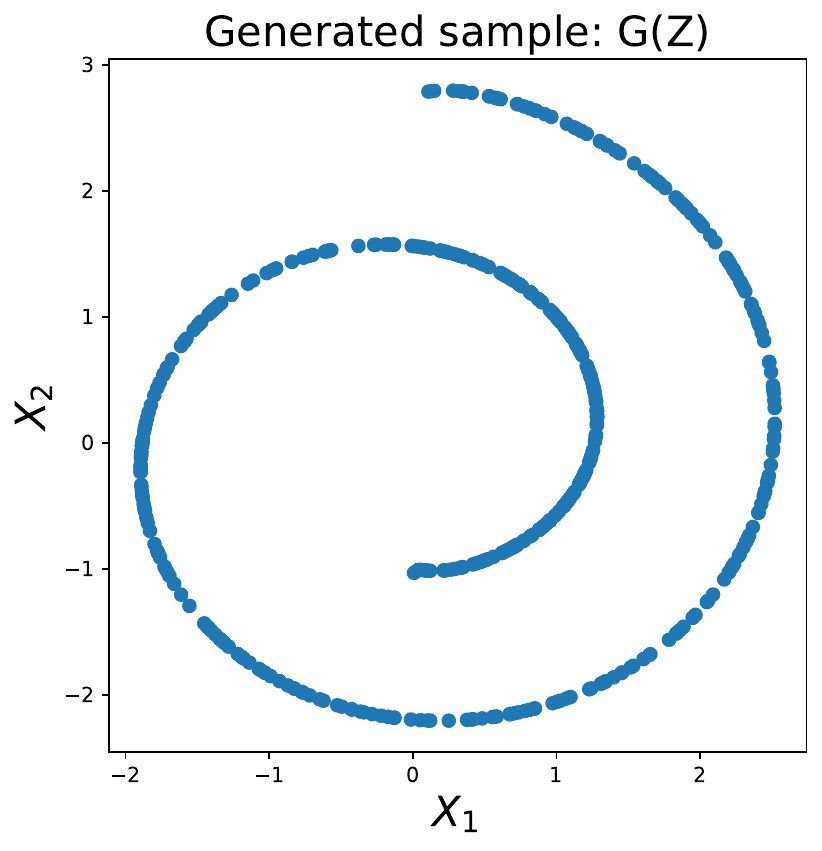}\includegraphics[width=0.25\textwidth]{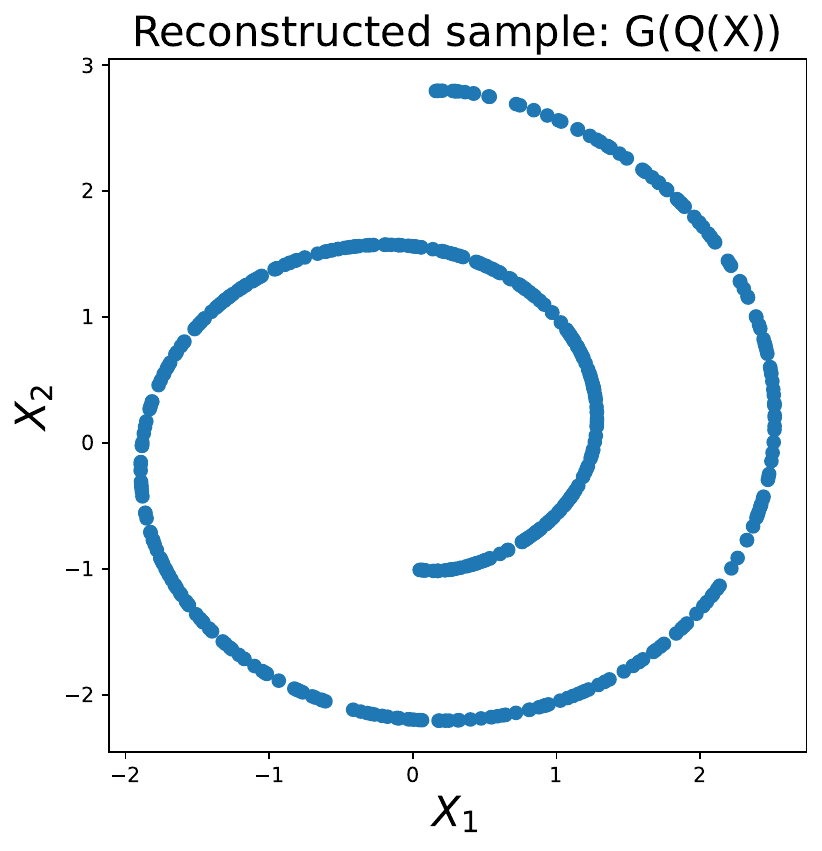}
\par\end{centering}
}
\par\end{centering}
\begin{centering}
\subfloat[S-curve]{\begin{centering}
\includegraphics[width=0.25\textwidth]{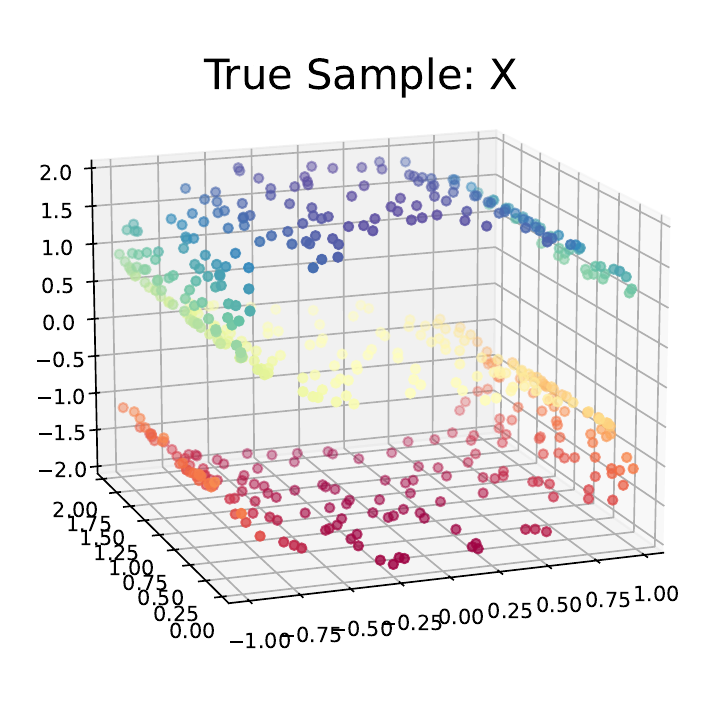} \includegraphics[width=0.232\textwidth]{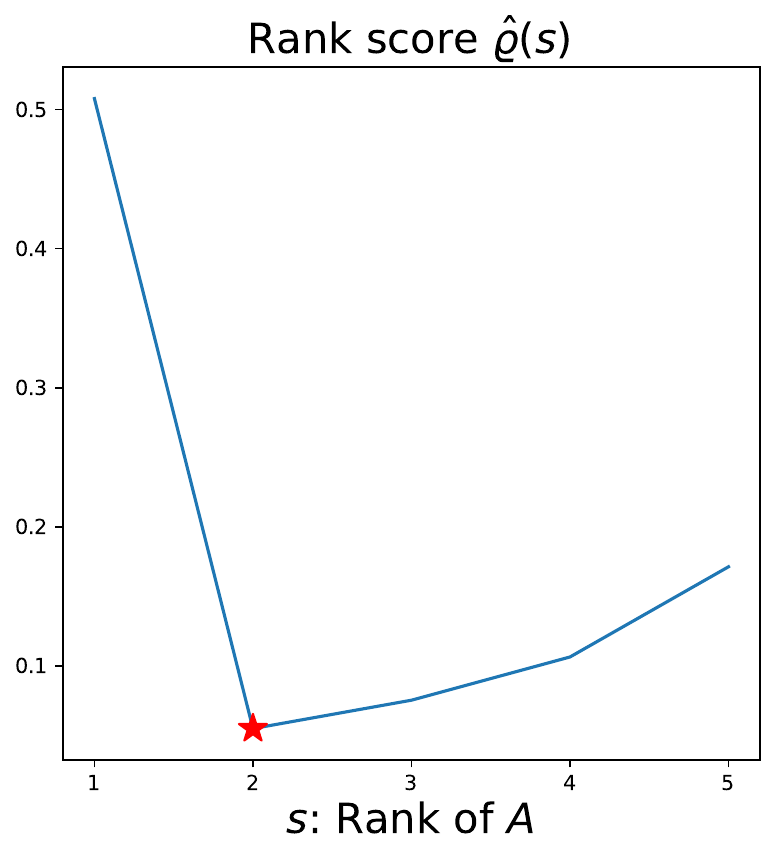}
\includegraphics[width=0.25\textwidth]{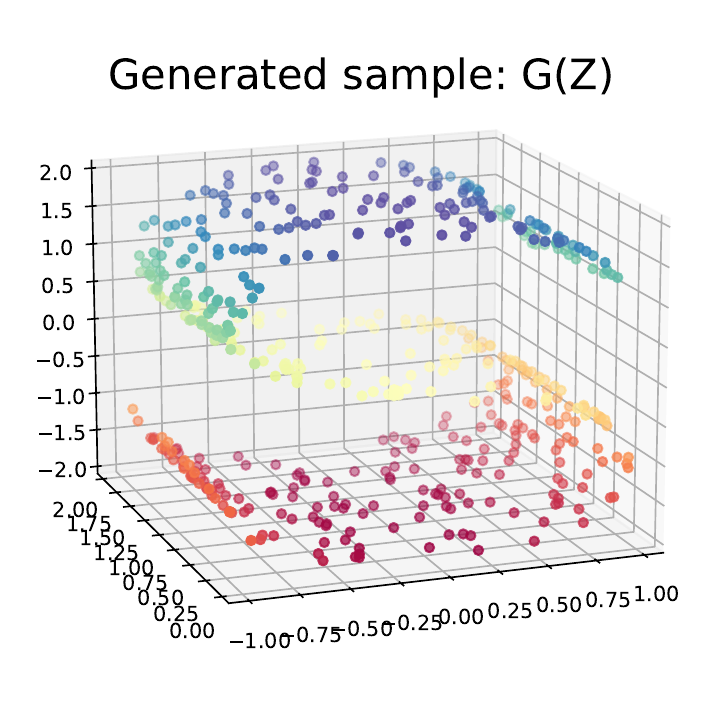}\includegraphics[width=0.25\textwidth]{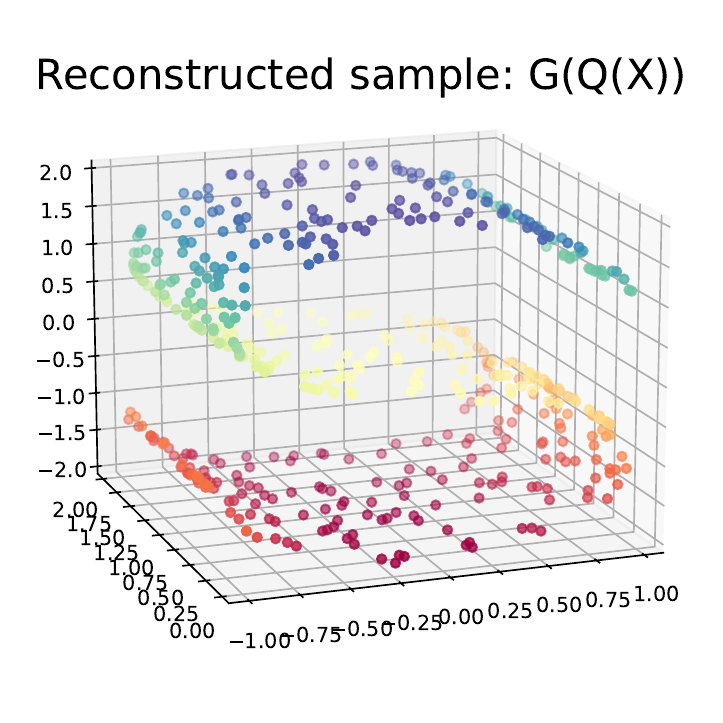}
\par\end{centering}
}
\par\end{centering}
\begin{centering}
\subfloat[Hyperplane]{\begin{centering}
\includegraphics[width=0.25\textwidth]{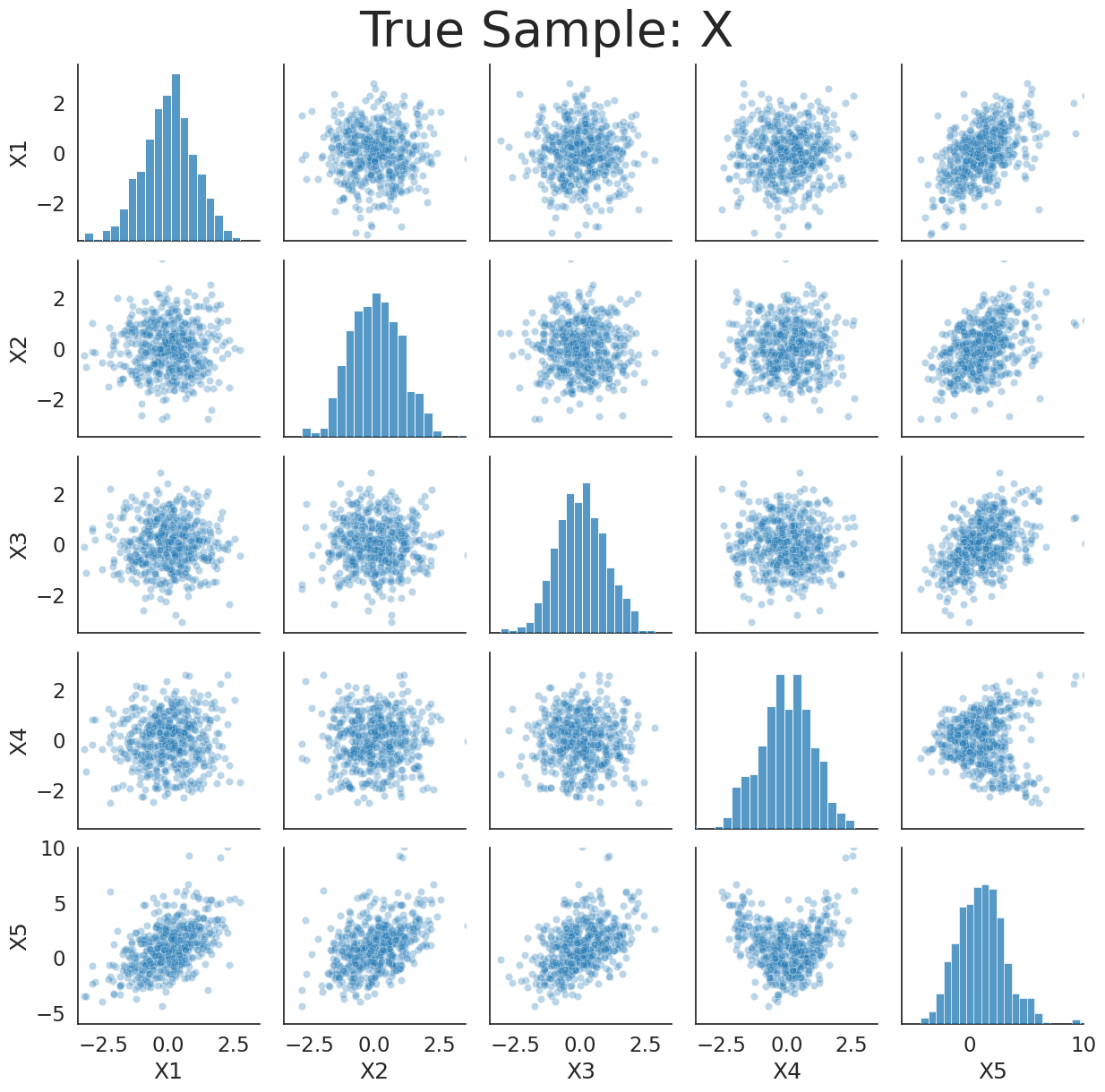} \includegraphics[width=0.232\textwidth]{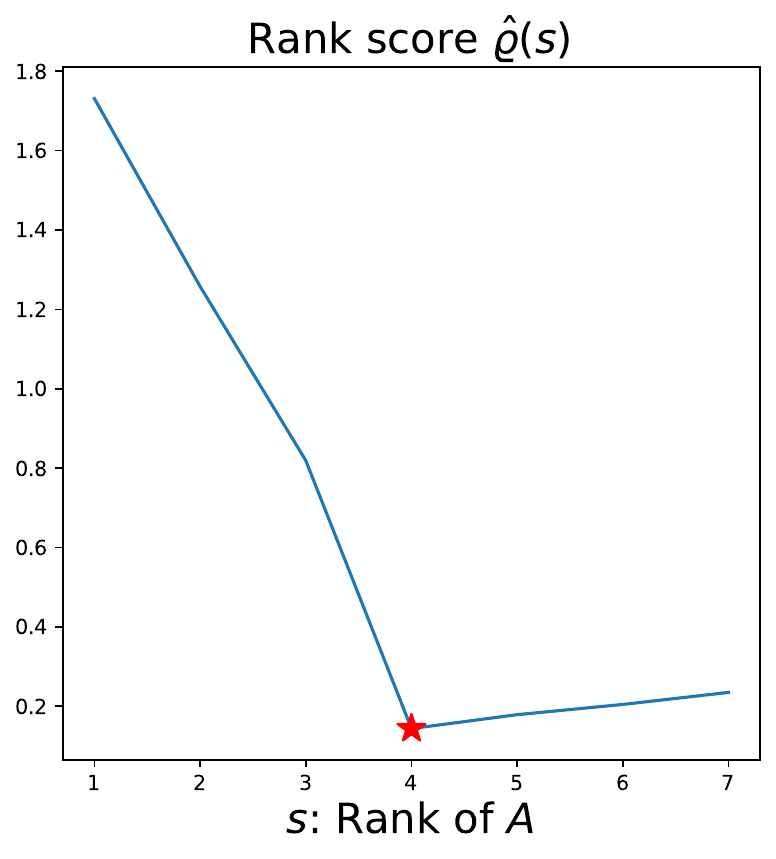}
\includegraphics[width=0.25\textwidth]{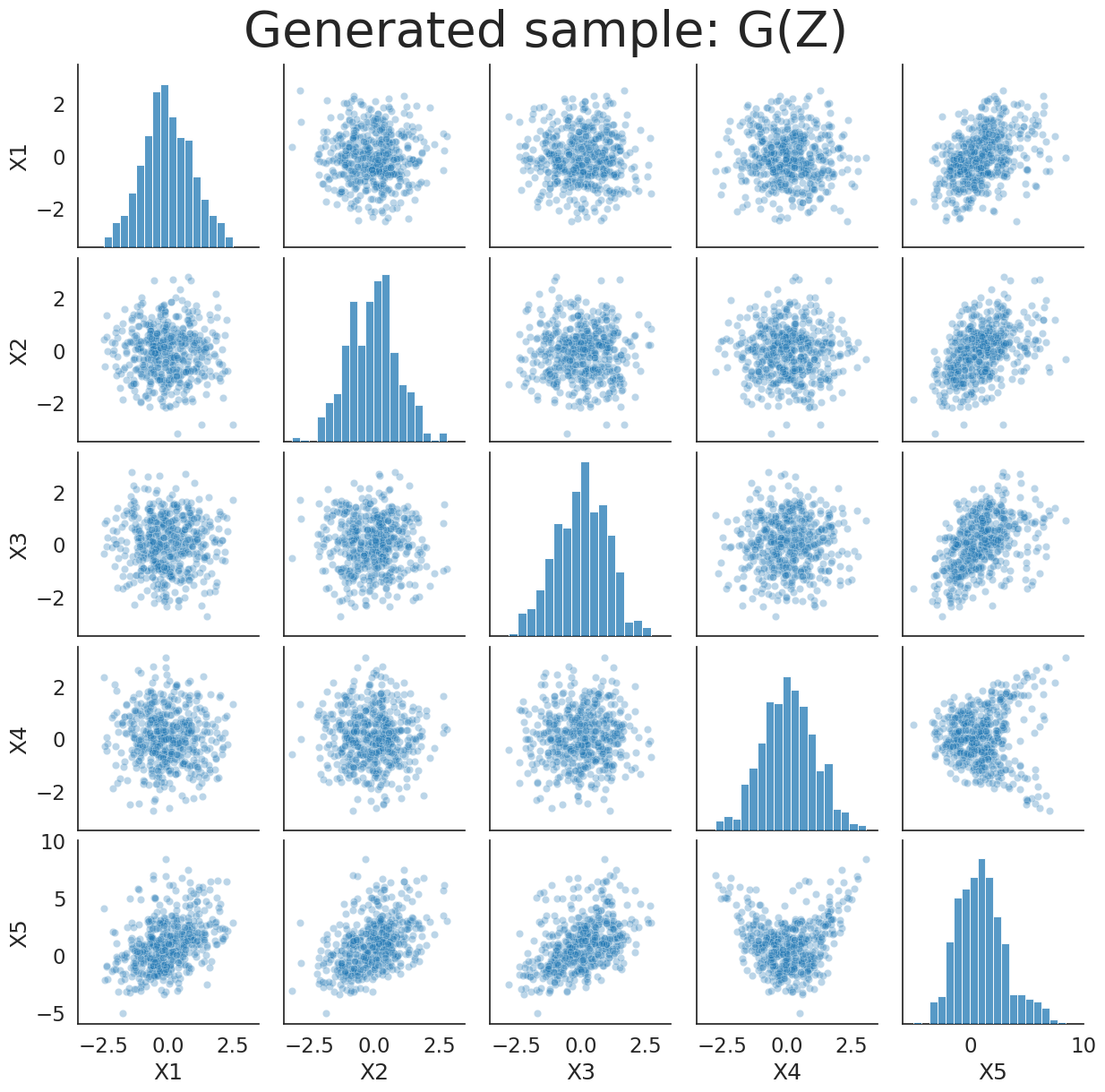}\includegraphics[width=0.25\textwidth]{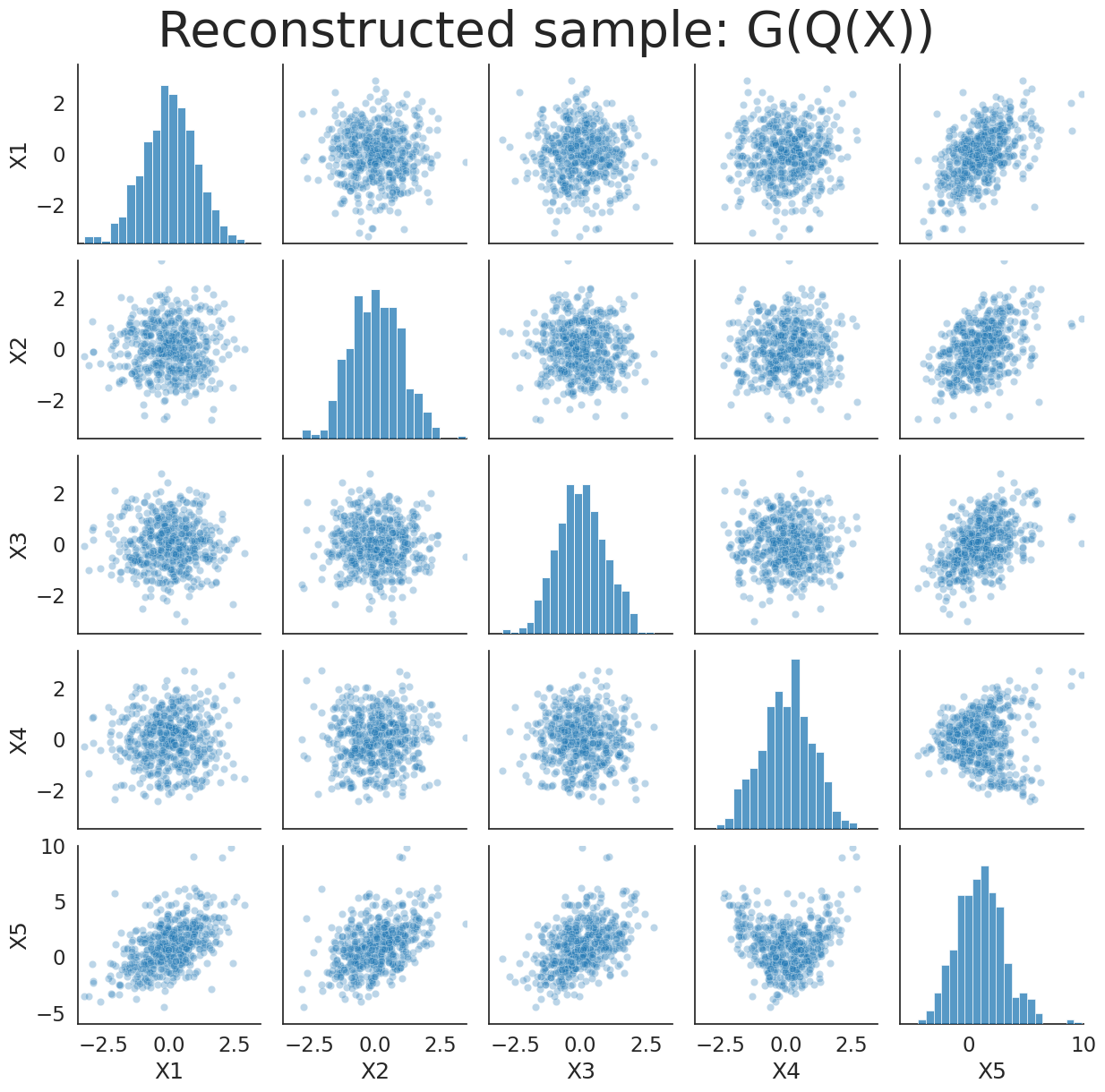}
\par\end{centering}
}
\par\end{centering}
\begin{centering}
\par\end{centering}
\caption{\protect\label{fig:toy_examples}Simulated data supported on manifolds
and the demonstrations of the fitted LWAGN models.}
\end{figure}

We then use Algorithm \ref{alg:lwgan} to estimate the encoder $Q$
and generator $G$ for each dataset. The gradient penalty parameter
is fixed to $\lambda_{\mathrm{GP}}=5$, and the rank regularization
parameter is chosen using the method introduced in Section \ref{subsec:tuning_parameter}.
After each model is trained to convergence, we compute the rank scores
$\hat{\varrho}_{n}(s)$ defined in (\ref{eq:rank_score}) for each
$s$, and their values are plotted in the second column of Figure
\ref{fig:toy_examples}. From the plots we can find that the minimizers
of $\hat{\varrho}_{n}(s)$ are consistent with the corresponding true
intrinsic dimensions, which validate that LWGAN can detect the manifold
dimensions of the data distributions. In Section S2.4 of the supplementary
material, we also design a bootstrap-type experiment to quantify the
uncertainty of the estimation results.

In addition, the third and fourth columns of Figure \ref{fig:toy_examples}
demonstrate the model-generated points $G(Z)\equiv G(AZ_{0})$ and
auto-encoder-reconstructed data $G(Q(X))$, respectively. Clearly,
all of the plots show a high quality of the generated distribution
$P_{G(Z)}$ and a small reconstruction error $\Vert X-G(Q(X))\Vert$.

\subsection{MNIST}

MNIST \citep{lecun1998gradient} is a large dataset of handwritten
0-9 digits commonly used for training various image processing systems.
The training set of MNIST contains 60,000 images, each consisting
of $28\times28$ grey-scale pixels. It was shown that different digits
have different intrinsic dimensions \citep{costa2006determining},
so the distribution of MNIST data may be supported on several disconnected
manifolds with various intrinsic dimensions.

We first train models on digits 1 and 2 separately using a 16-dimensional
latent variable, and the gradient penalty parameter is fixed to $\lambda_{\mathrm{GP}}=5$.
The true sample, estimated rank scores, generated sample, and reconstructed
sample for each digit are given in Figure \ref{fig:mnist_12}. The
rank score plots show that our estimation of the intrinsic dimension
of digit 1 is 8, whereas the estimation of digit 2 is 12. These estimates
are consistent with those of \citet{costa2006determining}, which
states that digit 1 exhibits a dimension estimate between 9 and 10,
and digit 2 has a dimension estimate between 12 and 14.

\begin{figure}[h]
\begin{centering}
\includegraphics[width=0.23\textwidth]{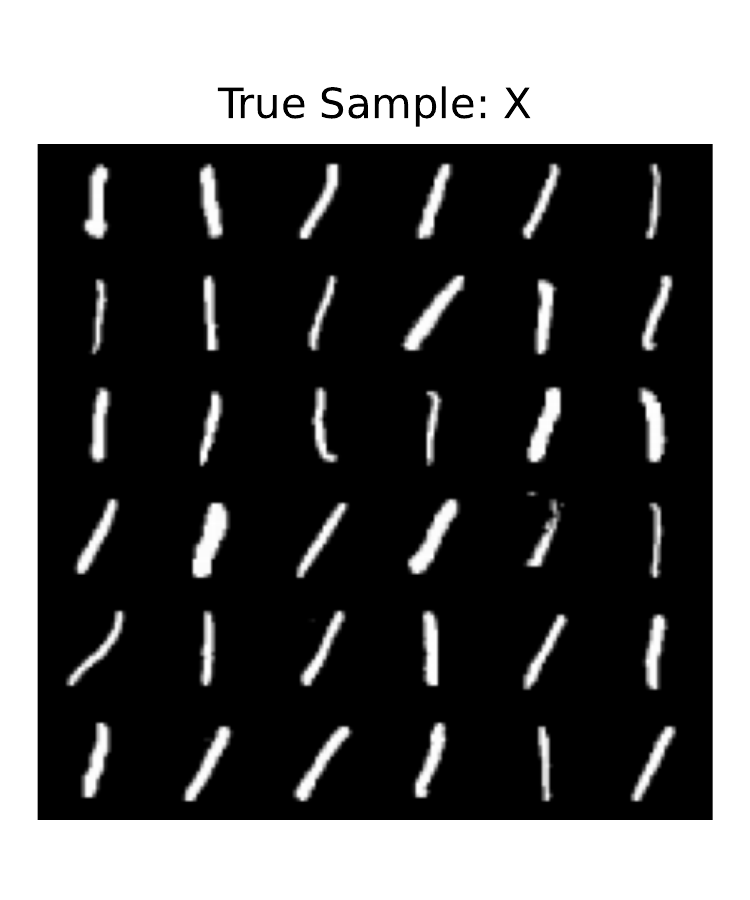} \includegraphics[width=0.23\textwidth]{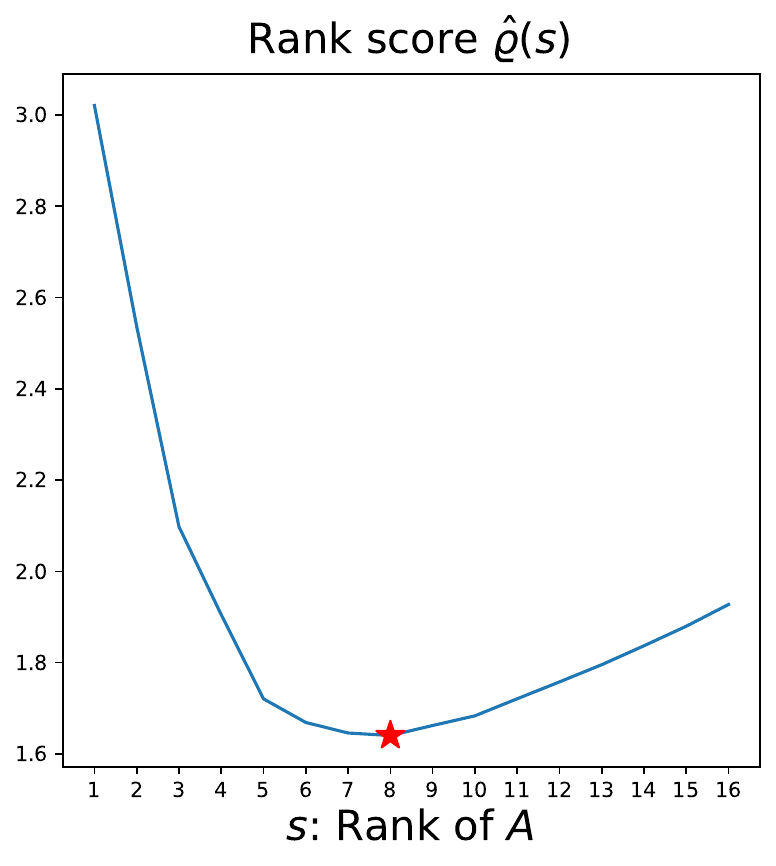}
\includegraphics[width=0.23\textwidth]{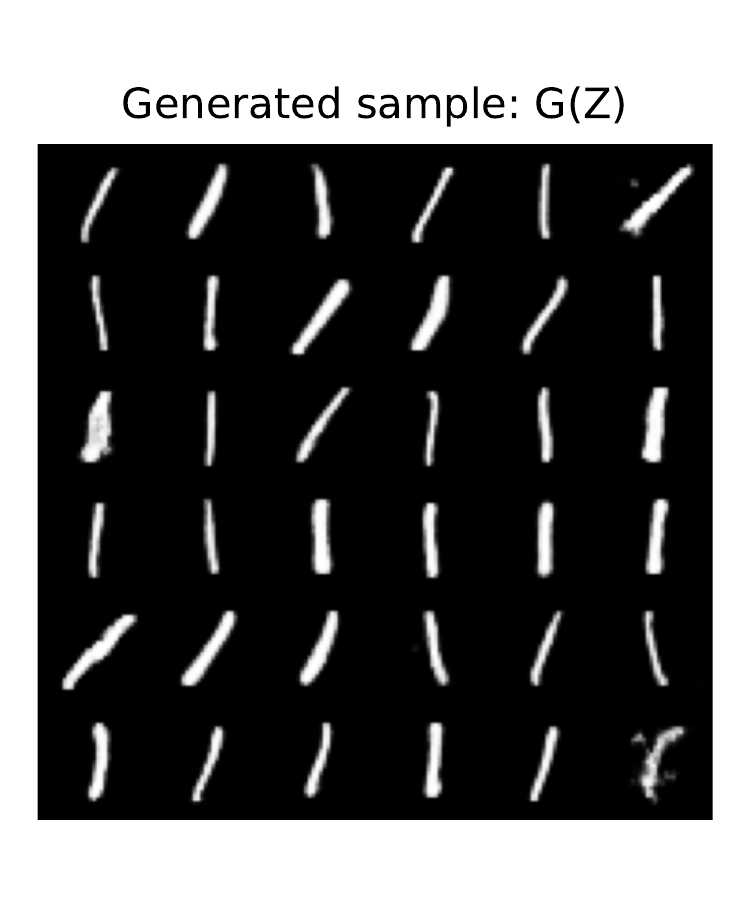} \includegraphics[width=0.23\textwidth]{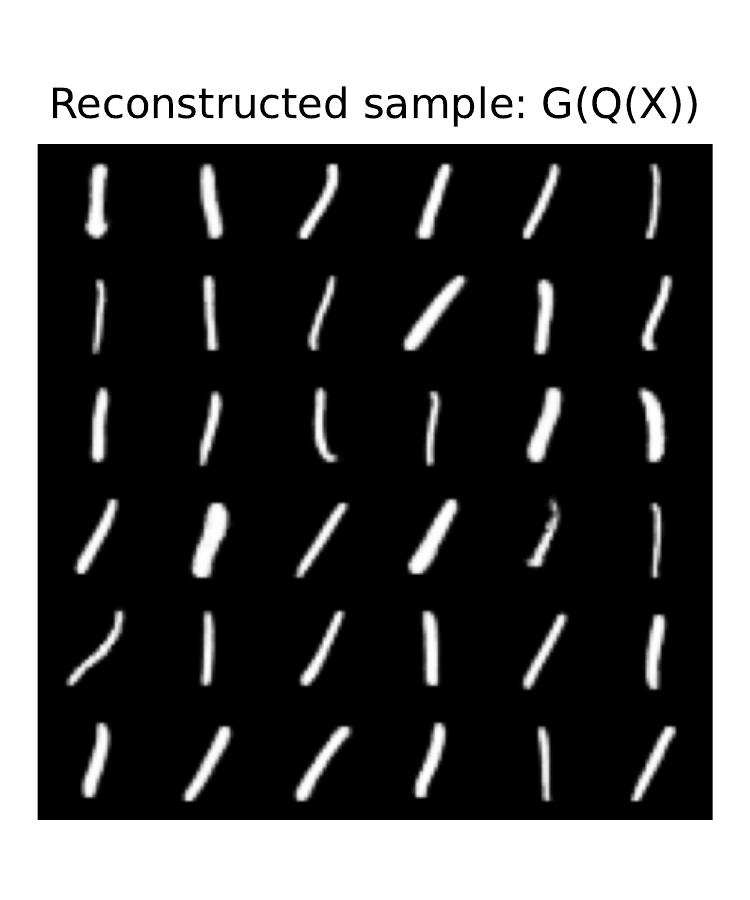}
\par\end{centering}
\begin{centering}
\includegraphics[width=0.23\textwidth]{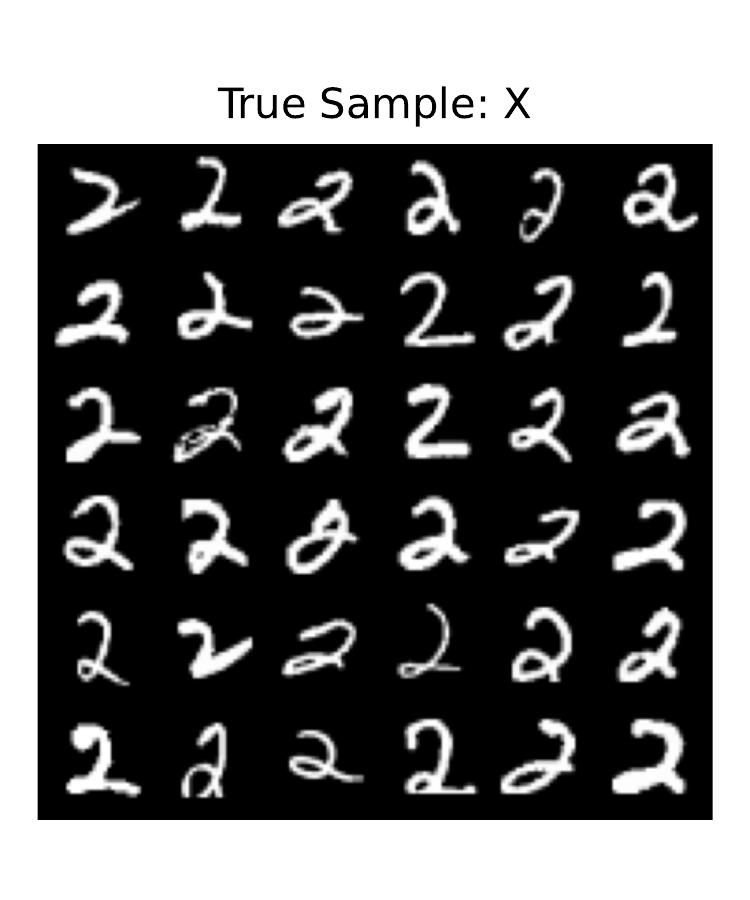} \includegraphics[width=0.23\textwidth]{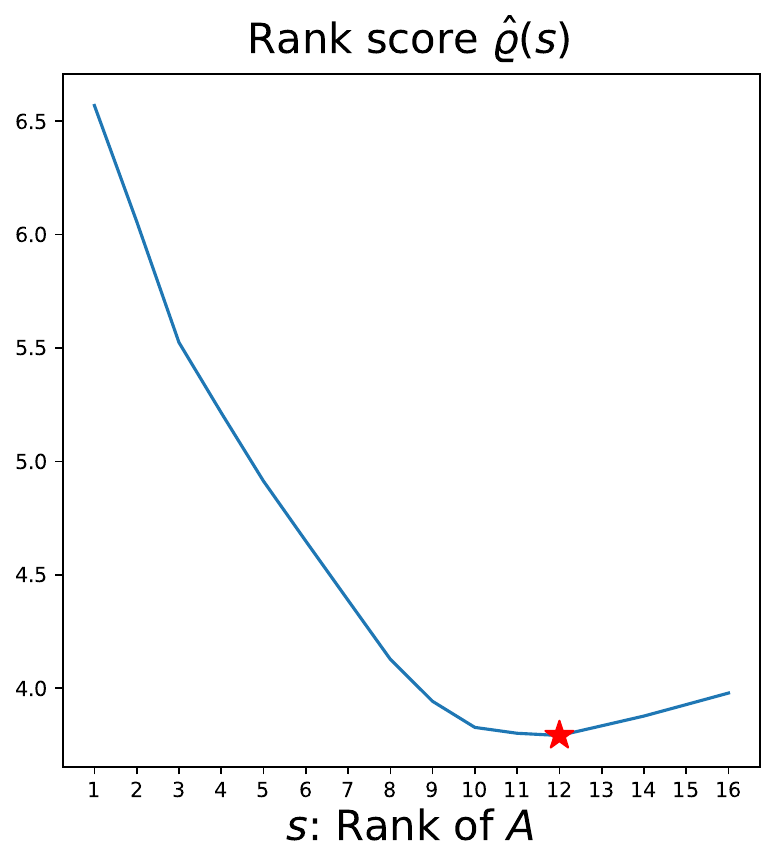}
\includegraphics[width=0.23\textwidth]{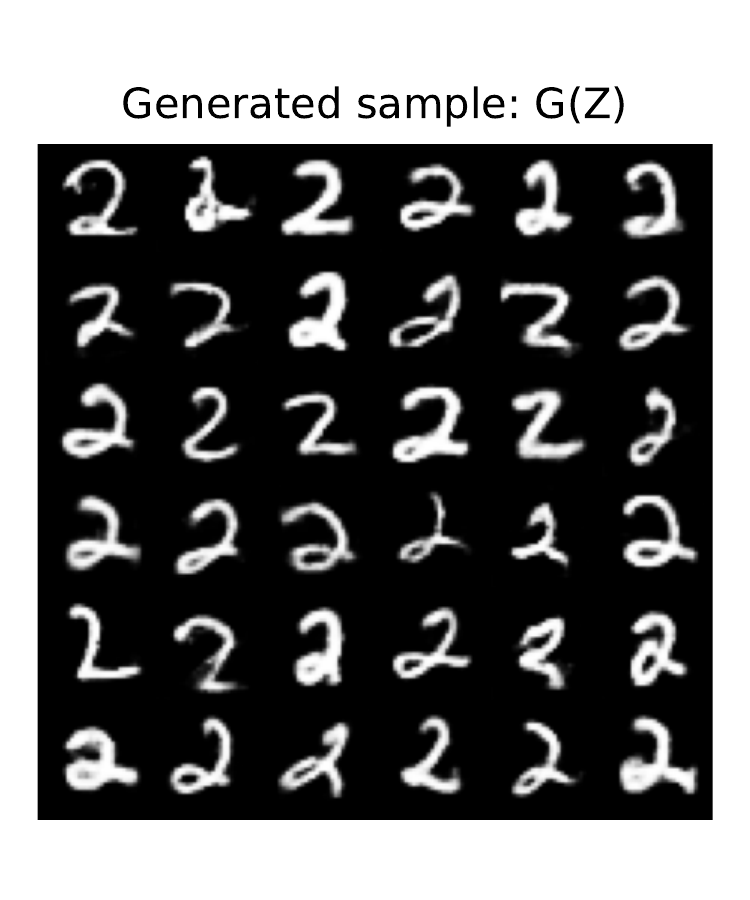} \includegraphics[width=0.23\textwidth]{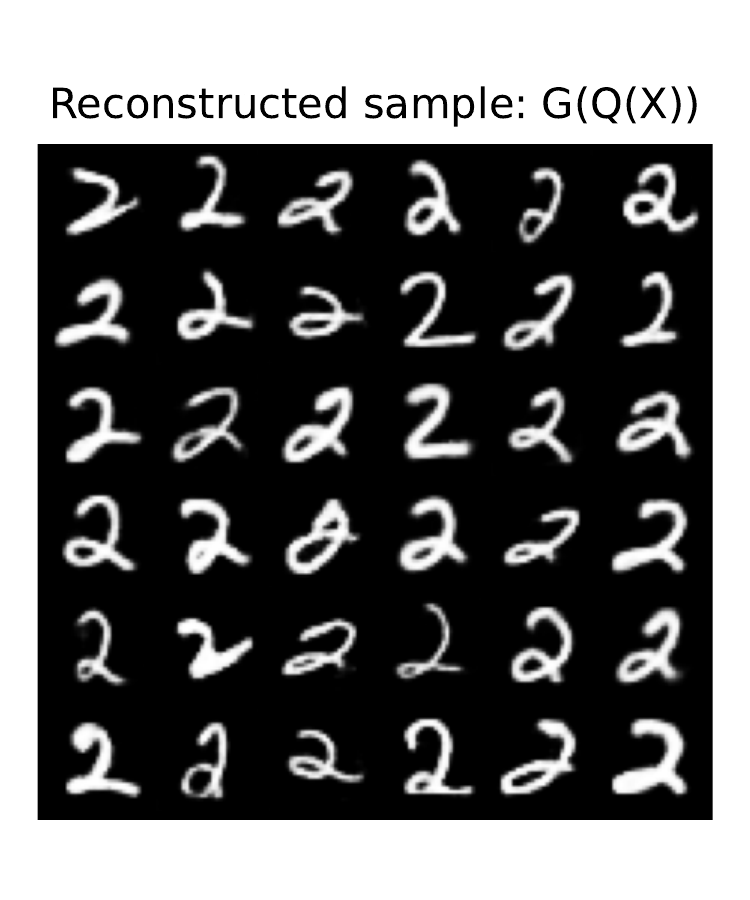}
\par\end{centering}
\caption{\protect\label{fig:mnist_12}Digits 1 (top row) and 2 (bottom row)
of the MNIST data, and the demonstrations of the fitted LWAGN models.}
\end{figure}

\begin{figure}[h]
\begin{centering}
\includegraphics[width=0.235\textwidth]{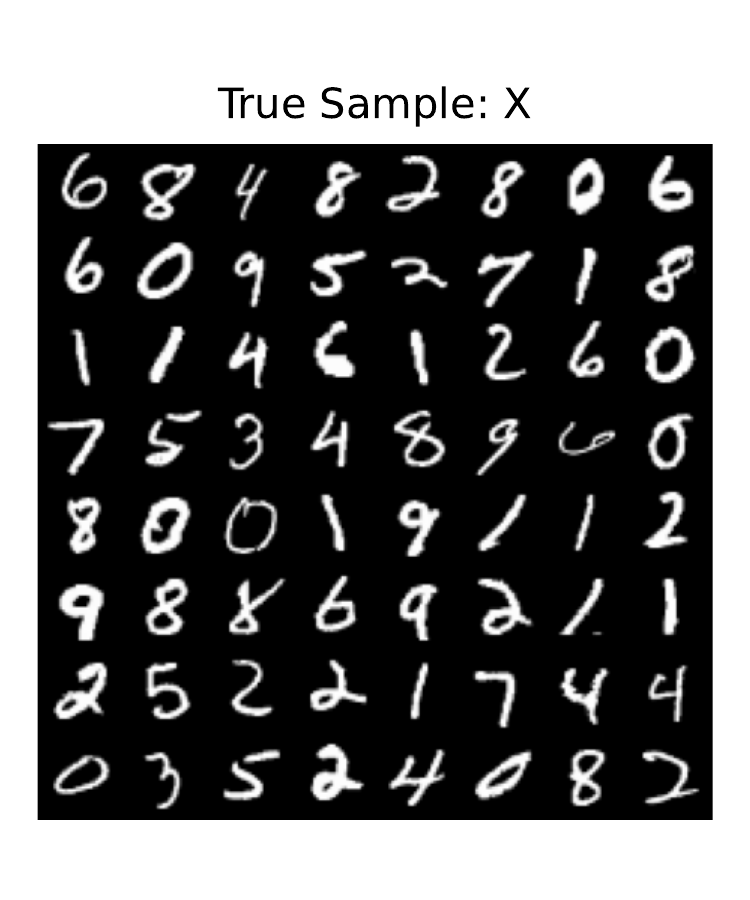}\hspace{0.75em}\includegraphics[width=0.227\textwidth]{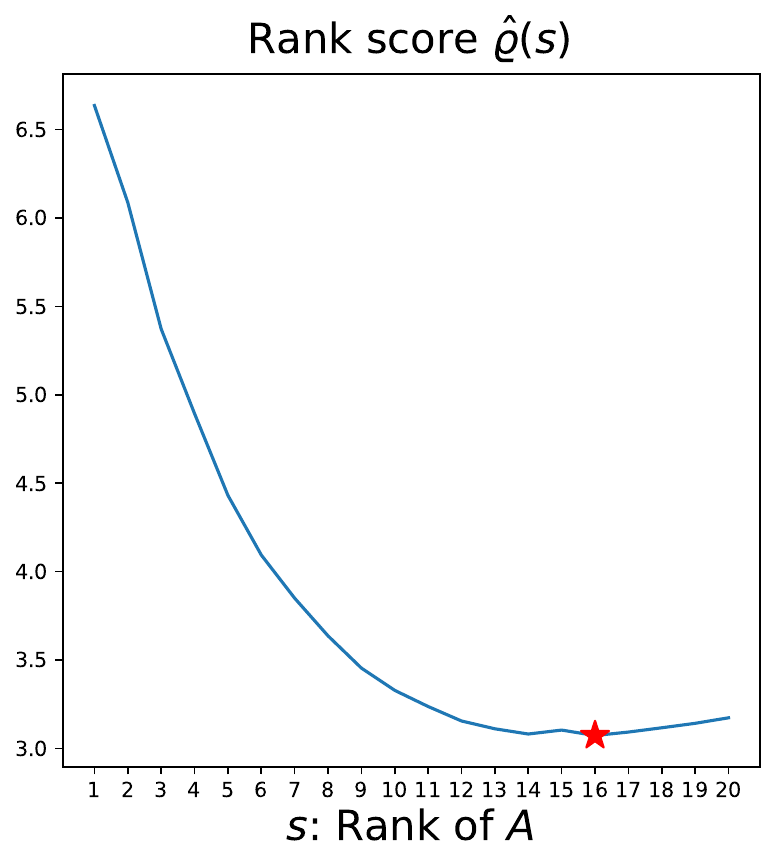}\hspace{0.75em}\includegraphics[width=0.235\textwidth]{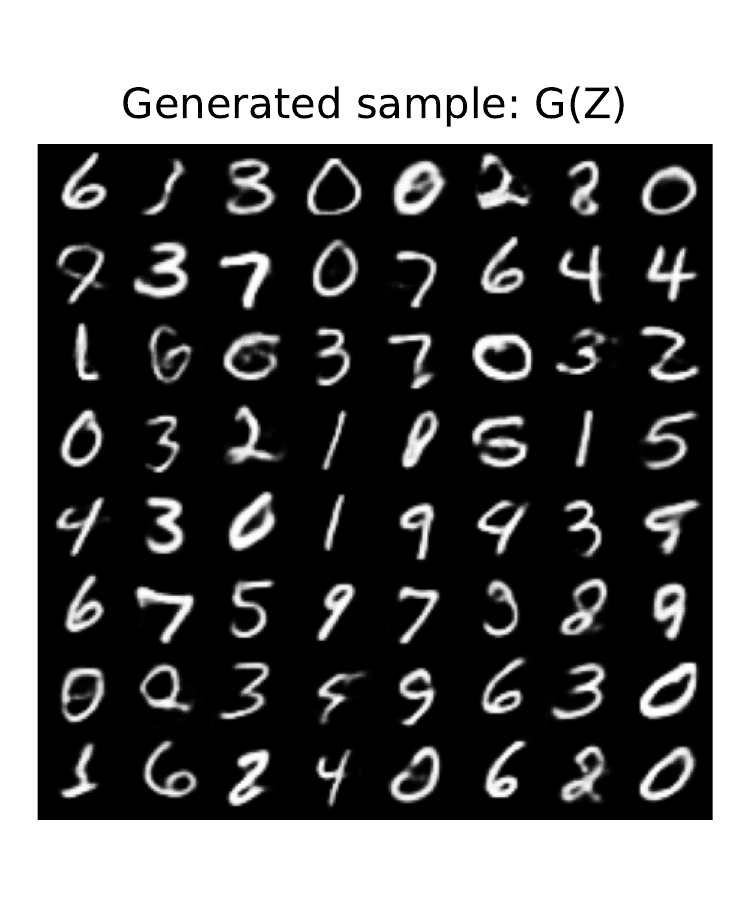}\enskip{}
\par\end{centering}
\begin{centering}
\includegraphics[width=0.235\textwidth]{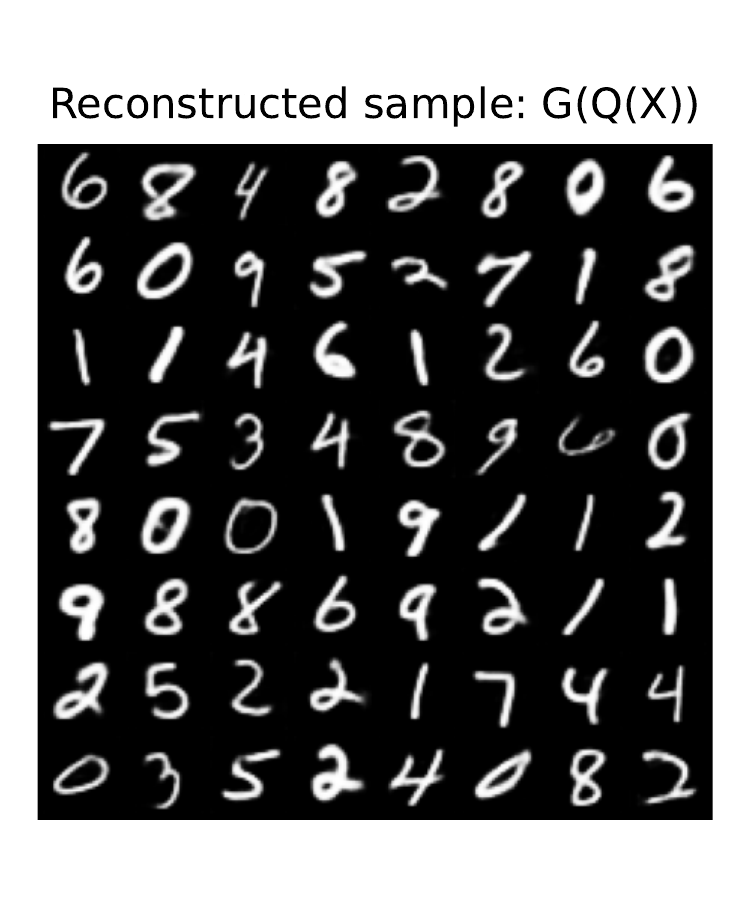}\includegraphics[width=0.51\textwidth]{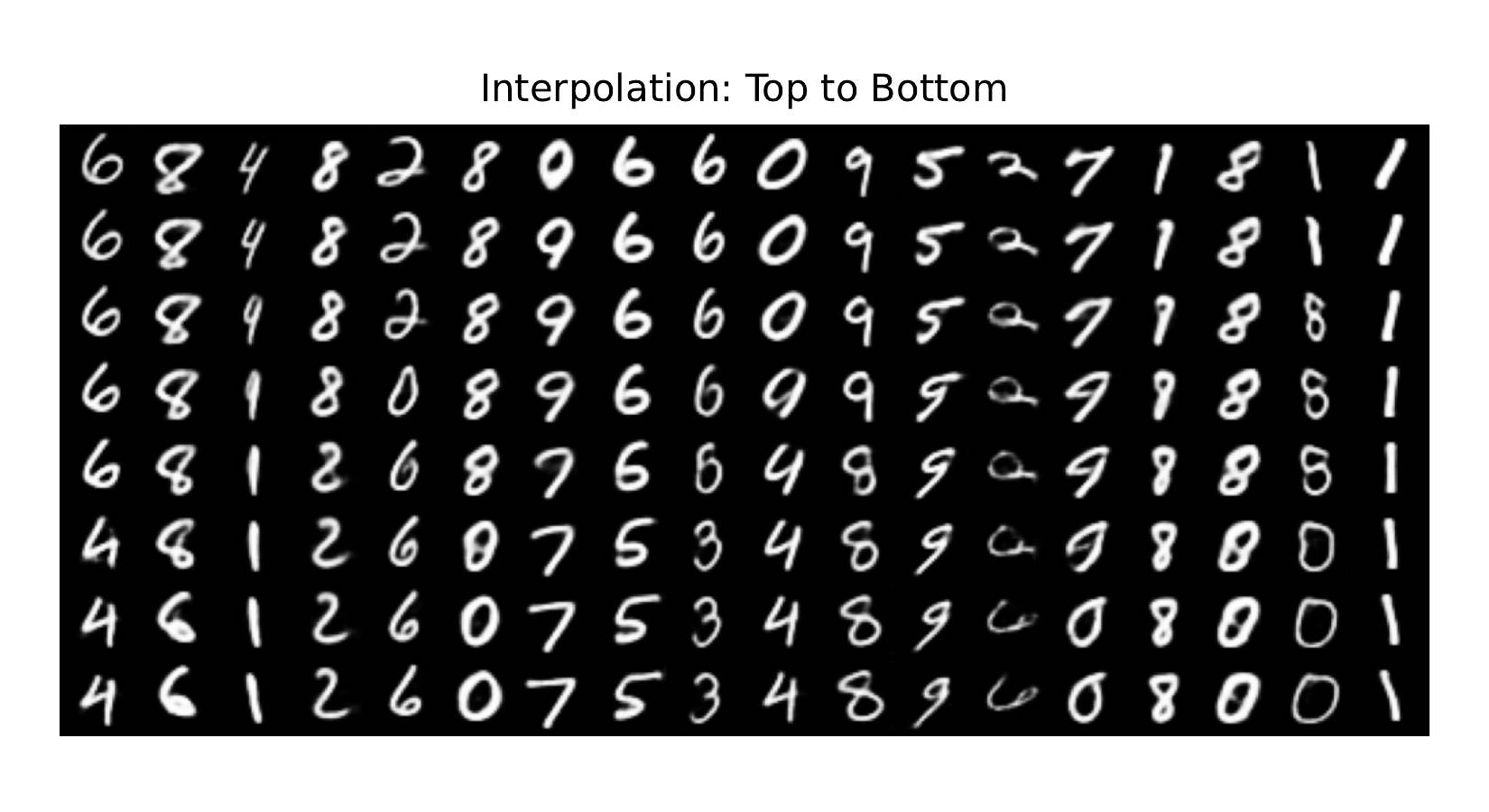}
\par\end{centering}
\caption{\protect\label{fig:mnist_all}MNIST data with all digits and the demonstrations
of the fitted LWAGN model.}
\end{figure}

We further estimate the intrinsic dimension of all digits from MNIST,
using a similar training scheme and parameter setting, except that
the maximum latent dimension is set to 20. The results for the common
tasks same as above are shown in Figure \ref{fig:mnist_all}, which
suggest that the intrinsic dimension of all digits is around 16. Moreover,
we also test the interpolation between two digits in the latent space.
In particular, we sample pairs of testing images $x_{1}$ and $x_{2}$,
and project them onto the latent space using the encoder $Q$, obtaining
latent representations $z_{1}=Q(x_{1})$ and $z_{2}=Q(x_{2})$. We
then linearly interpolate between $z_{1}$ and $z_{2}$, and pass
the intermediary points through the generator $G$ to visualize the
observation-space interpolations. The results are also displayed in
Figure \ref{fig:mnist_all}, which suggest that our model can get
rid of mode collapsing issues.

\subsection{CelebA}

CelebA \citep{liu2015deep} is another benchmark dataset for training
models to generate synthetic images. It is a large-scale face attributes
dataset with 202,599 color celebrity face images, which cover large
pose variations. We preprocess the data by detecting the bounding
box of face region in each image, cropping images to the bounding
boxes, and resizing each image to $64\times64$ pixels. The preprocessing
step has the effect of aligning the face region of each image, after
which we obtain a sample of 16,055 aligned face images. A demonstration
of the preprocessed CelebA images is shown in Figure \ref{fig:celeba_rank}(a).

\begin{figure}
\begin{centering}
\subfloat[True sample]{\begin{centering}
\includegraphics[width=0.405\textwidth]{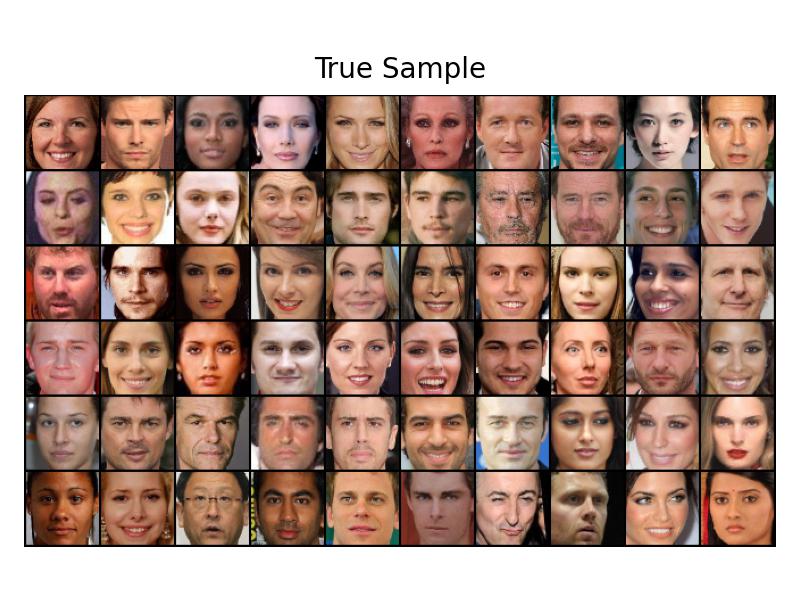}
\par\end{centering}
} \subfloat[Rank scores]{\begin{centering}
\includegraphics[width=0.25\textwidth]{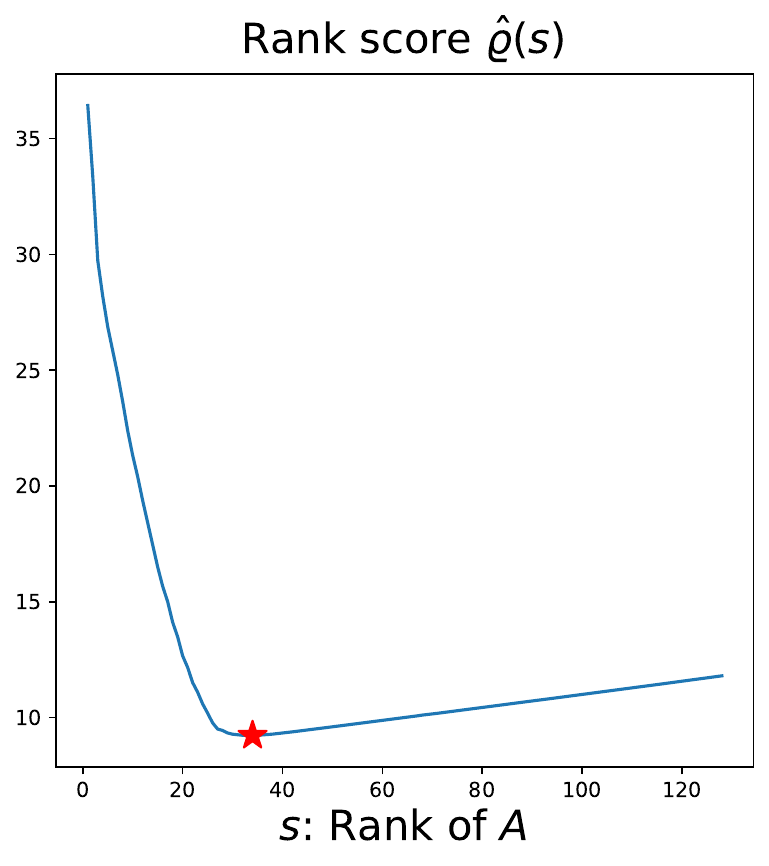}
\par\end{centering}
}
\par\end{centering}
\caption{\protect\label{fig:celeba_rank}True sample of the preprocessed CelebA
dataset and the rank score plot to estimate the intrinsic dimension.}
\end{figure}

We train CelebA using a latent dimension $d=128$, and the rank score
plot in Figure \ref{fig:celeba_rank}(b) shows that the estimated
intrinsic dimension is 34. We then compare LWGAN with other generative
models including WGAN, WAE, and CycleGAN \citep{zhu2017unpaired}
both visually and numerically. In particular, the CycleGAN model introduces
a cycle consistency loss based on the $\ell_{1}$-norm to push $G(Q(X))\approx X$
and $Q(G(Z))\approx Z$.

The generated images from the four models are demonstrated in Figure
\ref{fig:celeba_gen}. For LWGAN, the images are generated as $G(A_{s}Z_{0})$,
$Z_{0}\sim N(0,I_{d})$, where we consider different ranks $s=16,34,128$.
The other three methods generate images as $G(Z)$, $Z\sim N(0,I_{d})$.
We show the reconstructed images $G(Q(X))$ in Figure \ref{fig:celeba_recon},
and demonstrate the interpolation results in Figure \ref{fig:celeba_interp}.
For these two tasks we exclude WGAN, since it does not have an encoder.

\begin{figure}
\begin{centering}
\includegraphics[width=0.25\textwidth]{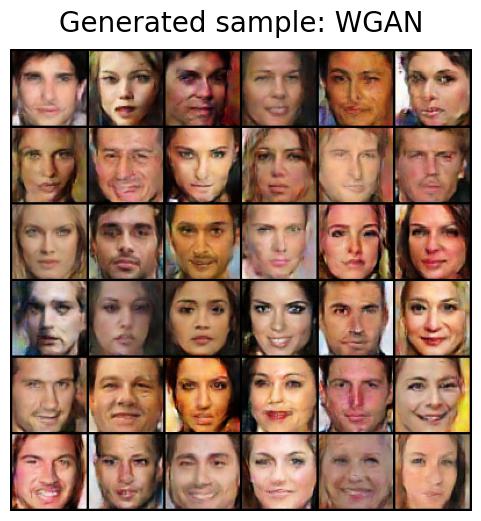} \includegraphics[width=0.25\textwidth]{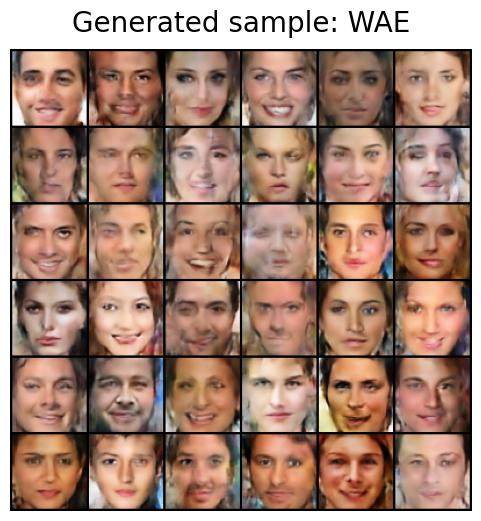}
\includegraphics[width=0.25\textwidth]{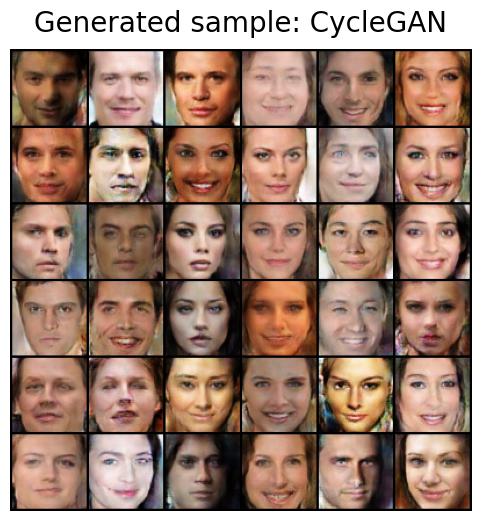}
\par\end{centering}
\begin{centering}
\includegraphics[width=0.25\textwidth]{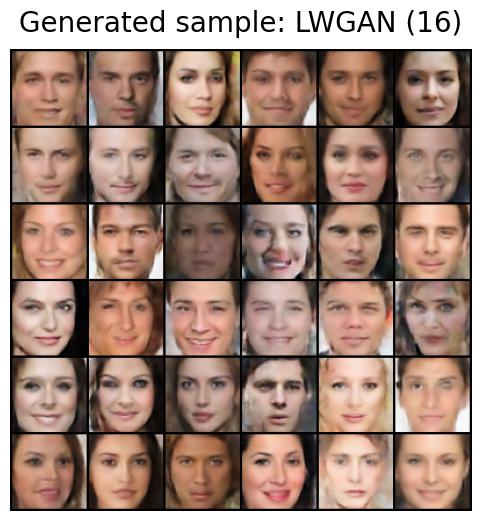}
\includegraphics[width=0.25\textwidth]{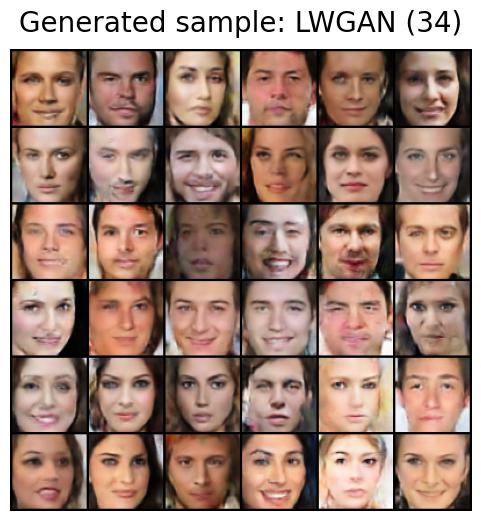}
\includegraphics[width=0.25\textwidth]{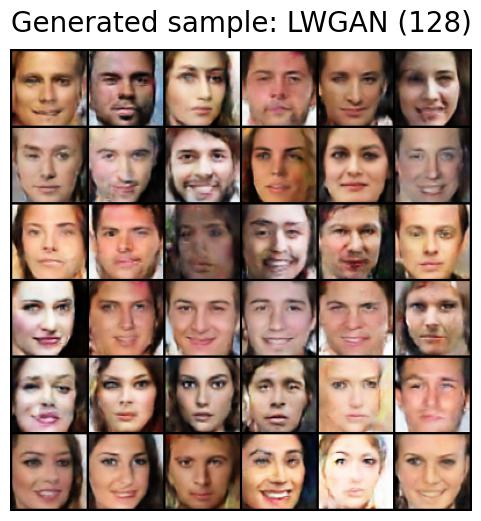}
\par\end{centering}
\caption{\protect\label{fig:celeba_gen}Generated images of WGAN, WAE, CycleGAN,
and LWGAN trained from the CelebA dataset.}
\end{figure}

\begin{figure}
\begin{centering}
\includegraphics[width=0.24\textwidth]{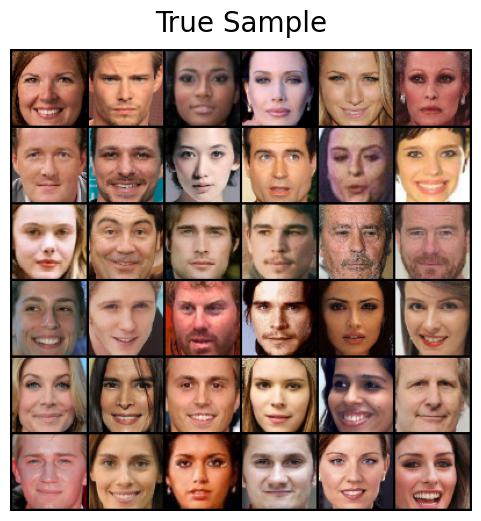} \includegraphics[width=0.24\textwidth]{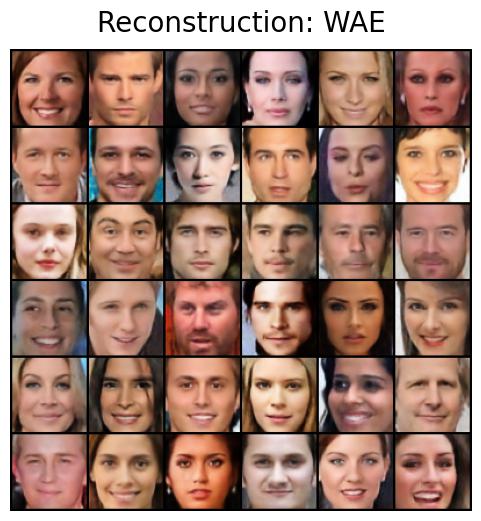}
\includegraphics[width=0.24\textwidth]{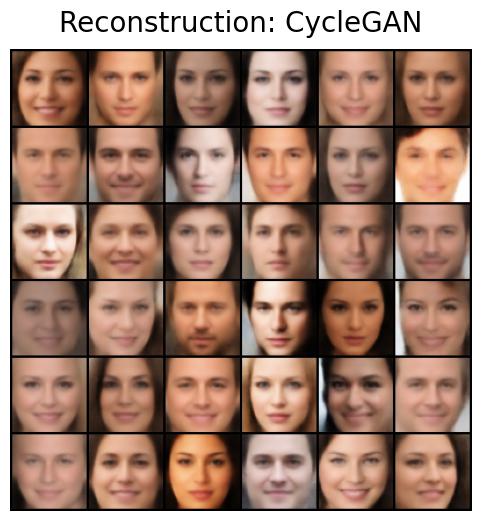}
\includegraphics[width=0.24\textwidth]{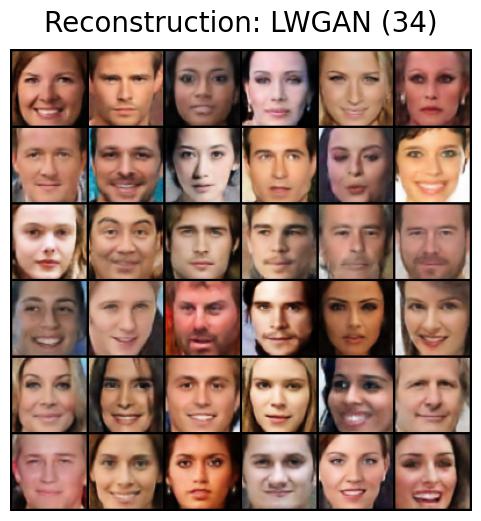}
\par\end{centering}
\caption{\protect\label{fig:celeba_recon}Reconstructed images of CelebA dataset.}
\end{figure}

\begin{figure}
\begin{centering}
\includegraphics[width=0.32\textwidth]{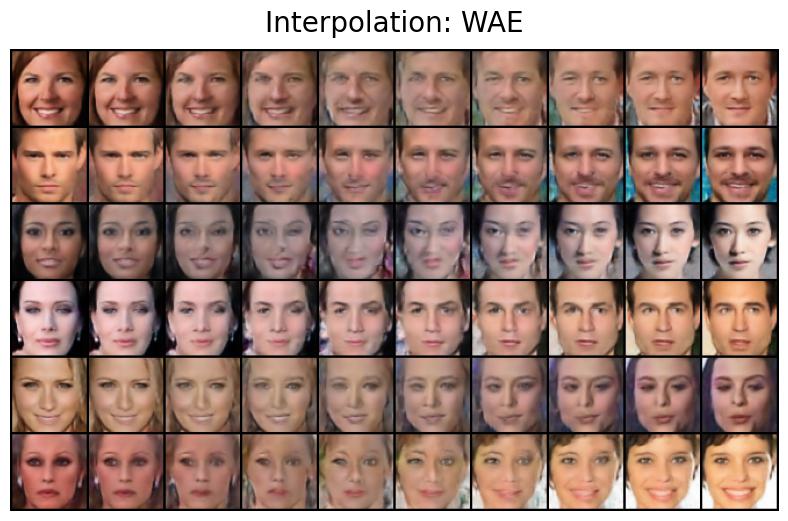}
\includegraphics[width=0.32\textwidth]{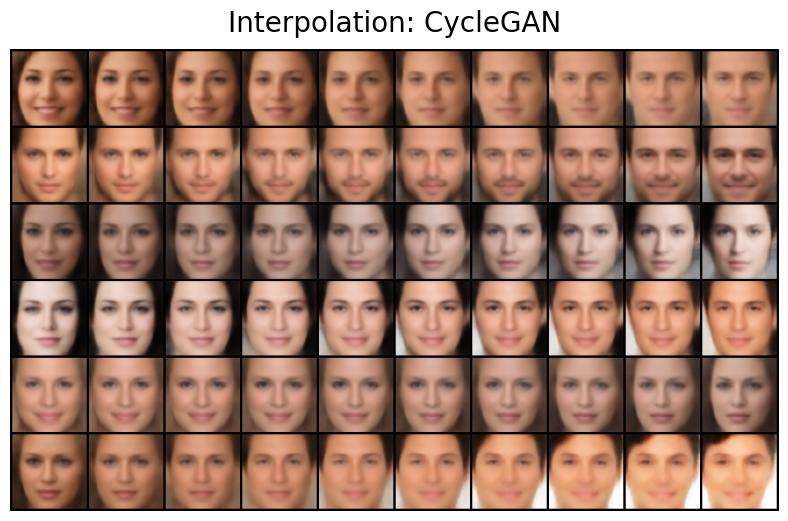}
\includegraphics[width=0.32\textwidth]{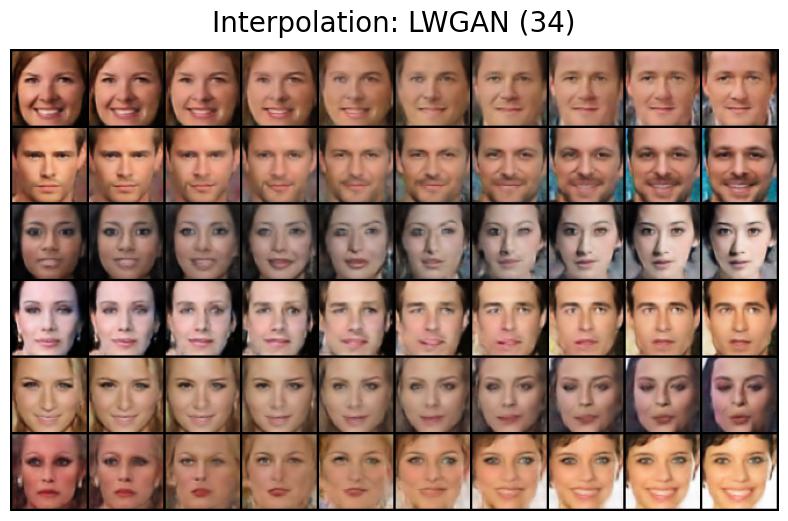}
\par\end{centering}
\caption{\protect\label{fig:celeba_interp}Interpolation of CelebA dataset.}
\end{figure}

Figures \ref{fig:celeba_gen} and \ref{fig:celeba_recon} show that
LWGAN is able to generate high-quality images as long as the rank
of $A_{s}$ is larger than or equal to the intrinsic dimension, and
an insufficient rank results in a low quality. This validates our
claims in Theorem \ref{thm:existence_qg} and Corollary \ref{cor:insufficient_dim}.
The generated images from the other three models have different levels
of blur and distortion, especially for WAE. In Figure \ref{fig:celeba_recon},
we find that WAE has a good reconstruction quality, so its low generation
quality may be due to the dimension mismatch between $P_{Q(X)}$ and
$P_{Z}$. On the other hand, CycleGAN has a better generation quality
than WAE, but it has a large reconstruction error. As a result, its
reconstructed images are blurry, and it also loses many details in
the interpolated images.

Finally, we numerically compare these methods with respect to three
metrics: the inception scores (IS, \citealp{salimans2016improved}),
the Fr\'echet inception distances (FID, \citealp{heusel2017gans}),
and the reconstruction errors. IS uses a pre-trained Inception-v3
model to predict the class probabilities for each generated image,
and FID improves IS by directly comparing the statistics of generated
samples to real samples. For IS, higher scores are better, and for
FID, lower is better. The reconstruction error is used to evaluate
whether the model generates meaningful latent codes and has the capacity
to recover the original information. The detailed descriptions of
these three metrics are provided in Section S2.2 of the supplementary
material.

Table \ref{tab:compare_metric} shows the values of these metrics
on each trained model. The numerical results are consistent with our
qualitative findings in Figure \ref{fig:celeba_gen} to Figure \ref{fig:celeba_interp}.
Specifically, WGAN and LWGAN have relatively higher generation quality
than the other two models, measured by IS and FID. WAE has a small
reconstruction error, but its generation quality is low. On the contrary,
CycleGAN has moderate generation quality but large reconstruction
errors. For LWGAN, an insufficient rank $s$ results in poor generation
and reconstruction quality, but models with ranks larger than $\hat{r}=34$
have good overall performance. We can also find that with the estimated
rank $s=\hat{r}=34$, LWGAN can achieve similar performance as the
case of $s=d=128$, but choosing $s$ to be the intrinsic dimension
can greatly reduce the model complexity without sacrificing the model
accuracy. Overall, the proposed LWGAN is able to produce meaningful
latent code and generate high-quality images at the same time, and
it is the only one among all the methods compared that is capable
of detecting the intrinsic dimension of data distributions.

\begin{table}
\caption{\protect\label{tab:compare_metric}Numerical comparison of LWGAN,
CycleGAN, WAE, and WGAN. The values in the parentheses are standard
deviations.}

\begin{centering}
\begin{tabular}{cccc}
\toprule 
Methods & IS $\uparrow$ & FID $\downarrow$ & Reconstruction error $\downarrow$\tabularnewline
\midrule
\midrule 
True & 2.07 (0.04) & 2.77 & --\tabularnewline
\midrule 
LWGAN, $s=16$ & 1.62 (0.02) & 40.98 & 14.95 (3.59)\tabularnewline
\midrule 
LWGAN, $s=\hat{r}=34$ & 1.66 (0.03) & 32.79 & 8.19 (1.54)\tabularnewline
\midrule 
LWGAN, $s=64$ & 1.70 (0.03) & \textbf{31.21} & 8.15 (1.54)\tabularnewline
\midrule 
LWGAN, $s=128$ & \textbf{1.71 (0.03)} & 31.56 & 8.15 (1.54)\tabularnewline
\midrule 
CycleGAN & 1.54 (0.02) & 42.76 & 20.73 (4.40)\tabularnewline
\midrule 
WAE & 1.59 (0.04) & 51.10 & \textbf{7.53 (1.35)}\tabularnewline
\midrule 
WGAN & 1.50 (0.03) & 31.60 & --\tabularnewline
\bottomrule
\end{tabular}
\par\end{centering}
\end{table}

\section{Conclusion}

\label{sec:conclusion}

We have developed a novel LWGAN framework that enables us to adaptively
learn the intrinsic dimension of data distributions supported on manifolds.
This framework fuses WAE and WGAN in a principled way, so that the
model learns a latent normal distribution whose rank is consistent
with the dimension of the data manifold. We have provided theoretical
guarantees on the generalization error bound, estimation consistency,
and dimension consistency of LWGAN. Numerical experiments have shown
that the intrinsic dimension of the data can be successfully detected
under several settings on both synthetic datasets and benchmark datasets,
and the model-generated samples are of high quality.

A potential future direction of LWGAN is to investigate a more general
scenario where the generator $G$ is stochastic. This can be achieved
by adding an extra noise vector to the input of $G$. In addition,
it is interesting to incorporate the stochastic LWGAN into some more
recent GAN modules such as BigGAN \citep{brock2018large}, so that
high-resolution and high-fidelity images can be produced along with
the estimation of the intrinsic dimension.

The new LWGAN framework has many potential applications in other fields.
For example, LWGAN can be used for structural estimation, which is
a useful tool to quantify economic mechanisms and learn about the
effects of policies that are yet to be implemented \citep{wei2022estimating}.
An economic structural model specifies some outcome $g(x,\varepsilon;\theta)$
that depends on a set of observables $x$, unobservables $\varepsilon$,
and structural parameters $\theta$. The function $g$ can represent
a utility maximization problem or other observed outcomes. Under many
scenarios, the likelihood function and moment functions are not easy
to obtain. This makes the maximum likelihood estimator and generalized
method of moments infeasible, and other simulation-based methods can
cause additional computational burden. By training LWGAN on the data
from $(x,y)$, we are able to adaptively learn the data representation
by the encoder $Q$, instead of using moments. At the same time, we
are able to boost the sample size by the generator $G$. By comparing
the generated data $(x,g(x,\varepsilon;\theta))$ and the observed
data $(x,y)$ in the latent space, we can estimate $\theta$ efficiently.

\setstretch{1.2}

\bibliographystyle{apalike}
\bibliography{ref}

\setstretch{1.8}

\appendix

\section{Proof of Theorems}

\subsection{Proof of Theorem \ref{thm:existence_qg}}

Let $\tilde{X}=\varphi(X)=(\tilde{X}_{1},\ldots,\tilde{X}_{r})^{T}$,
and then by Definition \ref{def:distr_manifold}, $\tilde{X}$ is
a continuous random vector on $\mathbb{R}^{r}$. We then seek a mapping
$Q$ such that the transformed variable $Q(\tilde{X})$ follows the
standard multivariate normal distribution $N(0,I_{r})$.

Denote the marginal c.d.f.'s of $\tilde{X}$ as $F_{i}(x)=\mathbb{P}(\tilde{X}_{i}\le x)$,
$i=1,\ldots,r$. By applying the probability integral transformation
to each component, the random vector
\[
Q_{1}(\tilde{X})=\left(F_{1}(\tilde{X}_{1}),\ldots,F_{r}(\tilde{X}_{r})\right)\coloneqq(U_{1},\ldots,U_{r})
\]
has uniformly distributed marginals. Clearly, $Q_{1}$ has a continuous
inverse:
\[
Q_{1}^{-1}(U_{1},\ldots,U_{r})=\left(F_{1}^{-1}(U_{1}),\ldots,F_{r}^{-1}(U_{r})\right),
\]
indicating that $Q_{1}:\mathbb{R}^{r}\rightarrow\mathbb{R}^{r}$ is
a homeomorphism.

Let $C:[0,1]^{r}\rightarrow[0,1]$ be the copula of $\tilde{X}$,
which is defined as the joint c.d.f. of $\ensuremath{(U_{1},\ldots,U_{r})}$:
\[
C(u_{1},\ldots,u_{r})=\mathbb{P}\left(U_{1}\le u_{1},\ldots,U_{r}\le u_{r}\right).
\]
Accordingly, let $c(u_{1},\ldots,u_{r})=\partial^{r}C(u_{1},\ldots,u_{r})/\partial u_{1}\cdots\partial u_{r}$
be the copula density. The copula $C$ contains all information of
the dependence structure among the components of $\tilde{X}$, and
the joint c.d.f. of $\tilde{X}$ is $C\left(F_{1}(\tilde{x}_{1}),\ldots,F_{r}(\tilde{x}_{r})\right)$.
Denote the conditional c.d.f. of $U_{k}$ given $U_{1},\ldots,U_{k-1}$
by
\[
C_{k}(u_{k}|u_{<k})\coloneqq C_{k}(u_{k}|u_{1},\ldots,u_{k-1})=\mathbb{P}(U_{k}\le u_{k}|U_{1}=u_{1},\ldots,U_{k-1}=u_{k-1}),\quad k=2,\ldots,r,
\]
as well as the conditional density $c_{k}(u_{k}|u_{<k})=\partial C_{k}(u_{k}|u_{<k})/\partial u_{k}$.
Then clearly,
\[
c(u_{1},\ldots,u_{r})=c_{1}(u_{1})c_{2}(u_{2}|u_{<2})\cdots c_{r}(u_{r}|u_{<r}).
\]

Define the mapping $Q_{2}:\mathbb{R}^{r}\rightarrow\mathbb{R}^{r}$
as $Q_{2}(U_{1},\ldots,U_{r})=(\tilde{U}_{1},\ldots,\tilde{U}_{r})$,
where
\[
\begin{cases}
\tilde{U}_{1}=U_{1}\coloneqq C_{1}(U_{1}),\\
\tilde{U}_{k}=C_{k}(U_{k}|U_{<k}), & k=2,\ldots,r.
\end{cases}
\]
We can readily show that $\tilde{U}_{1},\ldots,\tilde{U}_{r}$ are
independent uniform random variables, since
\begin{align*}
\mathbb{P}\left(\tilde{U}_{1}\le\tilde{u}_{1},\ldots,\tilde{U}_{r}\le\tilde{u}_{r}\right) & =\int_{C_{1}(u_{1})\le\tilde{u}_{1}}\cdots\int_{C_{r}(u_{r}|u_{<r})\le\tilde{u}_{r}}c(u_{1},\ldots,u_{r})\mathrm{d}u_{1}\cdots\mathrm{d}u_{r}\\
 & =\int_{C_{1}(u_{1})\le\tilde{u}_{1}}\cdots\int_{C_{r}(u_{r}|u_{<r})\le\tilde{u}_{r}}\mathrm{d}C_{1}(u_{1})\cdots\mathrm{d}C_{r}(u_{r}|u_{<r})\\
 & =\int_{0}^{\tilde{u}_{1}}\cdots\int_{0}^{\tilde{u}_{r}}\mathrm{d}z_{1}\cdots\mathrm{d}z_{r}=\prod_{k=1}^{r}\tilde{u}_{k}.
\end{align*}
It is easy to verify that $Q_{2}$ is also a homeomorphism.

Next, let $Z=Q_{3}(\tilde{U}_{1},\ldots,\tilde{U}_{r})=(\Phi^{-1}(\tilde{U}_{1}),\ldots,\Phi^{-1}(\tilde{U}_{r}))$,
where $\Phi^{-1}$ is the inverse c.d.f. of the standard normal distribution,
and then $Z\sim N(0,I_{r})$. So by defining $Q=Q_{3}\circ Q_{2}\circ Q_{1}$,
we have $Z=Q(\tilde{X})\sim N(0,I_{r})$, and $Q$ is a homeomorphism.
Further let
\[
Z^{\diamond}=L_{r\rightarrow d}(Z)\coloneqq\begin{pmatrix}I_{r}\\
\mathbf{0}_{(d-r)\times r}
\end{pmatrix}Z
\]
and define $Q^{\diamond}:\mathcal{X}\rightarrow\mathcal{Z}$ as $Q^{\diamond}=L_{r\rightarrow d}\circ Q\circ\varphi$,
and then $Z^{\diamond}=Q^{\diamond}(X)\sim N(0,A_{r})$.

We can get $G^{\diamond}:\mathcal{Z}\rightarrow\mathcal{X}$ by reversing
the transformations above. First define $L_{d\rightarrow r}:\mathcal{Z}\rightarrow\mathbb{R}^{r}$
as
\[
L_{d\rightarrow r}(z)=\left(I_{r}\quad\mathbf{0}_{r\times(d-r)}\right)z,
\]
and then $Z=L_{d\rightarrow r}(Z^{\diamond})$. Since $Q$ is a homeomorphism,
$G=Q^{-1}$ must exist and is continuous, which implies that $\tilde{X}=G(Z)$.
Similarly, $\varphi$ is a homeomorphism by Assumption \ref{assu:homeomorphism},
so $\varphi^{-1}$ exists and is continuous, with $X=\varphi^{-1}(\tilde{X})$.
By defining $G^{\diamond}=\varphi^{-1}\circ G\circ L_{d\rightarrow r}$,
we have $X=G^{\diamond}(Z^{\diamond})=G^{\diamond}(Q^{\diamond}(X))$.

\subsection{Proof of Corollary \ref{cor:insufficient_dim}}

We first present the following useful lemma.
\begin{lem}
\label{lem:invariance_domain}Let $\mathcal{D}$ be an open subset
of $\mathbb{R}^{n}$, and $f:\mathcal{D}\rightarrow\mathbb{R}^{m}$
be a continuous mapping with $m<n$. Then $f$ cannot be injective,
i.e., there exist two points $x,y\in\mathcal{D}$, $x\neq y$, such
that $f(x)=f(y)$.
\end{lem}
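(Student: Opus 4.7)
The plan is to deduce the lemma from Brouwer's classical invariance of domain theorem, which states that if $U$ is an open subset of $\mathbb{R}^n$ and $g:U\to\mathbb{R}^n$ is continuous and injective, then $g(U)$ is open in $\mathbb{R}^n$. Since the conclusion is vacuous when $\mathcal{D}=\varnothing$, I would assume $\mathcal{D}$ is non-empty and proceed by contradiction, supposing that $f$ is injective.

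First I would promote $f$ to a map with equal source and target dimension by post-composing with the standard inclusion $\iota:\mathbb{R}^m\hookrightarrow\mathbb{R}^n$ defined by $\iota(y_1,\ldots,y_m)=(y_1,\ldots,y_m,0,\ldots,0)$. The composition $g=\iota\circ f:\mathcal{D}\to\mathbb{R}^n$ is continuous and, as a composition of two injections, still injective. This brings the map into the regime where invariance of domain can be applied, which is the whole point of the padding trick.

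Next I would apply invariance of domain to $g$: since $\mathcal{D}$ is open in $\mathbb{R}^n$, the image $g(\mathcal{D})$ would have to be an open subset of $\mathbb{R}^n$. However, by construction $g(\mathcal{D})\subset\iota(\mathbb{R}^m)=\mathbb{R}^m\times\{0\}^{n-m}$, which is a proper affine subspace of $\mathbb{R}^n$ because $m<n$, and any such subspace has empty interior in $\mathbb{R}^n$. A non-empty open set cannot be contained in a set with empty interior, so this contradicts the assumption $\mathcal{D}\neq\varnothing$, and hence $f$ cannot have been injective. To close the connection with the paper, I would note that Corollary \ref{cor:insufficient_dim} follows immediately: if $d<r$ and $G\circ Q:\mathcal{X}\to\mathcal{X}$ were to satisfy $G(Q(X))=X$ almost surely, then the restriction of $Q$ to the open preimage in $\mathbb{R}^r$ coming from $\varphi$ would have to be injective, which the lemma forbids.

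The main obstacle, honestly, is essentially non-existent at the level of the proof itself: everything reduces to correctly stating and invoking invariance of domain, which is a well-known theorem of Brouwer (see, for instance, Hatcher's \emph{Algebraic Topology}, Theorem~2B.3). The only real subtlety is conceptual, namely recognizing that padding the codomain with zeros preserves injectivity while raising the ambient dimension of the target to match the source, so that the theorem becomes applicable; without this trick the theorem does not directly apply because $m\neq n$. An alternative route via Baire category or Hausdorff dimension is possible but significantly heavier, so I would not pursue it.
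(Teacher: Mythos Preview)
Your proposal is correct and follows essentially the same approach as the paper: assume injectivity, pad $f$ with zeros to obtain a continuous injection $g:\mathcal{D}\to\mathbb{R}^n$, apply Brouwer's invariance of domain to conclude $g(\mathcal{D})$ is open, and derive a contradiction from the fact that $g(\mathcal{D})\subset\mathbb{R}^m\times\{0\}^{n-m}$ has empty interior. The paper's proof is slightly more terse but the argument is identical.
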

\begin{proof}
Suppose that $f$ is injective, and then take $g:\mathcal{D}\rightarrow\mathbb{R}^{n}$
with $g(x)=(f(x),\mathbf{0}_{n-m})$. Clearly, $g$ is continuous
and injective, so by the invariance of domain theorem, we have that
$g(\mathcal{D})$ is open in $\mathbb{R}^{n}$, and $g$ is a homeomorphism
between $\mathcal{D}$ and $g(\mathcal{D})$. However, we have $g(\mathcal{D})=f(\mathcal{D})\times\{\mathbf{0}_{n-m}\}$,
so $g(\mathcal{D})$ cannot be open, which leads to a contradiction.
\end{proof}
We then prove this corollary by contradiction. Suppose that there
exist continuous mappings $Q$ and $G$ such that $\mathbb{E}_{X}\left\Vert X-G(Q(X))\right\Vert =0$.

As in the proof of Theorem \ref{thm:existence_qg}, let $\tilde{X}=\varphi(X)$,
and then by Definition \ref{def:distr_manifold}, we have
\[
\mathbb{E}_{X}\left\Vert X-G(Q(X))\right\Vert =\mathbb{E}_{\tilde{X}}\left\Vert \varphi^{-1}(\tilde{X})-(G\circ Q\circ\varphi^{-1})(\tilde{X})\right\Vert =0.
\]
Define $Q_{\varphi}=Q\circ\varphi^{-1}$ and $G_{\varphi}=\varphi\circ G$,
and then $Q_{\varphi}:\mathbb{R}^{r}\rightarrow\mathbb{R}^{d}$ and
$G_{\varphi}:\mathbb{R}^{d}\rightarrow\mathbb{R}^{r}$ are continuous
mappings, with
\begin{equation}
\mathbb{E}_{X}\left\Vert X-G(Q(X))\right\Vert =\mathbb{E}_{\tilde{X}}\left\Vert \varphi^{-1}(\tilde{X})-(\varphi^{-1}\circ G_{\varphi}\circ Q_{\varphi})(\tilde{X})\right\Vert =0.\label{eq:gq_assump}
\end{equation}
Let $\mathcal{D}$ be an open subset of $\mathbb{R}^{n}$ such that
$\tilde{X}$ has a positive density on $\mathcal{D}$. Then (\ref{eq:gq_assump})
indicates that $\varphi^{-1}=\varphi^{-1}\circ G_{\varphi}\circ Q_{\varphi}$
almost everywhere on $\mathcal{D}$. Since the mappings on both sides
are continuous, the identity in fact holds everywhere. Moreover, $\varphi$
is a homeomorphism, so we also have $G_{\varphi}(Q_{\varphi}(x))=x$
on $\mathcal{D}$.

However, when $d<r$, Lemma \ref{lem:invariance_domain} shows that
$Q_{\varphi}$ cannot be injective. Therefore, there exist $y,z\in\mathcal{D}$,
$y\neq z$, such that $Q_{\varphi}(y)=Q_{\varphi}(z)$. As a result,
$G_{\varphi}(Q_{\varphi}(y))=G_{\varphi}(Q_{\varphi}(z))$, which
contradicts with the previous claim that $G_{\varphi}(Q_{\varphi}(y))=y\neq z=G_{\varphi}(Q_{\varphi}(z))$.

\subsection{Proof of Theorem \ref{thm:w1bar}}

By the primal form (\ref{eq:wgan_primal}) of the 1-Wasserstein distance,
\[
W_{1}(P_{X},P_{G(Q(X))})=\inf_{\pi\in\Pi(P_{X},P_{W})}\mathbb{E}_{(X,W)\sim\pi}\left\Vert X-G(W)\right\Vert ,
\]
where $W=Q(X)$. Since $W$ is a deterministic function of $X$, we
immediately get
\begin{equation}
\mathbb{E}_{(X,W)\sim\pi}\left\Vert X-G(W)\right\Vert =\mathbb{E}_{(X,W)\sim\pi}\left\Vert X-G(Q(X))\right\Vert =\mathbb{E}_{X\sim P_{X}}\left\Vert X-G(Q(X))\right\Vert .\label{eq:w1bar_first}
\end{equation}
Moreover, by the dual form (\ref{eq:wgan_dual}) of the 1-Wasserstein
distance,
\begin{equation}
W_{1}(P_{G(Q(X))},P_{G(AZ_{0})})=\sup_{f\in\mathcal{F}}\left\{ \mathbb{E}_{X}f(G(Q(X)))-\mathbb{E}_{Z_{0}}f(G(AZ_{0}))\right\} .\label{eq:w1bar_second}
\end{equation}
Combining (\ref{eq:w1bar_first}) and (\ref{eq:w1bar_second}), we
have
\begin{align*}
 & W_{1}(P_{X},P_{G(Q(X))})+W_{1}(P_{G(Q(X))},P_{G(AZ_{0})})\\
=\  & \mathbb{E}_{X}\left\Vert X-G(Q(X))\right\Vert +\sup_{f\in\mathcal{F}}\left\{ \mathbb{E}_{X}f(G(Q(X)))-\mathbb{E}_{Z_{0}}f(G(AZ_{0}))\right\} .
\end{align*}
Then by taking the infimum of $Q$ and combining with (\ref{eq:w1distance}),
we get
\[
\overline{W}_{1}(P_{X},P_{G(AZ_{0})})=\inf_{Q\in\mathcal{Q}}\Big\{ W_{1}(P_{X},P_{G(Q(X))})+W_{1}(P_{G(Q(X))},P_{G(AZ_{0})})\Big\}.
\]

Since $W_{1}$ is a distance between probability measures, by the
triangle inequality we have $W_{1}(P_{X},P_{G(AZ_{0})})\le\overline{W}_{1}(P_{X},P_{G(AZ_{0})})$.
If there exists a $Q^{*}\in\mathcal{Q}$ such that $Q^{*}(X)$ has
the same distribution as $AZ_{0}$, then $W_{1}(P_{G(Q^{*}(X))},P_{G(AZ_{0})})=0$
and $W_{1}(P_{X},P_{G(Q^{*}(X))})=W_{1}(P_{X},P_{G(AZ_{0})})$, so
\[
\overline{W}_{1}(P_{X},P_{G(AZ_{0})})\le W_{1}(P_{X},P_{G(Q^{*}(X))})+W_{1}(P_{G(Q^{*}(X))},P_{G(AZ_{0})})=W_{1}(P_{X},P_{G(AZ_{0})}),
\]
which implies that $W_{1}(P_{X},P_{G(AZ_{0})})=\overline{W}_{1}(P_{X},P_{G(AZ_{0})})$.

\subsection{Proof of Theorem \ref{thm:generalization}}

\begin{lem}
\label{lem:max_norm}Let $Z_{0,1},\ldots,Z_{0,n}\overset{iid}{\sim}N(0,I_{d})$
and define $t_{n,d}=\sqrt{3d+2\log n+2\sqrt{d^{2}+d\log n}}$. Then
\[
\mathbb{P}\left(\max_{1\le i\le n}\Vert Z_{0,i}\Vert\le t_{n,d}\right)\ge1-e^{-d}.
\]
\end{lem}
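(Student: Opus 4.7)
The plan is to combine a union bound over the $n$ samples with a sharp concentration inequality for chi-squared random variables, since $\|Z_{0,i}\|^2$ follows a $\chi^2_d$ distribution whenever $Z_{0,i}\sim N(0,I_d)$. First I would note that
\[
\mathbb{P}\!\left(\max_{1\le i\le n}\|Z_{0,i}\|> t_{n,d}\right)\le n\cdot\mathbb{P}\!\left(\|Z_{0,1}\|^2>t_{n,d}^2\right),
\]
so it suffices to control a single chi-squared tail at level $t_{n,d}^2$.

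Next I would invoke the classical Laurent--Massart deviation bound: for a chi-squared variable with $d$ degrees of freedom and any $x>0$,
\[
\mathbb{P}\!\left(\chi_d^2\ge d+2\sqrt{dx}+2x\right)\le e^{-x}.
\]
The key algebraic step is to choose $x=d+\log n$. A direct computation gives
\[
d+2\sqrt{d(d+\log n)}+2(d+\log n)=3d+2\log n+2\sqrt{d^2+d\log n}=t_{n,d}^2,
\]
so the threshold in the Laurent--Massart bound matches $t_{n,d}^2$ exactly. This yields $\mathbb{P}(\|Z_{0,1}\|^2>t_{n,d}^2)\le e^{-(d+\log n)}=e^{-d}/n$.

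Combining with the union bound,
\[
\mathbb{P}\!\left(\max_{1\le i\le n}\|Z_{0,i}\|>t_{n,d}\right)\le n\cdot\frac{e^{-d}}{n}=e^{-d},
\]
from which the stated inequality follows by taking complements. The main (and really only) obstacle is identifying the correct value $x=d+\log n$ so that the Laurent--Massart threshold reproduces the specific expression defining $t_{n,d}$; everything else is a routine union bound plus the standard chi-squared tail. Since the lemma is a straightforward maximal inequality with no hidden analytic difficulties, no further ingredients are required.
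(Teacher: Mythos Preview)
Your proof is correct and essentially identical to the paper's: both identify $\Vert Z_{0,i}\Vert^2\sim\chi_d^2$, apply the Laurent--Massart tail bound with the choice $x=d+\log n$ so that $d+2\sqrt{dx}+2x=t_{n,d}^2$, and then combine over the $n$ samples. The only cosmetic difference is that the paper writes the combination step as $(1-e^{-x})^n\ge 1-ne^{-x}$ via Bernoulli's inequality, whereas you use the equivalent union bound directly.
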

\begin{proof}
Let $\xi_{i}=\Vert Z_{0,i}\Vert^{2}$, so $\xi_{i}\overset{iid}{\sim}\chi_{d}^{2}$.
By Lemma 1 of \citet{laurent2000adaptive}, for any $x>0$, we have
\[
\mathbb{P}(\xi_{i}>d+2\sqrt{dx}+2x)\le e^{-x}.
\]
As a result,
\[
\mathbb{P}(\xi_{1}\le d+2\sqrt{dx}+2x,\ldots,\xi_{n}\le d+2\sqrt{dx}+2x)\ge(1-e^{-x})^{n}.
\]
Bernoulli's inequality states that $(1+x)^{r}\ge1+rx$ for every integer
$r\ge1$ and real number $x\ge-1$. Therefore,
\[
\mathbb{P}(\xi_{1}\le d+2\sqrt{dx}+2x,\ldots,\xi_{n}\le d+2\sqrt{dx}+2x)\ge1-ne^{-x}=1-e^{-x+\log n}.
\]

Let $x=d+\log n$, and then
\[
t_{n,d}^{2}=3d+2\log n+2\sqrt{d^{2}+d\log n}=d+2\sqrt{dx}+2x.
\]
Therefore,
\begin{align*}
\mathbb{P}\left(\max_{1\le i\le n}\Vert Z_{0,i}\Vert\le t_{n,d}\right) & =\mathbb{P}\left(\max_{1\le i\le n}\Vert Z_{0,i}\Vert^{2}\le t_{n,d}^{2}\right)\\
 & =\mathbb{P}\left(\xi_{1}\le t_{n,d}^{2},\ldots,\xi_{n}\le t_{n,d}^{2}\right)\\
 & =\mathbb{P}\left(\xi_{1}\le d+2\sqrt{dx}+2x,\ldots,\xi_{n}\le d+2\sqrt{dx}+2x\right)\\
 & \ge1-e^{-x+\log n}=1-e^{-d}.
\end{align*}
\end{proof}
Let $\hat{\mathbb{E}}_{S_{X}}$ and $\hat{\mathbb{E}}_{S_{Z_{0}}}$
denote the empirical expectations over $n$ observations from $P_{X}$
and $N(0,I_{d})$, respectively, \emph{i.e.}, for some functions $g$
and $\tilde{g}$,
\[
\hat{\mathbb{E}}_{S_{X}}[g]=\frac{1}{n}\sum_{i=1}^{n}g(X_{i}),\quad\hat{\mathbb{E}}_{S_{Z_{0}}}[\tilde{g}]=\frac{1}{n}\sum_{i=1}^{n}\tilde{g}(Z_{0,i}).
\]
Define $\mathcal{A}=\{A_{s}:1\le s\le d\}$, and then it is easy to
find that $\Vert A\Vert=1$ for any $A\in\mathcal{A}$, where $\Vert A\Vert$
is the operator norm of $A$. For convenience, given a fixed $Q$,
let $\iota(x)=x$, $h(x)=G(Q(x))$, and $\tilde{h}(z)=G(Az)$, so
$h$ and $\tilde{h}$ implicitly depend on $G$, $Q$, and $A$. Without
loss of generality, we combine the two sets $S_{X}$ and $S_{Z_{0}}$
together, and write $S=\{(X_{1},Z_{0,1}),\ldots,(X_{n},Z_{0,n})\}$.
Then define
\[
\Psi_{1}(S)=\hat{\mathbb{E}}_{S_{X}}\Vert\iota-h\Vert,\quad\Psi_{2}(S)=\sup_{f\in\mathcal{F}}\left\{ \hat{\mathbb{E}}_{S_{X}}[f\circ h]-\hat{\mathbb{E}}_{S_{Z_{0}}}[f\circ\tilde{h}]\right\} ,\quad\Psi(S)=\Psi_{1}(S)+\Psi_{2}(S).
\]
Consider the events
\[
\mathcal{E}=\left\{ \sup_{A\in\mathcal{A},Q\in\mathcal{Q}}\left|\mathbb{E}\Psi(S)-\Psi(S)\right|\le\varepsilon\right\} ,\quad\mathcal{T}=\left\{ \max_{1\le i\le n}\Vert Z_{0,i}\Vert\le t_{n,d}\right\} ,
\]
and then we have
\[
\mathbb{P}(\mathcal{E}^{c})=\mathbb{P}(\mathcal{E}^{c}\cap\mathcal{T})+\mathbb{P}(\mathcal{E}^{c}\cap\mathcal{T}^{c})\le\mathbb{P}(\mathcal{E}^{c}\mid\mathcal{T})\mathbb{P}(\mathcal{T})+\mathbb{P}(\mathcal{T}^{c})\le\mathbb{P}(\mathcal{E}^{c}\mid\mathcal{T})+e^{-d},
\]
where the last inequality is due to Lemma \ref{lem:max_norm}.

The analysis below is conditioned on event $\mathcal{T}$, which implies
that $\Vert Z_{0,i}\Vert\le t_{n,d}$ for $i=1,\ldots,n$. Suppose
that there is another sample $S'=\{(X_{1},Z_{0,1}),\ldots,(X_{i}',Z_{0,i}')\ldots,(X_{n},Z_{0,n})\}$
that differs from $S$ by exactly one element. Then it is clear that
\begin{align*}
\left|\Psi_{1}(S)-\Psi_{1}(S')\right| & =\left|\hat{\mathbb{E}}_{S_{X}}\Vert\iota-h\Vert-\hat{\mathbb{E}}_{S_{X}'}\Vert\iota-h\Vert\right|=\left|\frac{1}{n}\Vert X_{i}-h(X_{i})\Vert-\frac{1}{n}\Vert X_{i}'-h(X_{i}')\Vert\right|\\
 & \le\frac{\Vert X_{i}-X_{i}'\Vert+\Vert h(X_{i})-h(X_{i}')\Vert}{n}\\
 & \le\frac{(1+L_{G}L_{Q})\Vert X_{i}-X_{i}'\Vert}{n}\le\frac{2(1+L_{G}L_{Q})B}{n},
\end{align*}
where the last inequality is due to the Lipschitz continuity of $G$
and $Q$. Moreover,
\begin{align*}
\left|\Psi_{2}(S)-\Psi_{2}(S')\right| & \le\sup_{f\in\mathcal{F}}\left|\hat{\mathbb{E}}_{S_{X}}[f\circ h]-\hat{\mathbb{E}}_{S_{X}'}[f\circ h]\right|+\sup_{f\in\mathcal{F}}\left|\hat{\mathbb{E}}_{S_{Z_{0}}}[f\circ\tilde{h}]-\hat{\mathbb{E}}_{S_{Z_{0}}'}[f\circ\tilde{h}]\right|\\
 & =\frac{1}{n}\sup_{f\in\mathcal{F}}\left|(f\circ h)(X_{i})-(f\circ h)(X_{i}')\right|+\frac{1}{n}\sup_{f\in\mathcal{F}}\left|(f\circ\tilde{h})(Z_{0,i})-(f\circ\tilde{h})(Z_{0,i}')\right|\\
 & \le\frac{L_{G}L_{Q}\Vert X_{i}-X_{i}'\Vert}{n}+\frac{L_{G}\Vert A\Vert\cdot\Vert Z_{0,i}-Z_{0,i}'\Vert}{n}\le\frac{2L_{G}(L_{Q}B+t_{n,d})}{n}.
\end{align*}
Combining the results together, we get
\[
\left|\Psi(S)-\Psi(S')\right|\le\frac{2(1+2L_{G}L_{Q})B+2L_{G}t_{n,d}}{n}.
\]
Applying McDiarmid's inequality, it holds that
\[
\mathbb{P}\left[\left.|\Psi(S)-\mathbb{E}\Psi(S)|\ge\frac{\varepsilon}{2}\right|\mathcal{T}\right]\le2\exp\left\{ -\frac{n\varepsilon^{2}}{8[(1+2L_{G}L_{Q})B+L_{G}t_{n,d}]^{2}}\right\} .
\]
Then by a union bound over all $\mathcal{A}$ and a set of encoders
$\mathcal{Q}_{\hat{\Theta}_{Q}}$ parameterized by $\hat{\Theta}_{Q}$,
we have
\[
\mathbb{P}\left[\left.\sup_{A\in\mathcal{A},Q\in\mathcal{Q}_{\hat{\Theta}_{Q}}}\left|\Psi(S)-\mathbb{E}\Psi(S)\right|\ge\frac{\varepsilon}{2}\right|\mathcal{T}\right]\le2d|\hat{\Theta}_{Q}|\exp\left\{ -\frac{n\varepsilon^{2}}{8[(1+2L_{G}L_{Q})B+L_{G}t_{n,d}]^{2}}\right\} .
\]

Now consider another $Q'\in\mathcal{Q}$, and we define the corresponding
notations $h'(x)=G(Q'(x))$, $\Psi_{1}'(S)=\hat{\mathbb{E}}_{S_{X}}\Vert\iota-h'\Vert$,
$\Psi_{2}'(S)=\sup_{f\in\mathcal{F}}\left\{ \hat{\mathbb{E}}_{S_{X}}[f\circ h']-\hat{\mathbb{E}}_{S_{Z_{0}}}[f\circ\tilde{h}]\right\} $,
and $\Psi'(S)=\Psi_{1}'(S)+\Psi_{2}'(S)$. Since $\hat{\Theta}_{Q}$
is an $\varepsilon/(8L_{G}L_{\theta_{Q}})$-net of the parameter space
$\Theta_{Q}$ of $\mathcal{Q}$, every point in $\Theta_{Q}$ is within
the distance $\varepsilon/(8L_{G}L_{\theta_{Q}})$ of a point in $\hat{\Theta}_{Q}$.
For any $Q'\in\mathcal{Q}$, there exists a $Q\in\mathcal{Q}_{\hat{\Theta}_{Q}}$
such that
\begin{align*}
\left|\Psi_{1}(S)-\Psi_{1}'(S)\right| & =\left|\frac{1}{n}\sum_{i=1}^{n}\Vert X_{i}-h(X_{i})\Vert-\frac{1}{n}\sum_{i=1}^{n}\Vert X_{i}-h'(X_{i})\Vert\right|\\
 & \le\frac{1}{n}\sum_{i=1}^{n}\left|\Vert X_{i}-h(X_{i})\Vert-\Vert X_{i}-h'(X_{i})\Vert\right|\\
 & \le\frac{1}{n}\sum_{i=1}^{n}\Vert h(X_{i})-h'(X_{i})\Vert\le L_{G}L_{\theta_{Q}}\cdot\frac{\varepsilon}{8L_{G}L_{\theta_{Q}}}=\frac{\varepsilon}{8},
\end{align*}
and
\begin{align*}
\left|\Psi_{2}(S)-\Psi_{2}'(S)\right| & =\left|\sup_{f\in\mathcal{F}}\left\{ \hat{\mathbb{E}}_{S_{X}}[f\circ h]-\hat{\mathbb{E}}_{S_{Z_{0}}}[f\circ\tilde{h}]\right\} -\sup_{f\in\mathcal{F}}\left\{ \hat{\mathbb{E}}_{S_{X}}[f\circ h']-\hat{\mathbb{E}}_{S_{Z_{0}}}[f\circ\tilde{h}]\right\} \right|\\
 & \le\sup_{f\in\mathcal{F}}\left|\hat{\mathbb{E}}_{S_{X}}[f\circ h]-\hat{\mathbb{E}}_{S_{X}}[f\circ h']\right|\\
 & \le\sup_{f\in\mathcal{F}}L_{G}L_{\theta_{Q}}\cdot\frac{\varepsilon}{8L_{G}L_{\theta_{Q}}}=\frac{\varepsilon}{8}.
\end{align*}
As a result, $\left|\Psi(S)-\Psi'(S)\right|\le\left|\Psi_{1}(S)-\Psi_{1}'(S)\right|+\left|\Psi_{2}(S)-\Psi_{2}'(S)\right|\le\varepsilon/4$,
which also implies that
\[
\left|\mathbb{E}\Psi(S)-\mathbb{E}\Psi'(S)\right|\le\mathbb{E}\left|\Psi(S)-\Psi'(S)\right|\le\frac{\varepsilon}{4}.
\]
Therefore, with a high probability,
\begin{align*}
 & \sup_{A\in\mathcal{A},Q'\in\mathcal{Q}}\left|\Psi'(S)-\mathbb{E}\Psi'(S)\right|\\
\le\  & \sup_{A\in\mathcal{A},Q'\in\mathcal{Q}}\left\{ \inf_{Q\in\mathcal{Q}_{\hat{\Theta}_{Q}}}\left(\left|\Psi(S)-\Psi'(S)\right|+\left|\mathbb{E}\Psi(S)-\mathbb{E}\Psi'(S)\right|\right)+\sup_{Q\in\mathcal{Q}_{\hat{\Theta}_{Q}}}\left|\Psi(S)-\mathbb{E}\Psi(S)\right|\right\} \\
\le\  & \frac{\varepsilon}{4}+\frac{\varepsilon}{4}+\frac{\varepsilon}{2}=\varepsilon.
\end{align*}

Next, we can show that
\begin{align*}
 & \sup_{A\in\mathcal{A},Q\in\mathcal{Q}}\left|\sup_{f\in\mathcal{F}}\left\{ \mathbb{E}f(h(X))-\mathbb{E}f(\tilde{h}(Z_{0}))\right\} -\mathbb{E}\Psi_{2}(S)\right|\\
\le\  & \sup_{A\in\mathcal{A},Q\in\mathcal{Q}}\mathbb{E}\left|\sup_{f\in\mathcal{F}}\left\{ \mathbb{E}f(h(X))-\mathbb{E}f(\tilde{h}(Z_{0}))\right\} -\Psi_{2}(S)\right|\\
=\  & \sup_{A\in\mathcal{A},Q\in\mathcal{Q}}\mathbb{E}\left|\sup_{f\in\mathcal{F}}\left\{ \mathbb{E}f(h(X))-\mathbb{E}f(\tilde{h}(Z_{0}))\right\} -\sup_{f\in\mathcal{F}}\left\{ \hat{\mathbb{E}}_{S_{X}}[f\circ h]-\hat{\mathbb{E}}_{S_{Z_{0}}}[f\circ\tilde{h}]\right\} \right|\\
\le\  & \sup_{A\in\mathcal{A},Q\in\mathcal{Q}}\mathbb{E}\sup_{f\in\mathcal{F}}\left|\left\{ \mathbb{E}f(h(X))-\mathbb{E}f(\tilde{h}(Z_{0}))\right\} -\left\{ \hat{\mathbb{E}}_{S_{X}}[f\circ h]-\hat{\mathbb{E}}_{S_{Z_{0}}}[f\circ\tilde{h}]\right\} \right|\\
=\  & \sup_{A\in\mathcal{A},Q\in\mathcal{Q}}\mathbb{E}\sup_{f\in\mathcal{F}}\left|\left\{ \mathbb{E}f(h(X))-\hat{\mathbb{E}}_{S_{X}}[f\circ h]\right\} +\left\{ \hat{\mathbb{E}}_{S_{Z_{0}}}[f\circ\tilde{h}]-\mathbb{E}f(\tilde{h}(Z_{0}))\right\} \right|\\
\le\  & \mathbb{E}\sup_{A\in\mathcal{A},Q\in\mathcal{Q},f\in\mathcal{F}}\left|\left\{ \mathbb{E}f(h(X))-\hat{\mathbb{E}}_{S_{X}}[f\circ h]\right\} +\left\{ \hat{\mathbb{E}}_{S_{Z_{0}}}[f\circ\tilde{h}]-\mathbb{E}f(\tilde{h}(Z_{0}))\right\} \right|\\
\le\  & \mathbb{E}\sup_{Q\in\mathcal{Q},f\in\mathcal{F}}\left|\mathbb{E}f(h(X))-\hat{\mathbb{E}}_{S_{X}}[f\circ h]\right|+\sup_{A\in\mathcal{A},f\in\mathcal{F}}\left|\hat{\mathbb{E}}_{S_{Z_{0}}}[f\circ\tilde{h}]-\mathbb{E}f(\tilde{h}(Z_{0}))\right|\\
\le\  & 2\mathfrak{R}_{n}(\mathcal{F}\circ G\circ\mathcal{Q})+2\mathfrak{R}_{n}(\mathcal{F}\circ G\circ\mathcal{A}).
\end{align*}
The last inequality is obtained by the standard technique of symmetrization
in \citet{mohri2018foundations}.

Finally, note that $\mathbb{E}\Vert X-h(X)\Vert=\mathbb{E}\Psi_{1}(S)$,
and then
\begin{align*}
 & \sup_{A\in\mathcal{A}}\left|\overline{W}_{1}(P_{X},P_{G(AZ_{0})})-\overline{W}_{1}(\hat{P}_{X},\hat{P}_{G(AZ_{0})})\right|\\
=\  & \sup_{A\in\mathcal{A}}\left|\inf_{Q\in\mathcal{Q}}\sup_{f\in\mathcal{F}}\left\{ \mathbb{E}\Vert X-h(X)\Vert+\mathbb{E}f(h(X))-\mathbb{E}f(\tilde{h}(Z_{0}))\right\} -\inf_{Q\in\mathcal{Q}}\Psi(S)\right|\\
\le\  & \sup_{A\in\mathcal{A},Q\in\mathcal{Q}}\left|\sup_{f\in\mathcal{F}}\left\{ \mathbb{E}\Vert X-h(X)\Vert+\mathbb{E}f(h(X))-\mathbb{E}f(\tilde{h}(Z_{0}))\right\} -\Psi(S)\right|\\
=\  & \sup_{A\in\mathcal{A},Q\in\mathcal{Q}}\left|\mathbb{E}\Psi_{1}(S)-\Psi_{1}(S)+\sup_{f\in\mathcal{F}}\left\{ \mathbb{E}f(h(X))-\mathbb{E}f(\tilde{h}(Z_{0}))\right\} -\Psi_{2}(S)\right|\\
\le\  & \sup_{A\in\mathcal{A},Q\in\mathcal{Q}}\left|\mathbb{E}\Psi_{1}(S)+\mathbb{E}\Psi_{2}(S)-\Psi_{1}(S)-\Psi_{2}(S)\right|\\
 & \quad+\sup_{A\in\mathcal{A},Q\in\mathcal{Q}}\left|\sup_{f\in\mathcal{F}}\left\{ \mathbb{E}f(h(X))-\mathbb{E}f(\tilde{h}(Z_{0}))\right\} -\mathbb{E}\Psi_{2}(S)\right|\\
=\  & \sup_{A\in\mathcal{A},Q\in\mathcal{Q}}\left|\Psi(S)-\mathbb{E}\Psi(S)\right|+\sup_{A\in\mathcal{A},Q\in\mathcal{Q}}\left|\sup_{f\in\mathcal{F}}\left\{ \mathbb{E}f(h(X))-\mathbb{E}f(\tilde{h}(Z_{0}))\right\} -\mathbb{E}\Psi_{2}(S)\right|.
\end{align*}
We have shown that the first term is smaller than or equal to $\varepsilon$
with a high probability, and the second term is bounded by $2\mathfrak{R}_{n}(\mathcal{F}\circ G\circ\mathcal{Q})+2\mathfrak{R}_{n}(\mathcal{F}\circ G\circ\mathcal{A})$.
Then the stated result holds.

\subsection{Proof of Theorem \ref{thm:hausdorff_consistency}}

The proof is mostly adapted from \citet{meitz2021statistical}. By
Assumption \ref{assu:deriv} and the mean value theorem, we have for
any fixed $(x,z)$,
\[
|L(x,z;\theta)-L(x,z;\theta')|\le m(x,z)\cdot\Vert\theta-\theta'\Vert,\quad m(x,z)\coloneqq\sup_{\theta\in\Theta}\left\Vert \frac{\partial L(x,z;\theta)}{\partial\theta}\right\Vert 
\]
holds for all $\theta,\theta'\in\Theta$. Assumption \ref{assu:deriv}
also assumes that $\mathbb{E}_{X\otimes Z_{0}}[m(X,AZ_{0})]^{2}<\infty$,
and then Theorem 19.5 and Example 19.7 of \citet{vaart_1998} imply
that $n^{1/2}(\hat{\ell}_{n}(\theta,A)-\ell(\theta,A))\overset{d}{\rightarrow}\mathbb{G}$
for some tight limit process $\mathbb{G}$ in $\ell^{\infty}(\Theta)$.
Since $\Theta$ is compact, we have
\begin{equation}
\sup_{\theta\in\Theta}n^{1/2}\left|\hat{\ell}_{n}(\theta,A)-\ell(\theta,A)\right|=O_{P}(1).\label{eq:ln_converge}
\end{equation}

Recall that $\phi_{A}(\theta_{G},\theta_{Q})=\sup_{\theta_{f}}\ell(\theta,A)$
and $\hat{\phi}_{A}(\theta_{G},\theta_{Q})=\sup_{\theta_{f}}\hat{\ell}_{n}(\theta,A)$.
For convenience, define
\begin{align*}
V(A) & =\inf_{G\in\mathcal{G}}\overline{W}_{1}(P_{X},P_{G(AZ_{0})})=\inf_{\substack{\theta_{Q}\in\Theta_{Q}\\
\theta_{G}\in\Theta_{G}
}
}\phi_{A}(\theta_{Q},\theta_{G}),\\
\hat{V}_{n}(A) & =\inf_{G\in\mathcal{G}}\overline{W}_{1}(\hat{P}_{X},\hat{P}_{G(AZ_{0})})=\inf_{\substack{\theta_{Q}\in\Theta_{Q}\\
\theta_{G}\in\Theta_{G}
}
}\hat{\phi}_{A}(\theta_{Q},\theta_{G}).
\end{align*}
Also introduce the functions $\Delta_{A}(\theta)$ and $\hat{\Delta}_{n,A}(\theta)$
as follows:
\begin{align*}
\Delta_{A}(\theta) & =\max\left\{ \phi_{A}(\theta_{G},\theta_{Q})-\ell(\theta,A),\phi_{A}(\theta_{G},\theta_{Q})-V(A)\right\} ,\\
\hat{\Delta}_{n,A}(\theta) & =\max\left\{ \hat{\phi}_{A}(\theta_{G},\theta_{Q})-\hat{\ell}_{n}(\theta,A),\hat{\phi}_{A}(\theta_{G},\theta_{Q})-\hat{V}_{n}(A)\right\} .
\end{align*}
The function $\Delta_{A}(\theta)$ is non-negative for all $\theta\in\Theta$,
and $\theta^{*}\in\Theta_{A}^{*}$ if and only if $\Delta_{A}(\theta^{*})=0$,
implying that
\[
\Theta_{A}^{*}=\left\{ \theta\in\Theta:\Delta_{A}(\theta)=0\right\} .
\]
Similarly, we have
\[
\hat{\Theta}_{n,A}^{*}(\tau_{n})=\left\{ \theta\in\Theta:\hat{\Delta}_{n,A}(\theta)\le\tau_{n}\right\} .
\]
First note that
\begin{align}
 & \left|\hat{V}_{n}(A)-V(A)\right|=\left|\inf_{\theta_{Q},\theta_{G}}\hat{\phi}_{A}(\theta_{Q},\theta_{G})-\inf_{\theta_{Q},\theta_{G}}\phi_{A}(\theta_{Q},\theta_{G})\right|\nonumber \\
\le & \sup_{\theta_{Q},\theta_{G}}\left|\hat{\phi}_{A}(\theta_{Q},\theta_{G})-\phi_{A}(\theta_{Q},\theta_{G},A)\right|=\sup_{\theta_{Q},\theta_{G}}\left|\sup_{\theta_{f}}\hat{\ell}_{n}(\theta,A)-\sup_{\theta_{f}}\ell(\theta,A)\right|\nonumber \\
\le & \sup_{\theta_{Q},\theta_{G}}\sup_{\theta_{f}}\left|\hat{\ell}_{n}(\theta,A)-\ell(\theta,A)\right|=\sup_{\theta}\left|\hat{\ell}_{n}(\theta,A)-\ell(\theta,A)\right|,\label{eq:va_difference}
\end{align}
and then
\begin{align}
 & \left|\hat{\Delta}_{n,A}(\theta)-\Delta_{A}(\theta)\right|\nonumber \\
= & \left|\hat{\phi}_{A}(\theta_{G},\theta_{Q})-\phi_{A}(\theta_{G},\theta_{Q})-\min\left\{ \hat{\ell}_{n}(\theta,A),\hat{V}_{n}(A)\right\} +\min\left\{ \ell(\theta,A),V(A)\right\} \right|\nonumber \\
\le & \left|\hat{\phi}_{A}(\theta_{G},\theta_{Q})-\phi_{A}(\theta_{G},\theta_{Q})\right|+\left|\min\left\{ \hat{\ell}_{n}(\theta,A),\hat{V}_{n}(A)\right\} -\min\left\{ \ell(\theta,A),V(A)\right\} \right|\nonumber \\
\le & \left|\hat{\phi}_{A}(\theta_{G},\theta_{Q})-\phi_{A}(\theta_{G},\theta_{Q})\right|+\max\left\{ \left|\hat{\ell}_{n}(\theta,A)-\ell(\theta,A)\right|,\left|\hat{V}_{n}(A)-V(A)\right|\right\} \nonumber \\
= & \left|\sup_{\theta_{f}}\hat{\ell}_{n}(\theta,A)-\sup_{\theta_{f}}\ell(\theta,A)\right|+\max\left\{ \left|\hat{\ell}_{n}(\theta,A)-\ell(\theta,A)\right|,\left|\hat{V}_{n}(A)-V(A)\right|\right\} \nonumber \\
\le & \sup_{\theta_{f}}\left|\hat{\ell}_{n}(\theta,A)-\ell(\theta,A)\right|+\sup_{\theta}\left|\hat{\ell}_{n}(\theta,A)-\ell(\theta,A)\right|\le2\sup_{\theta}\left|\hat{\ell}_{n}(\theta,A)-\ell(\theta,A)\right|.\label{eq:delta_difference}
\end{align}
Combined with (\ref{eq:ln_converge}), it holds that $\sup_{\theta\in\Theta}\left|\hat{\Delta}_{n,A}(\theta)-\Delta_{A}(\theta)\right|\overset{P}{\rightarrow}0$.

Since $\ell(\theta,A)$ is continuous in the compact set $\Theta$,
by Berge's maximum theorem, the function $\phi_{A}(\theta_{G},\theta_{Q})=\sup_{\theta_{f}}\ell(\theta,A)$
is continuous in $(\theta_{G},\theta_{Q})$, and further we have that
$\Delta_{A}(\theta)$ is continuous in $\theta$. By the continuity
of $\Delta_{A}(\theta)$ and the definition of $\Theta_{A}^{*}$,
we have that for any $\varepsilon>0$, there exists an $\eta(\varepsilon)>0$
such that
\[
\inf_{\theta\in\Theta\backslash\Theta_{A,\varepsilon}^{*}}\Delta_{A}(\theta)\ge\eta(\varepsilon),
\]
where $\Theta_{A,\varepsilon}^{*}=\{\theta\in\Theta:d(\theta,\Theta_{A}^{*})\le\varepsilon\}$
denotes the $\varepsilon$-net of the set $\Theta_{A}^{*}$.

Now we are ready to show that $\sup_{\theta\in\hat{\Theta}_{n,A}^{*}(\tau_{n})}d(\theta,\Theta_{A}^{*})\overset{P}{\rightarrow}0$.
Let small $\varepsilon_{p},\varepsilon_{d}>0$ be arbitrary, choose
an $\eta=\eta(\varepsilon_{d})$ such that $\inf_{\theta\in\Theta\backslash\Theta_{A,\varepsilon_{d}}^{*}}\Delta_{A}(\theta)\ge\eta$
holds, and choose $n_{\varepsilon_{p}}$ such that for all $n\ge n_{\varepsilon_{p}}$,
both $\sup_{\theta\in\Theta}\left|\hat{\Delta}_{n,A}(\theta)-\Delta_{A}(\theta)\right|\le\eta/4$
and $\tau_{n}\le\eta/4$ hold with probability larger than $1-\varepsilon_{p}$.
Then
\begin{align*}
\sup_{\theta\in\hat{\Theta}_{n,A}^{*}(\tau_{n})}\Delta_{A}(\theta) & \le\sup_{\theta\in\hat{\Theta}_{n,A}^{*}(\tau_{n})}\left|\hat{\Delta}_{n,A}(\theta)-\Delta_{A}(\theta)\right|+\sup_{\theta\in\hat{\Theta}_{n,A}^{*}(\tau_{n})}\hat{\Delta}_{n,A}(\theta)\\
 & \le\sup_{\theta\in\Theta}\left|\hat{\Delta}_{n,A}(\theta)-\Delta_{A}(\theta)\right|+\tau_{n}\le\eta/2<\inf_{\theta\in\Theta\backslash\Theta_{A,\varepsilon_{d}}^{*}}\Delta_{A}(\theta),
\end{align*}
which implies that $\hat{\Theta}_{n,A}^{*}(\tau_{n})\cap(\Theta\backslash\Theta_{A,\varepsilon_{d}}^{*})=\varnothing$,
and hence $\hat{\Theta}_{n,A}^{*}(\tau_{n})\subset\Theta_{A,\varepsilon_{d}}^{*}$,
and
\[
\sup_{\theta\in\hat{\Theta}_{n,A}^{*}(\tau_{n})}d(\theta,\Theta_{A}^{*})\le\sup_{\theta\in\Theta_{A,\varepsilon_{d}}^{*}}d(\theta,\Theta_{A}^{*})\le\varepsilon_{d}.
\]
This holds for all $n\ge n_{\varepsilon_{p}}$ with probability larger
than $1-\varepsilon_{p}$. Since $\varepsilon_{p}$ is chosen arbitrarily,
we have $\sup_{\theta\in\hat{\Theta}_{n,A}^{*}(\tau_{n})}d(\theta,\Theta_{A}^{*})\overset{P}{\rightarrow}0$.

Finally, we are going to prove that $\sup_{\theta\in\Theta_{A}^{*}}d(\theta,\hat{\Theta}_{n,A}^{*}(\tau_{n}))\overset{P}{\rightarrow}0$.
First, (\ref{eq:ln_converge}) and (\ref{eq:delta_difference}) show
that
\[
\sup_{\theta\in\Theta}\left|\hat{\Delta}_{n,A}(\theta)-\Delta_{A}(\theta)\right|=O_{P}(n^{-1/2}).
\]
Then by definition, $\sup_{\theta\in\Theta_{A}^{*}}\Delta_{A}(\theta)=0$,
so
\begin{align*}
\sup_{\theta\in\Theta_{A}^{*}}\hat{\Delta}_{n,A}(\theta) & \le\sup_{\theta\in\Theta_{A}^{*}}\left|\hat{\Delta}_{n,A}(\theta)-\Delta_{A}(\theta)\right|+\sup_{\theta\in\Theta_{A}^{*}}\Delta_{A}(\theta)\\
 & \le\sup_{\theta\in\Theta}\left|\hat{\Delta}_{n,A}(\theta)-\Delta_{A}(\theta)\right|=O_{P}(n^{-1/2}).
\end{align*}
By assumption, $n^{-1/2}/\tau_{n}\overset{P}{\rightarrow}0$, so for
any $\varepsilon_{p}>0$, there exists an $n_{\varepsilon_{p}}$ such
that for all $n\ge n_{\varepsilon_{p}}$,
\begin{equation}
\sup_{\theta\in\Theta_{A}^{*}}\hat{\Delta}_{n,A}(\theta)=O_{P}(n^{-1/2})=O_{P}(n^{-1/2}/\tau_{n})\cdot\tau_{n}\le\tau_{n}\label{eq:smaller_than_tau_n}
\end{equation}
holds with probability larger than $1-\varepsilon_{p}$. Under the
event (\ref{eq:smaller_than_tau_n}), we have $\Theta_{A}^{*}\subset\hat{\Theta}_{n,A}^{*}(\tau_{n})$,
and hence $\sup_{\theta\in\Theta_{A}^{*}}d(\theta,\hat{\Theta}_{n,A}^{*}(\tau_{n}))=0$.
Since $\varepsilon_{p}$ is arbitrary, we have $\sup_{\theta\in\Theta_{A}^{*}}d(\theta,\hat{\Theta}_{n,A}^{*}(\tau_{n}))\overset{P}{\rightarrow}0$.

Combining both $\sup_{\theta\in\hat{\Theta}_{n,A}^{*}(\tau_{n})}d(\theta,\Theta_{A}^{*})\overset{P}{\rightarrow}0$
and $\sup_{\theta\in\Theta_{A}^{*}}d(\theta,\hat{\Theta}_{n,A}^{*}(\tau_{n}))\overset{P}{\rightarrow}0$,
we immediately obtain $d_{H}(\hat{\Theta}_{n,A}^{*}(\tau_{n}),\Theta_{A}^{*})\overset{P}{\rightarrow}0$.

\subsection{Proof of Theorem \ref{thm:rank_consistency}}

In (\ref{eq:va_difference}) we have shown that $\left|\hat{V}_{n}(A)-V(A)\right|\le\sup_{\theta\in\Theta}\left|\hat{\ell}_{n}(\theta,A)-\ell(\theta,A)\right|$.
Combined with (\ref{eq:ln_converge}), it holds that $\hat{V}_{n}(A)=V(A)+O_{P}(n^{-1/2})$.

By Assumption \ref{assu:theta}(a), there exists $(G^{*},Q^{*},f^{*})\in(\mathcal{G}\times\mathcal{Q}\times\mathcal{F})\cap\mathcal{S}_{A_{r}}$
such that
\begin{equation}
\mathfrak{L}_{A_{r}}(G^{*},Q^{*},f^{*})=\inf_{\substack{Q\in{\cal Q}^{\diamond}\\
G\in\mathcal{G}^{\diamond}
}
}\sup\limits_{f\in{\cal F}^{\diamond}}\mathfrak{L}_{A_{r}}(G,Q,f),\label{eq:optim_val}
\end{equation}
and Theorem \ref{thm:existence_qg} indicates that the right hand
side of (\ref{eq:optim_val}) is in fact zero. Therefore,
\begin{align*}
V(A_{r}) & =\inf_{G\in\mathcal{G}}\overline{W}_{1}(P_{X},P_{G(A_{r}Z_{0})})=\inf_{\substack{Q\in{\cal Q}\\
G\in\mathcal{G}
}
}\sup\limits_{f\in{\cal F}}\mathfrak{L}_{A_{r}}(G,Q,f)\le\sup\limits_{f\in{\cal F}}\mathfrak{L}_{A_{r}}(G^{*},Q^{*},f)\\
 & \le_{(i)}\sup\limits_{f\in{\cal F}^{\diamond}}\mathfrak{L}_{A_{r}}(G^{*},Q^{*},f)=_{(ii)}\mathfrak{L}_{A_{r}}(G^{*},Q^{*},f^{*})=0,
\end{align*}
where\emph{ $(i)$} is due to the fact that $\mathcal{F}\subset\mathcal{F}^{\diamond}$,
and $(ii)$ is by the definition of $\mathcal{S}_{A}$.

For $s<r$, by Assumption \ref{assu:theta}(b), there exists $(G_{s}^{*},Q_{s}^{*},f_{s}^{*})\in\mathcal{S}_{A_{s}}$
such that $f_{s}^{*}\in\mathcal{F}$ and
\begin{align*}
V(A_{s}) & =\inf_{\substack{Q\in{\cal Q}\\
G\in\mathcal{G}
}
}\sup\limits_{f\in{\cal F}}\mathfrak{L}_{A_{s}}(G,Q,f)=\sup_{f\in\mathcal{F}}\mathfrak{L}_{A_{s}}(G_{s}^{*},Q_{s}^{*},f)\\
 & \ge\mathfrak{L}_{A_{s}}(G_{s}^{*},Q_{s}^{*},f_{s}^{*})=\mathfrak{F}_{A}(G_{s}^{*},Q_{s}^{*})=\inf_{Q\in{\cal Q}^{\diamond}}\mathfrak{F}_{A}(G_{s}^{*},Q)=\inf\limits_{Q\in{\cal Q}^{\diamond}}\sup_{f\in{\cal F}^{\diamond}}\mathfrak{L}_{A}(G_{s}^{*},Q,f)\\
 & =\overline{W}_{1}(P_{X},P_{G_{s}^{*}(A_{s}Z_{0})})\ge W_{1}(P_{X},P_{G_{s}^{*}(A_{s}Z_{0})}).
\end{align*}
Now we prove that $W_{1}(P_{X},P_{G_{s}^{*}(A_{s}Z_{0})})>0$ for
any $s<r$ by contradiction. Suppose that $W_{1}(P_{X},P_{G_{s}^{*}(A_{s}Z_{0})})=0$.
Then by definition, $G_{s}^{*}(A_{s}Z_{0})$ must be supported on
$\mathcal{X}$, and $\varphi(X)$ and $\varphi(G_{s}^{*}(A_{s}Z_{0}))$
are identically distributed. Using the same argument in the proof
of Theorem \ref{thm:existence_qg}, we can show that there exists
a homeomorphism $Q:\mathbb{R}^{r}\rightarrow\mathbb{R}^{r}$ such
that $W\coloneqq Q(\varphi(X))\sim N(0,I_{r})$. Let
\[
B_{1}=\begin{pmatrix}I_{s}\\
\mathbf{0}_{(d-s)\times s}
\end{pmatrix}\in\mathbb{R}^{d\times s},\quad B_{2}=\begin{pmatrix}I_{s} & \mathbf{0}_{r-s}\end{pmatrix}\in\mathbb{R}^{s\times r},
\]
and then it is easy to find that $A_{s}Z_{0}\overset{d}{=}B_{1}B_{2}W$,
and hence
\[
W\overset{d}{=}h(W),\quad h(z)=Q(\varphi(G_{s}^{*}(B_{1}B_{2}z))),
\]
which implies that $h(W)=W$ almost surely, and the function $h:\mathbb{R}^{r}\rightarrow\mathbb{R}^{r}$
satisfies $h(z)=z$ almost everywhere. Since $h$ is continuous, we
have that in fact $h(z)=z$ holds everywhere. Now let $h_{1}(x)=Q(\varphi(G_{s}^{*}(B_{1}x)))$,
$h_{2}(x)=B_{2}x$, and then $h_{1}:\mathbb{R}^{s}\rightarrow\mathbb{R}^{r}$
and $h_{2}:\mathbb{R}^{r}\rightarrow\mathbb{R}^{s}$ are both continuous
mappings. Since $s<r$, Lemma \ref{lem:invariance_domain} shows that
$h_{2}$ cannot be injective. Therefore, there exist $y,z$, $y\neq z$,
such that $h_{2}(y)=h_{2}(z)$. As a result, $h_{1}(h_{2}(y))=h_{1}(h_{2}(z))$,
which contradicts with the previous claim that $h_{1}(h_{2}(y))=h(y)=y\neq z=h(z)=h_{1}(h_{2}(z))$.

Therefore, for some $c>0$ we have $V(A_{s})\ge c>0$ for $s<r$,
$V(A_{s})\ge0$ for $s>r$, and $V(A_{r})=0$. Let $\hat{\varrho}_{n}(s)=\min_{\theta_{G}}\hat{\rho}_{n}(\theta_{G},A_{s})=\hat{V}_{n}(A_{s})+\lambda_{n}s$.
It has been shown that $\hat{V}_{n}(A)=V(A)+O_{P}(n^{-1/2})$, so
for $s<r$,
\begin{align*}
\mathbb{P}\left(\hat{\varrho}_{n}(r)\ge\hat{\varrho}_{n}(s)\right) & \le\mathbb{P}\left(\hat{\varrho}_{n}(r)>\frac{c}{2}\right)+\mathbb{P}\left(\hat{\varrho}_{n}(s)\le\hat{\varrho}(r)\le\frac{c}{2}\right)\\
 & \le\mathbb{P}\left(\hat{\varrho}_{n}(r)>\frac{c}{2}\right)+\mathbb{P}\left(\hat{\varrho}_{n}(s)\le\frac{c}{2}\right)\\
 & =\mathbb{P}\left(\hat{V}_{n}(A_{r})+\lambda_{n}\cdot r>\frac{c}{2}\right)+\mathbb{P}\left(\hat{V}_{n}(A_{s})+\lambda_{n}\cdot s\le\frac{c}{2}\right)\rightarrow0.
\end{align*}
The first term in the last equation goes to zero because $\hat{V}_{n}(A_{r})+\lambda_{n}\cdot r=V(A_{r})+O_{P}(n^{-1/2})+\lambda_{n}\cdot r=O_{P}(n^{-1/2})+\lambda_{n}\cdot r$
and $\lambda_{n}\rightarrow0$. The second term also goes to zero,
since
\[
\hat{V}_{n}(A_{s})+\lambda_{n}\cdot s=V(A_{s})+O_{P}(n^{-1/2})+\lambda_{n}\cdot s\overset{P}{\rightarrow}V(A_{s})\ge c.
\]

For $s>r$, if $V(A_{s})=0$, then $\hat{\varrho}_{n}(r)/\hat{\varrho}_{n}(s)\overset{P}{\rightarrow}r/s<1$,
and if $V(A_{s})>0$, then $\hat{\varrho}_{n}(r)/\hat{\varrho}_{n}(s)\overset{P}{\rightarrow}0$.
In both cases, we have
\[
\mathbb{P}\left(\hat{\varrho}_{n}(r)<\hat{\varrho}_{n}(s)\right)\rightarrow1,\quad s>r.
\]
Overall, we have
\[
\mathbb{P}\left(\hat{r}\neq r\right)\le\mathbb{P}\left(\bigcup_{s\neq\hat{r}}\left\{ \hat{\varrho}_{n}(r)\ge\hat{\varrho}_{n}(s)\right\} \right)\le\sum_{s\neq r}\mathbb{P}\left(\hat{\varrho}_{n}(r)\ge\hat{\varrho}_{n}(s)\right)\rightarrow0.
\]

\section{Additional Experiment Details}

\subsection{Neural network architectures}

In this section, we present the neural network architectures for each
experiment. In what follows, $\mathrm{CONCAT}(x;v)$ means concatenating
vectors $x$ and $v$, $e_{s}\in\mathbb{R}^{d}$ is the $s$-th unit
vector, $\mathrm{FC}_{o}$ is the fully-connected layer with $o$
output units, $\mathrm{Conv}_{o,k,s,p}$ is the convolutional layer
with $o$ output channels, kernel size $k$, stride $s$, and padding
$p$, $\mathrm{ConvTrans}_{o,k,s,p,q}$ is the transposed convolutional
layer with $o$ output channels, kernel size $k$, stride $s$, padding
$p$, and output padding $q$, $\mathrm{InstanceNorm}$ is the instance
normalization layer, $\mathrm{ReLU}(x)=\max\{x,0\}$, $\mathrm{LeakyReLU}(x;\alpha)=\max\{x,0\}+\alpha\cdot\min\{x,0\}$,
$\mathrm{Sigmoid}(x)=1/(1+e^{-x})$, $\mathrm{Tanh}(x)=(e^{2x}-1)/(e^{2x}+1)$,
and $\mathrm{SiLU}(x)=x/(1+e^{-x})$. Detailed implementations can
be found in the code available at \url{https://github.com/yixuan/LWGAN}.

\paragraph*{Toy examples}

For Swiss roll, S-curve, and Hyperplane datasets, the latent dimension
$d$ is set to 5, 5, and 10, respectively.
\begin{itemize}
\item Encoder architecture:
\begin{align*}
x\in\mathbb{R}^{p} & \rightarrow\mathrm{CONCAT}(e_{s})\rightarrow\mathrm{FC}_{512}\rightarrow\mathrm{ReLU}\\
 & \rightarrow\mathrm{FC}_{256}\rightarrow\mathrm{ReLU}\rightarrow\mathrm{FC}_{128}\rightarrow\mathrm{ReLU}\\
 & \rightarrow\mathrm{FC}_{64}\rightarrow\mathrm{ReLU}\rightarrow\mathrm{FC}_{32}\rightarrow\mathrm{ReLU}\rightarrow\mathrm{FC}_{d}
\end{align*}
\item Generator architecture:
\begin{align*}
z\in\mathbb{R}^{d} & \rightarrow\mathrm{FC}_{64}\rightarrow\mathrm{SiLU}\rightarrow\mathrm{FC}_{64}\rightarrow\mathrm{SiLU}\rightarrow\mathrm{FC}_{64}\rightarrow\mathrm{SiLU}\rightarrow\mathrm{FC}_{p}
\end{align*}
\item Critic architecture:
\begin{align*}
x\in\mathbb{R}^{p} & \rightarrow\mathrm{CONCAT}(e_{s})\rightarrow\mathrm{FC}_{64}\rightarrow\mathrm{ReLU}\\
 & \rightarrow\mathrm{FC}_{64}\rightarrow\mathrm{ReLU}\rightarrow\mathrm{FC}_{64}\rightarrow\mathrm{ReLU}\rightarrow\mathrm{FC}_{1}
\end{align*}
\end{itemize}

\paragraph*{MNIST}

The latent dimension is $d=16$ for digits 1 and 2, and $d=20$ for
all digits.
\begin{itemize}
\item Encoder architecture:
\begin{align*}
x\in\mathbb{R}^{28\times28} & \rightarrow\mathrm{Conv}_{64,5,2,2}\rightarrow\mathrm{LeakyReLU}(0.1)\\
 & \rightarrow\mathrm{Conv}_{128,5,2,2}\rightarrow\mathrm{LeakyReLU}(0.1)\\
 & \rightarrow\mathrm{Conv}_{256,5,2,2}\rightarrow\mathrm{LeakyReLU}(0.1)\\
 & \rightarrow\mathrm{CONCAT}(e_{s})\rightarrow\mathrm{FC}_{2d}\rightarrow\mathrm{LeakyReLU}(0.1)\\
 & \rightarrow\mathrm{CONCAT}(e_{s})\rightarrow\mathrm{FC}_{d}
\end{align*}
\item Generator architecture:
\begin{align*}
z\in\mathbb{R}^{d} & \rightarrow\mathrm{FC}_{4d}\rightarrow\mathrm{LeakyReLU}(0.1)\rightarrow\mathrm{FC}_{4096}\rightarrow\mathrm{LeakyReLU}(0.1)\\
 & \rightarrow\mathrm{ConvTrans}_{128,5,1,0,0}\rightarrow\mathrm{LeakyReLU}(0.1)\\
 & \rightarrow\mathrm{ConvTrans}_{64,5,1,0,0}\rightarrow\mathrm{LeakyReLU}(0.1)\\
 & \rightarrow\mathrm{ConvTrans}_{1,8,2,0,0}\rightarrow\mathrm{Sigmoid}
\end{align*}
\item Critic architecture:
\begin{align*}
x\in\mathbb{R}^{28\times28} & \rightarrow\mathrm{Conv}_{64,5,2,2}\rightarrow\mathrm{LeakyReLU}(0.1)\rightarrow\mathrm{Conv}_{128,5,2,2}\rightarrow\mathrm{LeakyReLU}(0.1)\\
 & \rightarrow\mathrm{Conv}_{256,5,2,2}\rightarrow\mathrm{LeakyReLU}(0.1)\rightarrow\mathrm{CONCAT}(e_{s})\rightarrow\mathrm{FC}_{2d}\\
 & \rightarrow\mathrm{LeakyReLU}(0.1)\rightarrow\mathrm{CONCAT}(e_{s})\rightarrow\mathrm{FC}_{1}
\end{align*}
\end{itemize}

\paragraph*{CelebA}

For CelebA, the latent dimension is $d=128$.
\begin{itemize}
\item Encoder architecture:
\begin{align*}
x\in\mathbb{R}^{64\times64\times3} & \rightarrow\mathrm{Conv}_{64,5,2,2}\rightarrow\mathrm{ReLU}\rightarrow\mathrm{Conv}_{128,5,2,2}\rightarrow\mathrm{InstanceNorm}\rightarrow\mathrm{ReLU}\\
 & \rightarrow\mathrm{Conv}_{256,5,2,2}\rightarrow\mathrm{InstanceNorm}\rightarrow\mathrm{ReLU}\\
 & \rightarrow\mathrm{Conv}_{512,5,2,2}\rightarrow\mathrm{InstanceNorm}\rightarrow\mathrm{ReLU}\\
 & \rightarrow\mathrm{CONCAT}(e_{s})\rightarrow\mathrm{FC}_{2d}\rightarrow\mathrm{ReLU}\rightarrow\mathrm{CONCAT}(e_{s})\rightarrow\mathrm{FC}_{d}
\end{align*}
\item Generator architecture:
\begin{align*}
z\in\mathbb{R}^{d} & \rightarrow\mathrm{FC}_{4d}\rightarrow\mathrm{ReLU}\rightarrow\mathrm{FC}_{8192}\rightarrow\mathrm{ReLU}\\
 & \rightarrow\mathrm{ConvTrans}_{256,5,2,2,1}\rightarrow\mathrm{ReLU}\\
 & \rightarrow\mathrm{ConvTrans}_{128,5,2,2,1}\rightarrow\mathrm{ReLU}\\
 & \rightarrow\mathrm{ConvTrans}_{64,5,2,2,1}\rightarrow\mathrm{ReLU}\\
 & \rightarrow\mathrm{ConvTrans}_{3,5,2,2,1}\rightarrow\mathrm{Tanh}
\end{align*}
\item Critic architecture:
\begin{align*}
x\in\mathbb{R}^{64\times64\times3} & \rightarrow\mathrm{Conv}_{64,5,2,2}\rightarrow\mathrm{ReLU}\\
 & \rightarrow\mathrm{Conv}_{128,5,2,2}\rightarrow\mathrm{InstanceNorm}\rightarrow\mathrm{ReLU}\\
 & \rightarrow\mathrm{Conv}_{256,5,2,2}\rightarrow\mathrm{InstanceNorm}\rightarrow\mathrm{ReLU}\\
 & \rightarrow\mathrm{Conv}_{512,5,2,2}\rightarrow\mathrm{InstanceNorm}\rightarrow\mathrm{ReLU}\\
 & \rightarrow\mathrm{Conv}_{2d,4,1,0}\rightarrow\mathrm{CONCAT}(e_{s})\rightarrow\mathrm{FC}_{d}\rightarrow\mathrm{ReLU}\\
 & \rightarrow\mathrm{CONCAT}(e_{s})\rightarrow\mathrm{FC}_{1}
\end{align*}
\end{itemize}

\subsection{Comparison metrics}

Proposed by \citet{salimans2016improved}, the inception score (IS)
uses a pre-trained Inception-v3 model to predict the class probabilities
for each generated image. These predictions are then summarized into
IS by the KL divergence as follows:
\[
\mathrm{IS}=\exp\left(\mathbb{E}_{x\sim P_{G(Z^{*})}}D_{KL}\left(p(y|x)\Vert p(y)\right)\right),
\]
where $p(y|x)$ is the predicted probabilities conditioning on the
generated images, and $p(y)$ is the corresponding marginal distribution.
Higher scores of IS are better, corresponding to a larger KL divergence
between the two distributions.

The Fr\'echet inception distances (FID) is proposed by \citet{heusel2017gans}
to improve IS by directly comparing the statistics of generated samples
to real samples. It is defined as the Fr\'echet distance between
two multivariate normal distributions:
\[
\mathrm{FID}=\Vert\mu_{r}-\mu_{G}\Vert^{2}+\text{Tr}\left(\Sigma_{r}+\Sigma_{G}-2(\Sigma_{r}\Sigma_{G})^{1/2}\right),
\]
where $X_{r}\sim N(\mu_{r},\Sigma_{r})$ and $X_{G}\sim N(\mu_{G},\Sigma_{G})$
are the 2048-dimensional activations of the Inception-v3 pool-3 layer
for real and generated samples, respectively. For FID, lower is better.

The reconstruction error is defined as
\[
\mathrm{RE}=\dfrac{1}{m}\sum_{i=1}^{m}\Vert\hat{X}_{i}-X_{i}\Vert_{2},
\]
where $\hat{X}_{i}$ is the reconstructed sample for $X_{i}$. The
reconstruction error is used to evaluate whether the model generates
meaningful latent codes and has the capacity to recover the original
information. Smaller reconstruction errors indicate a more meaningful
latent space that can be decoded into the original samples.

\subsection{Monitoring the training process}

During the training process, we have saved various metrics to monitor
the state of the model. Figure \ref{fig:losses} shows three types
of losses during the training of LWGAN on the CelebA data. The pre-$GQ$
critic loss stands for the term $f(G(Q(X)))-f(G(A_{s}Z_{0}))$ before
updating $G$ and $Q$ in each outer iteration, and post-$GQ$ critic
loss is the same quantity but after updating $G$ and $Q$. The reconstruction
error stands for the term $\left\Vert X-G(Q(X))\right\Vert $. The
various spikes in the reconstruction error plot are the results of
randomly picking one rank $s$ in each iteration by the design of
Algorithm 1. In cases that $s$ is small, the reconstruction error
would be large as explained by Corollary 1. However, we can find that
the lower bound of the reconstruction error steadily decreases, indicating
that for ranks larger than the intrinsic dimension, the reconstruction
quality is indeed improving. From Figure \ref{fig:losses} we can
also find that the critic losses quickly become stable after the first
few thousands of iterations, implying that our proposed computational
method in Section 3.3 is both stable and efficient.

\begin{figure}[h]
\begin{centering}
\includegraphics[width=0.99\textwidth]{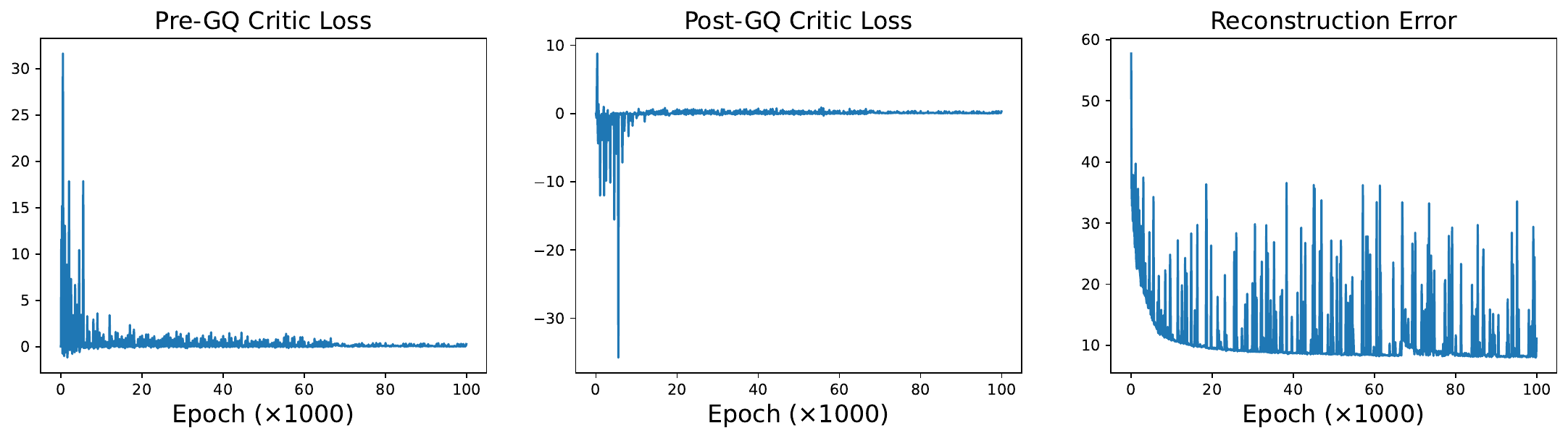}
\par\end{centering}
\caption{\protect\label{fig:losses}Monitoring the loss function values of
LWGAN during the training on the CelebA data.}
\end{figure}

\subsection{Uncertainty quantification for the estimated intrinsic dimensions}

For the toy examples in Section 5.1, we have conducted a bootstrap-type
experiment to quantify the uncertainty of the estimated intrinsic
dimensions. The experiment steps are as follows:
\begin{enumerate}
\item Given the dataset, train an LWGAN model with final neural network
parameters $\hat{\theta}$. Let $\hat{G}$ and $\hat{r}$ be the estimated
generator and intrinsic dimension, respectively.
\item Simulate new data points $\hat{X}_{i}=\hat{G}(A_{\hat{r}}Z_{0,i})$,
$i=1,\ldots,B$, where $Z_{0,i}\overset{iid}{\sim}N(0,I_{d})$.
\item Train a new LWGAN model on $\hat{X}_{1},\ldots,\hat{X}_{B}$, possibly
using $\hat{\theta}$ to warm start the training procedure. Let $\hat{r}_{1}^{\mathrm{boot}}$
be the estimated intrinsic dimension on this simulated dataset.
\item Repeat steps 2 and 3 for 100 rounds, and summarize the distribution
of $\hat{r}_{1}^{\mathrm{boot}},\ldots,\hat{r}_{100}^{\mathrm{boot}}$.
\end{enumerate}
Ideally, the distribution of the bootstrap estimates $\hat{r}_{1}^{\mathrm{boot}},\ldots,\hat{r}_{100}^{\mathrm{boot}}$
should be concentrated around the estimated intrinsic dimension $\hat{r}$.
Table \ref{tab:bootstrap} demonstrates the results on the three simulated
datasets, from which we can find that the bootstrap distribution is
indeed consistent with the estimates.

\begin{table}[h]
\caption{\protect\label{tab:bootstrap}Bootstrap distribution of the estimated
intrinsic dimensions for the toy examples.}

\bigskip{}

\centering{}%
\begin{tabular}{cccc}
\toprule 
Dataset & True $r$ & Estimated $\hat{r}$ & Bootstrap Distribution\tabularnewline
\midrule
\midrule 
Swiss roll & 1 & 1 & $\hat{r}_{i}^{\mathrm{boot}}=\begin{cases}
1, & 90\%\\
2, & 10\%
\end{cases}$\tabularnewline
\midrule 
S-curve & 2 & 2 & $\hat{r}_{i}^{\mathrm{boot}}=\begin{cases}
2, & 99\%\\
3, & 1\%
\end{cases}$\tabularnewline
\midrule 
Hyperplane & 4 & 4 & $\hat{r}_{i}^{\mathrm{boot}}=\begin{cases}
4, & 99\%\\
5, & 1\%
\end{cases}$\tabularnewline
\bottomrule
\end{tabular}
\end{table}

\end{document}